\lstdefinelanguage{clingo}{
  keywordstyle=[1]\usefont{OT1}{cmtt}{m}{n},%
  keywordstyle=[2]\textbf,%
  keywordstyle=[3]\usefont{OT1}{cmtt}{m}{n},
  alsoletter={\#,\&},%
  keywords=[1]{not,from,import,def,if,else,elif,return,while,break,and,or,for,in,del,and,class,with,print,as,is},%
  keywords=[2]{\#const,\#show,\#minimize,\#base,\#theory,\#count,\#external,\#program,\#script,\#end,\#heuristic,\#edge,\#project,\#show,\#sum},%
  keywords=[3]{&,&dom,&sum,&diff,&show},%
  morecomment=[l]{\#\ },%
  morecomment=[l]{\%\ },%
  morestring=[b]",%
  stringstyle={\textit},%
  commentstyle={\color{darkgray}}%
}
\newcommand{\sysfont}{\textit}
\newcommand{\anthem}{\sysfont{anthem}}
\newcommand{\aspic}{\sysfont{aspic}}
\newcommand{\asprin}{\sysfont{asprin}}
\newcommand{\clasp}{\sysfont{clasp}}
\newcommand{\clingcon}{\sysfont{clingcon}}
\newcommand{\clingo}{\sysfont{clingo}}
\newcommand{\dlv}{\sysfont{dlv}}
\newcommand{\jdlv}{\sysfont{jdlv}}
\newcommand{\dflat}{\sysfont{dflat}}
\newcommand{\dlvhex}{\sysfont{dlvhex}}
\newcommand{\acthex}{\sysfont{acthex}}
\newcommand{\embasp}{\sysfont{embasp}}
\newcommand{\gringo}{\sysfont{gringo}}
\newcommand{\iclingo}{\sysfont{iclingo}}
\newcommand{\idp}{\sysfont{idp}}
\newcommand{\lparse}{\sysfont{lparse}}
\newcommand{\minisat}{\sysfont{minisat}}
\newcommand{\oclingo}{\sysfont{oclingo}}
\newcommand{\rosoclingo}{\sysfont{rosoclingo}}
\newcommand{\wasp}{\sysfont{wasp}}
\newcommand{\zzz}{\sysfont{z3}}
\newcommand{\aspif}{\sysfont{aspif}}
\newcommand{\python}{Python}
\newcommand{\lua}{Lua}
\newcommand{\C}{C}
\newcommand{\cpp}{C++}
\newcommand{\java}{Java}
\newcommand{\naf}[1]{\ensuremath{{\sim}{#1}}}
\newcommand{\poslits}[1]{\ensuremath{#1^+}}
\newcommand{\neglits}[1]{\ensuremath{#1^-}}
\newcommand{\body}[1]{\ensuremath{B(#1)}} 
\newcommand{\pbody}[1]{\poslits{\body{#1}}} 
\newcommand{\nbody}[1]{\neglits{\body{#1}}} 
\newcommand{\head}[1]{\ensuremath{h(#1)}} 
\newcommand{\PRG}{\ensuremath{P}}
\newcommand{\QRG}{\ensuremath{Q}}
\newcommand{\RRG}{\ensuremath{R}}
\newcommand{\atom}[1]{\ensuremath{A(#1)}} 
\newcommand{\Head}[1]{\ensuremath{H(#1)}} 
\newcommand{\ground}[1]{\ensuremath{\mathit{grd}(#1)}}
\newcommand{\cground}[2]{\ensuremath{\mathit{grd}_{#2}(#1)}}
\newcommand{\dep}[1]{\ensuremath{G(#1)}}
\newcommand{\module}[1]{\ensuremath{\mathbb{#1}}}
\newcommand{\prog}[1]{\ensuremath{P(#1)}}
\newcommand{\inp}[1]{\ensuremath{I(#1)}}
\newcommand{\out}[1]{\ensuremath{O(#1)}}
\newcommand{\inst}[2]{\ensuremath{#1(#2)}}
\title{Multi-shot ASP solving with \textit{Clingo}}
\author[M. Gebser, R. Kaminski, B. Kaufmann, and T. Schaub]{%
  Martin Gebser
  \\
  University of Potsdam, Germany
  \and
  Roland Kaminski
  \\
  University of Potsdam, Germany
  \and
  Benjamin Kaufmann
  \\
  University of Potsdam, Germany
  \and
  Torsten Schaub\thanks{Also affiliated with the
                              School of Computing Science at
                              Simon Fraser University,
                              Burnaby, Canada,
                              and the
                              Institute for Integrated and Intelligent Systems
                              at
                              Griffith University,
                              Brisbane, Australia.}
  \\
  INRIA Rennes, France, and
  University of Potsdam, Germany
  }
\newtheorem{proposition}{Proposition}
\begin{document}

\maketitle

\begin{abstract}
We introduce a new flexible paradigm of grounding and solving in Answer Set Programming (ASP),
which we refer to as multi-shot ASP solving,
and present its implementation in the ASP system \clingo.

Multi-shot ASP solving features grounding and solving processes that deal with
continuously changing logic programs.
In doing so, they remain operative and accommodate changes in a seamless way.
For instance,
such processes allow for advanced forms of search, as in optimization or theory solving,
or interaction with an environment, as in robotics or query-answering.
Common to them is that the problem specification evolves during the reasoning process,
either because data or constraints are added, deleted, or replaced.
This evolutionary aspect adds another dimension to ASP since it brings about state changing operations.
We address this issue by providing an operational semantics that characterizes grounding and solving processes
in multi-shot ASP solving.
This characterization provides a semantic account of grounder and solver states along with the operations manipulating them.

The operative nature of multi-shot solving avoids redundancies in relaunching grounder and solver programs and benefits from the solver's
learning capacities.
\clingo{} accomplishes this by complementing ASP's declarative input language with control capacities.
On the declarative side, a new directive allows for structuring logic programs into named and parameterizable subprograms.
The grounding and integration of these subprograms into the solving process is completely modular and fully controllable from the procedural side.
To this end,
\clingo{} offers a new application programming interface that is conveniently accessible via scripting languages.
By strictly separating logic and control,
\clingo{} also abolishes the need for dedicated systems for incremental and reactive reasoning, like \iclingo{} and \oclingo, respectively,
and its flexibility goes well beyond the advanced yet still rigid solving processes
of the latter.

\medskip\noindent
{\em Under consideration for publication in Theory and Practice of Logic Programming (TPLP)}
\end{abstract}
%

\section{Introduction}\label{sec:introduction}

Standard Answer Set Programming (ASP; \cite{baral02a}) follows a one-shot process in computing stable models of logic programs.
This view is best reflected by the input/output behavior of monolithic ASP systems like \dlv~\cite{dlv03a} and (original) \clingo~\cite{gekakaosscsc11a}
that take a logic program and output its stable models.
Internally, however, both follow a fixed two-step process.
First, a grounder generates a (finite) propositional representation of the input program.
Then, a solver computes the stable models of the propositional program.
This rigid process stays unchanged when grounding and solving with separate systems.
In fact, up to series~3, \clingo{} was a mere combination of the grounder \gringo{} and the solver \clasp.
Although more elaborate reasoning processes are performed by the extended systems \iclingo~\cite{gekakaosscth08a} and \oclingo~\cite{gegrkasc11a} for incremental and reactive reasoning, respectively,
they also follow a pre-defined control loop evading any user control.
Beyond this, however,
there is substantial need for specifying flexible reasoning processes,
for instance, when it comes to
interactions with an environment (as in assisted living, robotics, or with users),
advanced search (as in multi-objective optimization, planning, theory solving, or heuristic search),
or recurrent  query answering (as in hardware analysis and testing or stream processing).
Common to all these advanced forms of reasoning is that the problem specification evolves during the respective reasoning processes,
either because data or constraints are added, deleted, or replaced.

For mastering such complex reasoning processes,
we propose the paradigm of \emph{multi-shot ASP solving}
in order to deal with continuously changing logic programs.
In contrast to the traditional single-shot approach, 
where an ASP system takes a logic program, computes its answer sets, and exits,
the idea is to consider evolving grounding and solving processes.
Such processes lead to operative ASP systems that possess an internal state that can be manipulated by certain operations.
Such operations allow for adding, grounding, and solving logic programs as well as
setting truth values of (external) atoms.
The latter does not only provide a simple means for incorporating external input but
also for enabling or disabling parts of the current logic program.
These functionalities allow for dealing with changing logic programs in a seamless way.
To capture multi-shot solving processes,
we introduce an operational semantics centered upon a formal definition of an ASP system state along with its (state changing) operations.
Such a state reflects the relevant information gathered in the system's grounder and solver components.
This includes 
(i) a collection of non-ground logic programs subject to grounding,
(ii) the ground logic programs currently held by the solver,
(iii) and a truth assignment of externally defined atoms.
Changing such a state brings about several challenges absent in the single-shot case,
among them, contextual grounding and logic program composition.

Given that the theoretical foundations of multi-shot solving are a means to an end,
we interleave their presentation with the corresponding features of ASP system \clingo.
This new generation of \clingo%
\footnote{Multi-shot solving was introduced with the \clingo~4 series by~\citeN{gekakasc14b}. 
  However, we describe its functionalities in the context of the current \clingo~5 series.
  The advance from series~4 to~5 only smoothed some multi-shot related interfaces but left the principal functionality unaffected.}
offers novel high-level constructs for realizing multi-shot ASP solving.
This is achieved within a single ASP grounding and solving process
that
avoids redundancies otherwise caused by relaunching grounder and solver programs and 
benefits from the learning capacities of modern ASP solvers.
To this end,
\clingo{} complements ASP's declarative input language by manifold control capacities.
The latter are provided by an imperative application programming interface (API) implemented in C.
Corresponding bindings for \python{} and \lua{} are available and can also be embedded into \clingo's input language 
(via the \lstinline{#script} directive).
On the declarative side, 
\clingo{} offers a new directive \lstinline{#program} that allows for structuring logic programs into named and parametrizable subprograms.
The grounding and integration of these subprograms into the solving process is completely modular and fully controllable from the procedural side.
For exercising control,
the latter benefits from a dedicated library furnished by \clingo's API, which does not only expose grounding and solving functions
but moreover allows for continuously assembling the solver's program.
This can be done in combination with externally controllable atoms that allow for enabling or disabling rules.
Such atoms are declared by the \lstinline{#external} directive.
Hence, by strictly separating logic and control,
\clingo{} abolishes the need for special-purpose systems for incremental and reactive reasoning, like \iclingo{} and \oclingo, respectively,
and its flexibility goes well beyond the advanced yet still rigid grounding and solving processes
of such systems.
In fact, \clingo's multi-shot solving capabilities rather enable users to engineer novel forms of reasoning,
as we demonstrate by four case studies.

The rest of the paper is organized as follows.
Section~\ref{sec:background} provides a brief account of formal preliminaries.
Section~\ref{sec:glance} gives an informal overview on the new features of \clingo{} in order to pave the way for their formal underpinnings presented
in Section~\ref{sec:approach}.
There, we lay the formal foundations of multi-shot solving and present its aforementioned operational semantics.
In Section~\ref{sec:practice}, we illustrate the power of multi-shot ASP solving in several use cases and highlight some features of interest.
We use \python{} throughout the paper to illustrate the multi-shot functionalities of \clingo's API.
Further API-related aspects are described in Section~\ref{sec:api}.
Section~\ref{sec:experiments} gives an empirical analysis of some selected features of multi-shot solving with \clingo.
Finally, we relate our approach to the literature in Section~\ref{sec:related:work} before we conclude in Section~\ref{sec:discussion}.


\section{Formal preliminaries}\label{sec:background}

A (normal\footnote{For the sake of simplicity, we confine our formal elaboration to normal logic programs.
\par Multi-shot solving with \clingo{} also works with disjunctive logic programs.}) \emph{rule}~$r$ is an expression of the form
\[
a_0\leftarrow a_1,\dots,a_m,\naf{a_{m+1}},\dots,\naf{a_n}
,
\]
where $a_i$, for $0\leq m\leq n$,
is an \emph{atom} of the form $p(t_1,\dots,t_k)$, 
$p$ is a predicate symbol of arity $k$, also written as $p/k$,
and
$t_1,\dots,t_k$ are terms,
built from constants, variables, and functions.
Letting
$\head{r}=a_0$,
$\body{r}=\{a_1,\dots,a_m,\naf{a_{m+1}},\dots,\naf{a_n}\}$,
$\pbody{r}=\{a_1,\dots,a_m\}$, and
$\nbody{r}=\{a_{m+1},\dots,a_n\}$,
we also denote~$r$ by
\(
\head{r}\leftarrow\body{r} 
\).
A rule is called fact, whenever $\body{r}=\emptyset$.
A (normal) \emph{logic program}~\PRG{} is a set (or list) of rules
(depending on whether the order of rules matters or not).
We write
$\Head{\PRG}=\{\head{r}\mid r\in\PRG\}$
and
$\atom{\PRG}=\Head{\PRG}\cup\bigcup_{r\in \PRG}(\pbody{r}\cup\nbody{r})$
to denote the set of all head atoms and atoms occurring in~\PRG, respectively.
A term, atom, rule, or program is \emph{ground} if it does not contain variables.

We denote the set of all ground terms constructible from
constants (including all integers) and function symbols by $\mathcal{T}\!$,
and let $\mathcal{C}$ stand for the subset of symbolic (i.e.~non-integer) constants.
The \emph{ground instance} of~\PRG, denoted by $\ground{\PRG}$,
is the set of all ground rules constructible from rules $r\in \PRG$
by substituting every variable in~$r$ with some element of $\mathcal{T}$.
We associate~\PRG{} with its
\emph{positive atom dependency graph}
\[
\dep{\PRG}=
(\atom{\ground{\PRG}},
 \{(a_0,a)\mid r\in\ground{\PRG},\head{r}=a_0,a\in\pbody{r}\})
\]
and call a maximal non-empty subset of $\atom{\ground{\PRG}}$
inducing a strongly connected subgraph\footnote{That is, each pair of atoms in the subgraph is connected by a path.}
of~$\dep{\PRG}$ a strongly connected component of~\PRG.

A set~$X$ of ground atoms is a \emph{model} of~\PRG, if
$\head{r}\in X$, $\pbody{r}\nsubseteq X$, or $\nbody{r}\cap X\neq\emptyset$
holds for every $r\in\ground{\PRG}$.
Moreover, 
$X$ is a \emph{stable model} of~\PRG{}~\cite{gellif88b}, 
if $X$ is a $\subseteq$-minimal model of
$\{\head{r}\leftarrow\pbody{r} \mid r\in\ground{\PRG},\linebreak[1]\nbody{r}\cap X=\nolinebreak\emptyset\}$.

Following~\citeN{oikjan06a},
a \emph{module}~$\module{\PRG}$ is a triple
\(
(\PRG,I,O)
\)
consisting of
a ground logic program~\PRG\
along with sets~$I$ and~$O$
of ground \emph{input} and \emph{output}
atoms such that
\begin{enumerate}
\item $I\cap\nolinebreak O=\nolinebreak\emptyset$,
\item $\atom{\PRG}\subseteq I\cup O$, and
\item $\Head{\PRG}\subseteq O$.
\end{enumerate}
We also denote the constituents of $\module{\PRG}=(\PRG,I,O)$ by
$\prog{\module{\PRG}}=\nolinebreak\PRG$,
$\inp{\module{\PRG}}=\nolinebreak I$, and 
$\out{\module{\PRG}}=\nolinebreak O$.
A set~$X$ of ground atoms is a \emph{stable model} of a module~$\mathbb{\PRG}$,
if $X$ is a (standard) stable model of
\(
\prog{\module{\PRG}}\cup\{{a\leftarrow} \mid a\in \inp{\module{\PRG}}\cap X\}
\).

Two modules~$\module{\PRG}_1$ and~$\module{\PRG}_2$ are
\emph{compositional}, if
\begin{enumerate}
\item $\out{\module{\PRG}_1}\cap\out{\module{\PRG}_2}=\emptyset$
and
\item $\out{\module{\PRG}_1}\cap C=\emptyset$ or $\out{\module{\PRG}_2}\cap C=\emptyset$
\\for every strongly connected component~$C$ of $\prog{\module{\PRG}_1}\cup\prog{\module{\PRG}_2}$.
\end{enumerate}
In other words,
all rules defining an atom must belong to the same module. 
And any positive recursion is within modules;
no positive recursion is allowed among modules.

Provided that $\module{\PRG}_1$ and~$\module{\PRG}_2$
are compositional, their \emph{join} is defined as the module
\[
\module{\PRG}_1\sqcup\module{\PRG}_2
=
(
\prog{\module{\PRG}_1}\cup\prog{\module{\PRG}_2},\linebreak[1]
(\inp{\module{\PRG}_1}\setminus\out{\module{\PRG}_2})\cup
(\inp{\module{\PRG}_2}\setminus\out{\module{\PRG}_1}),\linebreak[1]
\out{\module{\PRG}_1}\cup\out{\module{\PRG}_2}
)
\ .
\]
The module theorem~\cite{oikjan06a} shows that a set~$X$ of ground
atoms is a stable model of $\module{\PRG}_1\sqcup\module{\PRG}_2$ iff
$X=X_1\cup X_2$ for stable models~$X_1$ and~$X_2$
of~$\module{\PRG}_1$ and~$\module{\PRG}_2$, respectively,
such that
$X_1\cap(\inp{\module{\PRG}_2}\cup\out{\module{\PRG}_2})=
 X_2\cap(\inp{\module{\PRG}_1}\cup\out{\module{\PRG}_1})$.%
\footnote{Note that the module theorem is strictly stronger than the splitting set theorem \cite{liftur94a}.
  For instance, there is no non-trivial splitting set of $\module{\PRG}_1\sqcup\module{\PRG}_2$,
  since neither $\{a\}$ nor $\{b\}$ is one.}
For example,
the modules 
\(
\module{\PRG}_1
=
(\{a\leftarrow\naf{c};\ c\leftarrow\naf{b}\},\{b\},\{a,c\})
\)
and
\(
\module{\PRG}_2
=
(\{b\leftarrow a\},\{a\},\{b\})
\)
are compositional,
and combining their stable models,
$\{a,b\}$ and $\{c\}$ for $\module{\PRG}_1$ as well as
$\{a,b\}$ and~$\emptyset$ for $\module{\PRG}_2$,
yields the stable models $\{a,b\}$ and $\{c\}$ of
\(
\module{\PRG}_1\sqcup\module{\PRG}_2
=
(\prog{\module{\PRG}_1}\cup\prog{\module{\PRG}_2},
 \emptyset,
 \{a,b,c\})
\).
Unlike that,
\(
\module{\PRG}_1'
=
(\{a\leftarrow b;\ c\leftarrow\naf{a}\},\{b\},\{a,c\})
\)
and 
$\module{\PRG}_2$
are not compositional
because the strongly connected component $\{a,b\}$
of $\prog{\module{\PRG}_1'}\cup\prog{\module{\PRG}_2}$
includes $a\in\out{\module{\PRG}_1'}$ and $b\in\out{\module{\PRG}_2}$.
Moreover, $\{a,b\}$ is a stable model of $\module{\PRG}_1'$ and 
$\module{\PRG}_2$, but not of $\prog{\module{\PRG}_1'}\cup\prog{\module{\PRG}_2}$.

An assignment $v$ over a set $A$ of ground atoms is a function $v: A\rightarrow\{t,f,u\}$
whose range consists of truth values, standing for true, false, and undefined.
Given an assignment $v$, we define the sets
$V^x=\{a\in A\mid v(a)=x\}$ for $x\in\{t,f,u\}$.
In what follows,
we represent partial assignments like $v$ either by $(V^t,V^f)$ or $(V^t,V^u)$
by leaving the respective variables with default values implicit.

Finally,
we use typewriter font and symbols \lstinline{:-} and \lstinline{not} instead of $\leftarrow$ and $\sim$, respectively,
whenever we deal with source code accepted by \clingo.
In such a case, we also make use of extended language constructs like integrity or cardinality constraints,
all of which can be reduced to normal logic programs, as detailed in the literature (cf.~\cite{siniso02a}).


\section{Multi-shot solving with \clingo{} at a glance}
\label{sec:glance}

Let us begin with an informal overview of the central features and corresponding language constructs of \clingo's multi-shot solving capacities.

A key feature, distinguishing \clingo{} from its predecessors,
is the possibility to structure (non-ground) input rules into subprograms.
To this end,
a program can be partitioned into several subprograms by means of the directive \lstinline{#program};
it comes with a name and an optional list of parameters.
Once given in the input,
the directive gathers all rules up to the next such directive (or the end of file)
within a subprogram identified by the supplied name and parameter list.
As an example,
two subprograms \lstinline{base} and \lstinline{acid(k)} can be specified as follows:
\begin{lstlisting}[caption={Logic program with \texttt{\#program} declarations},label={lst:example:program},language=clingo]
a(1).
#program acid(k).
b(k).
c(X,k) :- a(X).
#program base.
a(2).
\end{lstlisting}
Note that \lstinline{base} is a dedicated subprogram (with an empty parameter list):
in addition to the rules in its scope,
it gathers all rules not preceded by any \lstinline{#program} directive.
Hence, in the above example, the \lstinline{base} subprogram includes the facts \lstinline{a(1)} and \lstinline{a(2)},
although, only the latter is in the actual scope of the directive in Line~5.
Without further control instructions (see below),
\clingo{} grounds and solves the \lstinline{base} subprogram only,
essentially, yielding the standard behavior of ASP systems.
The processing of other subprograms such as \lstinline{acid(k)}
is subject to explicitly given control instructions.

For such customized control over grounding and solving,
a \lstinline{main} routine
(taking a control object representing the state of \clingo{} as argument, here \lstinline{prg})
can be supplied.
For illustration, let us consider two \python{} \lstinline{main} routines:%
\footnote{The \lstinline{ground} routine takes a list of pairs as argument.
  Each such pair consists of a subprogram name (e.g.\ \lstinline{base} or \lstinline{acid}) and a list of actual parameters (e.g.\ \lstinline{[]} or \lstinline{[42]}).}
\\
\hspace*{2\parindent}%
\begin{minipage}{0.5\linewidth}
\begin{lstlisting}[firstnumber=7,language=clingo]
#script(python)
def main(prg):
  prg.ground([("base",[])])
  prg.solve()
#end.
\end{lstlisting}  
\end{minipage}
\begin{minipage}{0.5\linewidth}
\begin{lstlisting}[firstnumber=7,language=clingo]
#script(python)
def main(prg):
  prg.ground([("acid",[42])])
  prg.solve()
#end.
\end{lstlisting}  
\end{minipage}
\\
While the control program on the left matches the default behavior of \clingo,
the one on the right ignores all rules in the \lstinline{base} program but rather
contains a \lstinline{ground} instruction for \lstinline{acid(k)} in Line~8,
where the parameter~\lstinline{k} is to be instantiated with the term \lstinline{42}.
Accordingly, the schematic fact \lstinline{b(k)} is turned into \lstinline{b(42)},
no ground rule is obtained from `\lstinline{c(X,k) :- a(X)}' due to lacking instances of \lstinline{a(X)},
and the \lstinline{solve} command in Line~10 yields a stable model consisting of
\lstinline{b(42)} only.
Note that \lstinline{ground} instructions apply to the subprograms
given as arguments,
while \lstinline{solve} triggers reasoning w.r.t.\ all accumulated ground rules.

In order to accomplish more elaborate reasoning processes,
like those of \iclingo{} and \oclingo{} or other customized ones,
it is indispensable to activate or deactivate ground rules on demand.
For instance, former initial or goal state conditions need to be
relaxed or completely replaced when modifying a planning problem, e.g.,
by extending its horizon.
While the predecessors of \clingo{} relied on the \lstinline{#volatile} directive
to provide a rigid mechanism for the expiration of transient rules,
\clingo{} captures the respective functionalities and customizations
thereof in terms of the \lstinline{#external} directive.
The latter goes back to \lparse~\cite{lparseManual} and was also
supported by \clingo's predecessors to exempt (input) atoms
from simplifications (and fixing them to false).
As detailed in the following,
the \lstinline{#external} directive of \clingo{} provides a generalization
that, in particular, allows for a flexible handling of yet undefined atoms.

For continuously assembling ground rules evolving at different stages of a
reasoning process, \lstinline{#external} directives declare atoms that may
still be defined by rules added later on.
In terms of module theory,
such atoms correspond to inputs, which (unlike undefined output atoms) must not be simplified.
For declaring input atoms,
\clingo{} supports schematic \lstinline{#external} directives that are instantiated along with
the rules of their respective subprograms.
To this end, a directive like
\begin{lstlisting}[numbers=none,language=clingo]
#external p(X,Y) : q(X,Z), r(Z,Y).
\end{lstlisting}
is treated similar to a rule
`\lstinline{p(X,Y) :- q(X,Z), r(Z,Y)}'
during grounding.
However, the head atoms of the resulting ground instances are merely collected as inputs,
whereas the ground rules as such are discarded.

Once grounded, the truth value of external atoms can be changed via the \clingo{} API
(until the atoms becomes defined by corresponding rules).
By default, the initial truth value of external atoms is set to false.
For example, with \clingo's \python{} API,
\lstinline{assign_external(self,p(a,b),True)}%
\footnote{%
In order to construct atoms, symbolic terms, or function terms, respectively, the \clingo{} API function \lstinline{Function} has to be used.
Hence, the expression \lstinline{p(a,b)} actually stands for \lstinline{Function("p", [Function("a"), Function("b")])}.}
can be used to set the truth value of the external atom \lstinline{p(a,b)} to true.
Among others,
this can be used to activate and deactivate rules in logic programs.
For instance,
the integrity constraint `\lstinline{:- q(a,c), r(c,b), p(a,b)}' is ineffective whenever \lstinline{p(a,b)} is false.


\section{Multi-shot solving}
\label{sec:approach}

Having set the practical stage in the previous section,
let us now turn to posing the formal foundations of multi-shot ASP solving.
We begin with a characterization of grounding subprograms with external directives
in the context of previously grounded subprograms.
This provides us with a formal account of the interplay of \lstinline{ground} routines with
\lstinline{#program} and \lstinline{#external} directives.
Next, we show how module theory can be used for characterizing the composition of ground subprograms during multi-shot solving.
This gives us a precise idea on the successive logic programs contained in the ASP solver at each invocation of \lstinline{solve}.
Finally,
all this culminates in an operational semantics for multi-shot solving in terms of state-changing operations.

The concepts introduced in this section are mainly illustrated by succinct, technical examples.
More illustration is provided in the next section discussing several use cases in detail.

\subsection{Parameterizable subprograms}
\label{sec:programs:parametrizable}

A \emph{program declaration} is of form
\begin{equation}\label{eq:program:declaration}
  \texttt{\#program }n(p_1,\dots,p_k)
\end{equation}
where $n,p_1,\dots,p_k$ are symbolic constants.
We call $n$ the name of the declaration and $p_1,\dots,p_k$ its parameters.
For simplicity,
we suppose that different occurrences of program declarations with the same name also share the same parameters
(although this is not required by \clingo).
In this way, each name is associated with a unique parameter specification.

The \emph{scope} of a program declaration in a list of rules and declarations
consists of the set of all rules and non-program declarations 
following the directive up to the next program declaration or the end of the list.%
\footnote{That is, the end of file in practice.}
In Listing~\ref{lst:example:program},
the scope of the declaration in Line~2 consists of \lstinline{b(k)} and `\lstinline{c(X,k) :- a(X)}',
while that in Line~5 contains \lstinline{a(2)}.
Given a list $\RRG$ of (non-ground) rules and declarations along with a non-integer constant $n$,
we define $\RRG(n)$ as the set of all (non-ground) rules and (non-program) declarations
in the scope of all occurrences of program declarations with name $n$.
We often refer to $\RRG(n)$ as a \emph{subprogram} of $\RRG$.
All rules and non-program declarations outside the scope of any (explicit) program declaration
are thought of being implicitly preceded by a `\lstinline{#program base}' declaration.
Hence,
if $R$ consists of Line~1--6 above,
we get\footnote{We drop the typewriter font, whenever our emphasis shifts to a more formal context.}
\(
\RRG(\texttt{base})=\{a(1)\leftarrow{},a(2)\leftarrow{}\}
\)
and
\(
\RRG(\texttt{acid})=\{b(k)\leftarrow{}, c(X,k)\leftarrow a(X)\}
\).
Each such list $\RRG$ induces a collection $(\RRG(c))_{c\in\mathcal{C}}$ of (non-disjoint) subprograms (most of which are empty).
For example, all subprograms obtained from Line~1--6 are empty,
except for \lstinline{base} and \lstinline{acid}.

Given a name $n$ with associated parameters $p_1,\dots,p_k$,
the instantiation of subprogram $\RRG(n)$ with terms $t_1,\dots,t_k$
results in the set $\RRG(n)[p_1/t_1,\dots,p_k/t_k]$,
obtained by replacing in $\RRG(n)$ each occurrence of $p_i$ by $t_i$ for $1\leq i\leq k$.%
\footnote{\clingo{} uses a more general instantiation process involving unification and arithmetic evaluation; see~\cite{PotasscoUserGuide} for details.}
For instance, $\RRG(\texttt{acid})[\mathtt{k}/\mathtt{42}]$ consists of
\lstinline{b(42)} and `\lstinline{c(X,42) :- a(X)}'.

\subsection{Contextual grounding}\label{sec:contextual:grounding}
\label{sec:grounding:contextual}

The definition of a program's ground instance \ground{\PRG} depends on \PRG{} and its underlying set of terms.
For instance,
grounding accordingly program 
\(
\RRG(\texttt{base})\cup\RRG(\texttt{acid})[k/42]
\)
yields
\begin{align}\label{ex:grounding}
\{a(1)\leftarrow{},a(2)\leftarrow{}\}
\cup
\{b(42)\leftarrow{}\}
\cup
\left\{
\begin{array}{rcl}
   c(1,42)&\leftarrow& a(1),\\
   c(2,42)&\leftarrow& a(2), \\
  c(42,42)&\leftarrow& a(42)
\end{array}
\right\}
\end{align}
In practice,%
\footnote{In one-shot grounding, a program is partitioned via the strongly connected components of its dependency graph.}
however,
rules are grounded relative to a set of atoms, which we refer to as an \emph{atom base}.
In our example, this avoids the generation of the irrelevant rule `$c(42,42)\leftarrow a(42)$'.
To see this, note that the relevance of instances of `$c(X,42)\leftarrow a(X)$' depends upon the available ground atoms of predicate $a/1$.
Now, grounding first \RRG(\texttt{base}) establishes --- in addition to \ground{\RRG(\texttt{base})} --- the atom base
\(
\{a(1),a(2)\}
\).
Grounding then $\RRG(\texttt{acid})[k/42]$ relative to this atom base,
yields
\begin{align}\label{ex:contextual:grounding}
\{b(42)\leftarrow{}\}
\cup
\left\{
\begin{array}{rcl}
   c(1,42)&\leftarrow& a(1),\\
   c(2,42)&\leftarrow& a(2)
\end{array}
\right\}
\end{align}
because only rule instances are created if their positive body literals 
either belong to the atom base 
or are derivable through other rules instances.%
\footnote{This is a simplification of semi-naive database evaluation~\cite{abhuvi95a}, used in ASP grounding components~\cite{kalepesc16a}.
  Notably, this technique allows for dealing with recursive function symbols
  and guarantees termination for a wide class of programs, \PRG,
  even though their ground instantiation \ground{\PRG} is infinite.}
Hence, rule $c(42,42)\leftarrow a(42)$ is dropped.
This is made precise in view of our purpose in the following definition.

Given a set \RRG\ of (non-ground) rules and two sets $C,D$ of ground atoms,
we define an instantiation of \RRG\ relative to atom base $C$ as a ground program \cground{\RRG}{C} over atom base $D$
subject to the following conditions:
\begin{align}
  \label{grd:C}
  D & = C \cup\Head{\cground{\RRG}{C}}
  \\
  \label{grd:G}
  \cground{\RRG}{C}&{} \subseteq
  \{
  \head{r}\leftarrow\pbody{r}\cup \{\naf{a} \mid
  {} \begin{array}[t]{@{}r@{}l@{}}
  a\in{}& \nbody{r}\cap D\}
  \mid
  {} \\
  r\in{}& \ground{\RRG},
  \pbody{r}\cup\{\head{r}\}\subseteq D
  \}
  \end{array}
  \\
  \label{grd:EQ}
  \cground{\RRG}{C}&{}\cup\QRG
  \text{ and }
  \ground{\RRG}\cup\QRG
  \text{ have the same stable models} 
\end{align}
where
$Q=\{{\{a\}\leftarrow}{} \mid a\in C\setminus\Head{\ground{\RRG}}\}$.%
\footnote{A \emph{choice rule}~\cite{siniso02a} of the form $\{a\}\leftarrow{}$ corresponds to
  (normal) rules $a\leftarrow \naf{a'}$ and $a'\leftarrow \naf{a}$, where $a'$ is a fresh atom.}

Atom base $D$ gives the extension of $C$ obtained by grounding $R$ relative to $C$.
To this end,
Condition~\eqref{grd:C} limits~$D$ to atoms either belonging to $C$ or emerging as heads of rules in $\cground{\RRG}{C}$,
while Condition~\eqref{grd:G} projects \ground{P} to $D$ by reducing its rules to those parts possibly relevant to stable models, 
as expressed in~\eqref{grd:EQ}.
The exact scope of the obtained atom base $D$ as well as \cground{\RRG}{C} are left open
to account for potential simplifications during grounding.%
\footnote{%
  In fact, the instantiation process of \clingo{} iteratively extends an atom base by heads of rules whose positive body
  atoms are contained in it.
  Moreover, the finite instantiation of a program like $\{a(9);\ b(1);\ b(X{+}1) \leftarrow b(X),\naf{a(X)}\}$
  relies on the evaluation of $\naf{a(X)}$ during grounding, allowing \clingo{} to stop
  the successive generation of rule instances at $b(10) \leftarrow b(9),\naf{a(9)}$.}
The correctness of the specific scope is warranted by Condition~\eqref{grd:EQ}

We capture the composition of ground programs below in Section~\ref{sec:programs:composition} in terms of modules.
To this end,
note that the choices in $Q$ mimic the role of input atoms of modules.
In fact, Condition~\eqref{grd:EQ} is equivalent to requiring that the modules
\(
(\cground{\RRG}{C},\linebreak[1]C\setminus\Head{\ground{\RRG}},\linebreak[1]\Head{\ground{\RRG}})
\)
and
\(
(\ground{\RRG},\linebreak[1]C\setminus\Head{\ground{\RRG}},\linebreak[1]\Head{\ground{\RRG}})
\)
have the same stable models.

Resuming our example,
we see that
\(
\cground{\RRG(\texttt{acid})[k/42]}{\{a(1),a(2)\}}
\)
corresponds to the rules in~\eqref{ex:contextual:grounding}.
Together with 
\(
\cground{\RRG(\texttt{base})}{\emptyset}
\),
the resulting program is equivalent to
\(
\ground{\RRG(\texttt{base})\cup\RRG(\texttt{acid})[k/42]}
\),
viz.\ the rules in~\eqref{ex:grounding}.

For further illustration, consider
\(
R= \{ a(X) \leftarrow f(X), e(X);\ b(X) \leftarrow f(X), \naf{e(X)} \}
\)
along with
\(
C=\{f(1),f(2),e(1)\}
\).
Focusing on relevant (parts of) rule instances relative to~$C$ leads to
\[
\cground{R}{C}
=
\left\{
  \begin{array}{ll}
    a(1) \leftarrow f(1), e(1) & b(1) \leftarrow f(1), \naf{e(1)}\\
                               & b(2) \leftarrow f(2)
  \end{array}
\right\}
\]
over $D=C\cup\{a(1),b(1),b(2)\}$.
In particular, note that an inapplicable rule instance including $e(2)$ is dropped,
while $\naf{e(2)}$ is simplified away to thus obtain the last of the above ground rules.

Although $\cground{\RRG}{C}$ reflects a potential simplification of the full ground program $\ground{\RRG}$,
both can likewise be augmented with additional rules.
Moreover, the restriction of the resulting atom base preserves equivalence.
The next result makes this precise in terms of module theory.
%
\begin{proposition}
Let \RRG\ be a set of (non-ground) rules,
let $\cground{\RRG}{C}$ be an instantiation of \RRG\ relative to
a set~$C$ of ground atoms, and
let \module{\PRG} be a module.

If \module{\PRG} and
\(
(\ground{\RRG},\linebreak[1]C\setminus\Head{\ground{\RRG}},\linebreak[1]\Head{\ground{\RRG}})
\)
are compositional, then \module{\PRG} and
\(
(\cground{\RRG}{C},\linebreak[1]C\setminus\Head{\ground{\RRG}},\linebreak[1]\Head{\ground{\RRG}})
\)
are compositional as well,
where
\(
 \module{\PRG}\sqcup
 (\ground{\RRG},\linebreak[1]C\setminus\Head{\ground{\RRG}},\linebreak[1]\Head{\ground{\RRG}})
\)
and
\(
 \module{\PRG}\sqcup
 (\cground{\RRG}{C},\linebreak[1]C\setminus\Head{\ground{\RRG}},\linebreak[1]\Head{\ground{\RRG}})
\)
have the same stable models.
\end{proposition}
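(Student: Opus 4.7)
The plan is to decompose the claim into three pieces: first, to observe that the candidate triple
$(\cground{\RRG}{C}, C\setminus\Head{\ground{\RRG}}, \Head{\ground{\RRG}})$ is indeed a module; second, to verify the two compositionality conditions with $\module{\PRG}$; and third, to derive the stable-model equivalence from Condition~\eqref{grd:EQ} together with the module theorem.

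For the module property, I would first argue, using Condition~\eqref{grd:G}, that every rule in $\cground{\RRG}{C}$ is obtained from some rule in $\ground{\RRG}$ by possibly pruning negative body literals, so $\Head{\cground{\RRG}{C}} \subseteq \Head{\ground{\RRG}}$ and the positive and (remaining) negative body atoms of $\cground{\RRG}{C}$ are contained in $D = C \cup \Head{\cground{\RRG}{C}} \subseteq C \cup \Head{\ground{\RRG}}$. The three module conditions then follow from the corresponding ones for $(\ground{\RRG}, C\setminus\Head{\ground{\RRG}}, \Head{\ground{\RRG}})$, which is a module by hypothesis.

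For compositionality, the disjointness of output sets is immediate, since $\cground{\RRG}{C}$ and $\ground{\RRG}$ yield the same output set $\Head{\ground{\RRG}}$. For the strongly-connected-component condition, I would note that, again by~\eqref{grd:G}, the positive bodies of rules in $\cground{\RRG}{C}$ coincide with those of the corresponding rules in $\ground{\RRG}$; hence the edge set of the positive atom dependency graph of $\prog{\module{\PRG}}\cup\cground{\RRG}{C}$ is a subset of the edge set of the positive atom dependency graph of $\prog{\module{\PRG}}\cup\ground{\RRG}$. Consequently every strongly connected component of the former is contained in some strongly connected component of the latter, and the hypothesis that no component of $\prog{\module{\PRG}}\cup\ground{\RRG}$ mixes output atoms of the two modules immediately transfers.

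For the stable-model equivalence, I would invoke the observation already made in the excerpt that Condition~\eqref{grd:EQ} is equivalent to saying the modules $(\cground{\RRG}{C}, C\setminus\Head{\ground{\RRG}}, \Head{\ground{\RRG}})$ and $(\ground{\RRG}, C\setminus\Head{\ground{\RRG}}, \Head{\ground{\RRG}})$ have the same stable models. Since these two modules also share input and output sets, the module theorem applied to each of the two joins with $\module{\PRG}$ yields the same decomposition conditions $X = X_1 \cup X_2$ with identical agreement constraints on $\inp{\module{\PRG}}\cup\out{\module{\PRG}}$, so both joins admit exactly the same stable models. The main obstacle I anticipate is making sure the compositionality argument in the second step is watertight — in particular, being precise that passing from $\ground{\RRG}$ to $\cground{\RRG}{C}$ only removes edges and vertices from the positive dependency graph, never adds them, so no new cross-module recursion can appear.
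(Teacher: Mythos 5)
Your proposal is correct and follows essentially the same route as the paper's proof: compositionality is obtained from the observation that, by Condition~\eqref{grd:G}, the positive dependency graph of $\cground{\RRG}{C}$ (and hence of its union with $\prog{\module{\PRG}}$) is a subgraph of that of $\ground{\RRG}$, and the stable-model equivalence of the two joins follows from Condition~\eqref{grd:EQ} together with the module theorem. The only difference is that you spell out the intermediate steps (well-definedness of the module and preservation of positive bodies) that the paper leaves implicit, which is a matter of detail rather than of method.
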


\begin{proof}
Assume that
\module{\PRG} and
\(
(\ground{\RRG},\linebreak[1]C\setminus\Head{\ground{\RRG}},\linebreak[1]\Head{\ground{\RRG}})
\)
are compositional.
By the construction of $\cground{\RRG}{C}$ in~\eqref{grd:G},
$\dep{\cground{\RRG}{C}}$ is a subgraph of $\dep{\ground{\RRG}}$,
which implies that \module{\PRG} and
\(
(\cground{\RRG}{C},\linebreak[1]C\setminus\Head{\ground{\RRG}},\linebreak[1]\Head{\ground{\RRG}})
\)
are compositional as well.
As
\(
(\ground{\RRG},\linebreak[1]C\setminus\Head{\ground{\RRG}},\linebreak[1]\Head{\ground{\RRG}})
\)
and
\(
(\cground{\RRG}{C},\linebreak[1]C\setminus\Head{\ground{\RRG}},\linebreak[1]\Head{\ground{\RRG}})
\)
are equivalent by the condition in~\eqref{grd:EQ},
the module theorem~\cite{oikjan06a} yields that
\(
 \module{\PRG}\sqcup
 (\ground{\RRG},\linebreak[1]C\setminus\Head{\ground{\RRG}},\linebreak[1]\Head{\ground{\RRG}})
\)
and
\(
 \module{\PRG}\sqcup
 (\cground{\RRG}{C},\linebreak[1]C\setminus\Head{\ground{\RRG}},\linebreak[1]\Head{\ground{\RRG}})
\)
have the same stable models.
\end{proof}

More illustration of contextual grounding is given throughout the following sections.

\subsection{Extensible logic programs}
\label{sec:programs:extensible}

We define a (non-ground) logic program $\RRG$ as \emph{extensible}, if it contains
some (non-ground) \emph{external declaration} of the form
\begin{equation}\label{eq:external:declaration}
  \text{\lstinline{#external}}~a : B
\end{equation}
where $a$ is an atom and $B$ a rule body.

As an example,
consider the extensible program in Listing~\ref{lst:example:external}.
\begin{lstlisting}[caption={Extensible logic program},label={lst:example:external},language=clingo]
#external e(X) : f(X), X < 2.
f(1..2).
a(X) :- f(X), e(X).
b(X) :- f(X), not e(X).
\end{lstlisting}

For grounding an external declaration as in \eqref{eq:external:declaration}, we treat it as a rule
\(
a \leftarrow B,\varepsilon
\)
where $\varepsilon$ is a distinguished ground atom marking rules from \lstinline{#external} declarations.
Formally,
given an extensible program $\RRG$, we define the collection $\QRG$ of rules corresponding to \lstinline{#external} declarations as follows.
\begin{align*}
  \QRG & {} = \{a\leftarrow B,\varepsilon\mid({\texttt{\#external}}~a : B)\in \RRG\}\\
  \RRG'& {} = \{a\leftarrow B\in\RRG\}
\end{align*}
With these, the ground instantiation 
of an extensible logic program $\RRG$ relative to an atom base $C$ is defined as a ground logic program~$\PRG$ associated with a set~$E$ of ground atoms,
where
\begin{align}
\label{grd:P}  \PRG & {} = \{r\in\cground{\RRG'\cup \QRG}{C\cup\{\varepsilon\}} \mid \varepsilon\notin\body{r}\}\\
\label{grd:E}  E & {} = \{\head{r}\mid r\in\cground{\RRG'\cup \QRG}{C\cup\{\varepsilon\}}, \varepsilon\in\body{r}\}
\end{align}
For simplicity, we refer to $\PRG$ and $E$ as a \emph{logic program with externals},
and we drop the reference to $\RRG$ and $C$ whenever clear from the context.
Note that after grounding, the special atom $\varepsilon$ appears neither in $\PRG$ nor $E$.
In fact, $\PRG$ is a logic program over $C\cup E\cup\Head{\PRG}$.

Given the set $E$ of externals,
$\PRG$ can alternatively be defined as $\cground{\RRG'}{C\cup E}$.

Grounding the program in Listing~\ref{lst:example:external} relative to an empty atom base yields
the below program with a single external atom, viz.~\lstinline{e(1)}\/:
\begin{lstlisting}[language=clingo]
f(1). f(2).
a(1) :- f(1), e(1).
b(1) :- f(1), not e(1).
b(2) :- f(2).
\end{lstlisting}
Note how externals influence the result of grounding.
While occurrences of \lstinline{e(1)} remain untouched, the atom \lstinline{e(2)} is unavailable and thus set to false according to the condition in~\eqref{grd:G}.
In practice, even more simplifications are applied during grounding.
For instance, in \clingo,
the established truth of \lstinline{f(1)} and \lstinline{f(2)} leads to their removal in the bodies in Line~2--4.

Logic programs with externals constitute a major building block of multi-shot solving.
Hence, before addressing their composition within a more elaborate formal framework,
let us provide some semantic underpinnings.
The stable models of such programs are defined relative to a truth assignment on the external atoms.
For a program $\PRG$ with externals $E$,
we define the set $I = E\setminus\Head{\PRG}$ as input atoms of $\PRG$.
That is, input atoms are externals that are not overridden by rules in $\PRG$.
Then, given $\PRG$ along with a partial assignment $V=( V^t,V^u )$ over $I$,
we define the stable models of $\PRG$ w.r.t.\ $V$ as the ones of
\(
\PRG\cup(\{{a\leftarrow{}} \mid a\in V^t\}\cup\{{\{a\}\leftarrow{}} \mid a \in V^u\})
\)
to capture the extension of $\PRG$ with respect to a truth assignment to the input atoms in $I$.
Note that the externals in $E$ remain implicit in the domain of~$V$.
For instance,
the above program $\PRG$ with externals $E=\{\text{\lstinline{e(1)}}\!\}$
has a stable model including \lstinline{a(1)} but excluding \lstinline{b(1)} w.r.t.\ assignment $(\{\texttt{e(1)}\},\emptyset)$,
and vice versa with $(\emptyset,\emptyset)$.

Further examples involving external atoms are given in
Listings~\ref{lst:example:script},
\ref{fig:iclingo:python},
\ref{lst:exactly:one},
\ref{lst:queens},
and~\ref{lst:lp:targets}
below.

\subsection{Composing logic programs with externals}
\label{sec:programs:composition}

The assembly of (ground) subprograms can be characterized by means of module theory.
Program states are captured by modules
whose input and output atoms provide the respective atom base.
Successive grounding instructions result in modules to be joined with the modules of the corresponding program states.

Given an atom base $C$,
a (non-ground) extensible program \RRG\ yields the module
\begin{equation}\label{eq:ground:module}
\inst{\module{\RRG}}{C}
=
(\PRG,(C\cup E)\setminus\Head{\PRG},\Head{\PRG})
\end{equation}
via the ground program $\PRG$ with externals $E$ obtained by grounding \RRG\ relative to $C$.%
\footnote{Note that $E\setminus\Head{\PRG}$ consists of atoms stemming from \lstinline{#external} declarations
  that have not been ``overwritten'' by any rules in \PRG.}
For example,
grounding the extensible program in Listing~\ref{lst:example:external} w.r.t.\ the empty atom base
results in the module in \eqref{eq:example:module}.
\begin{equation}
  \label{eq:example:module}
  \left(
    \left\{
      \begin{array}{l}
        f(1) \leftarrow {}               \\
        f(2) \leftarrow {}               \\
        a(1) \leftarrow f(1), e(1)       \\
        b(1) \leftarrow f(1), \naf{e(1)} \\
        b(2) \leftarrow f(2)
      \end{array}
    \right\}
    ,
    \left\{
      e(1)
    \right\}
    ,
    \left\{
      \begin{array}[l]{l}
        f(1), f(2), {}\\
        a(1), {}\\
        b(1), b(2)
      \end{array}
    \right\}
  \right)
\end{equation}

Given the induction of modules from extensible programs w.r.t.\ to atom bases in \eqref{eq:ground:module},
we define successive program states in the following way.

The initial program state is given by the empty module
\(
\module{\PRG}_0 = (\emptyset,\emptyset,\emptyset)
\).

The program state succeeding a module~$\module{\PRG}_i$ is captured by the module
\begin{equation}\label{eq:module:composition}
\module{\PRG}_{i+1}
=
\module{\PRG}_i
\sqcup
\inst{\module{\RRG}_{i+1}}{\inp{\module{\PRG}_i}\cup\out{\module{\PRG}_i}}
\end{equation}
where
\(
\inst{\module{\RRG}_{i+1}}{\inp{\module{\PRG}_i}\cup\out{\module{\PRG}_i}}
\)
gives the result of grounding an extensible program \RRG\ relative to the atom base $\inp{\module{\PRG}_i}\cup\out{\module{\PRG}_i}$
as defined in~\eqref{eq:ground:module}.
Note that \prog{\inst{\module{\RRG}_{i+1}}{\inp{\module{\PRG}_i}\cup\out{\module{\PRG}_i}}} is a logic program over
\(
\inp{\module{\PRG}_{i+1}}\cup\out{\module{\PRG}_{i+1}}
\)
with externals $E$ such that
\[
\inp{\module{\PRG}_{i+1}}\cup\out{\module{\PRG}_{i+1}}
\ = \
(\inp{\module{\PRG}_i}\cup\out{\module{\PRG}_i})
\cup E
\cup\Head{\prog{\inst{\module{\RRG}_{i+1}}{\inp{\module{\PRG}_i}\cup\out{\module{\PRG}_i}}}}).
\]
This reflects the atom base of programs with externals discussed after equations~\eqref{grd:P} and~\eqref{grd:E}.
From a practical point of view,
the modules~$\module{\PRG}_i$ and~$\module{\PRG}_{i+1}$ represent the program state of \clingo{}
before and after the $(i{+}1)$-st execution of a \lstinline{ground} command in a \lstinline{main} routine.
And
\(
\inst{\module{\RRG}_{i+1}}{\inp{\module{\PRG}_i}\cup\out{\module{\PRG}_i}}
\)
captures the result of the $(i{+}1)$-st \lstinline{ground} command applied to an extensible program
relative to the atom base provided by $\module{\PRG}_i$.
Notably, the join leading to $\module{\PRG}_{i+1}$ can be undefined in case the constituent modules are non-compositional.
At system level, compositionality is only partially checked.
\clingo, or more precisely \clasp, respectively, issues an error message when atoms become redefined but
no cycle check over modules is done.

Interestingly,
input atoms induced by externals can also be used to incorporate future information.
To see this, consider the following rules (extracted from Listing~\ref{lst:exactly:one}):%
\footnote{Similar to the $n$-Queens problem, 
  putting a queen (\texttt{q}) on position $i$ attacks (\textit{a}) positions $1,\dots,i-1$, and no other \texttt{q}ueen may be put there.}
\begin{lstlisting}
#program s(i).
#external a(i).

{ q(i) }.
a(i-1) :- q(i).
a(i-1) :- a(i).
       :- a(i), q(i).
\end{lstlisting}
These rules give rise to the module $\module{S}_i(C)$ given as follows.
\begin{align}\label{eq:example:module:two}
  \left(
  \left\{
  \begin{array}{ccl}
    \{ q(i  ) \} &\leftarrow&\\
       a(i-1)    &\leftarrow& q(i)  \\
       a(i-1)    &\leftarrow& a(i)\\
                 &\leftarrow& a(i),q(i)
  \end{array}
  \right\}
  ,
  (C\cup\{a(i)\})\setminus\{a(i-1), q(i)\}
  ,
  \{a(i-1), q(i)\}
  \right)  
\end{align}
We observe that the input atom $a(i)$ of $\module{S}_i(C)$ is defined in $\module{S}_{i+1}(C)$.
Proceeding as in~\eqref{eq:module:composition} by letting \module{R} be \module{S},
each module $\module{P}_i$ has a single input atom $a(i)$ and output atoms
\(
\{a(0),\dots,a(i-1),q(1),\dots,q(i)\}
\);
it yields $i+1$ stable models, either the empty one or one of the form
\(
\{a(0),\dots,a(j-1),q(j)\}
\)
for $1\leq j\leq i$.
In the latter models, the truth of an atom like $a(0)$ relies on that of $q(j)$, 
occurring in a subsequently joined module whenever $j \geq 2$.

The next result provides a characterization of sequences of program states in terms of their constituent subprograms and their associated external atoms.
The well-definedness of a sequence depends upon their compositionality, as defined in Section~\ref{sec:background}.
\begin{proposition}
Let $(\RRG_i)_{i>0}$ be a sequence of (non-ground) extensible programs,
and let
$\PRG_{i+1}$ be the ground program with externals $E_{i+1}$ obtained from $\RRG_{i+1}$ and $\inp{\module{\PRG}_i}\cup\out{\module{\PRG}_i}$ for $i\geq 0$,
where
\(
\module{\PRG}_0 = (\emptyset,\emptyset,\emptyset)
\) and 
$\module{\PRG}_{i+1}$ is defined as in~\eqref{eq:module:composition},
and
$\module{\RRG}_{i+1}$ is defined as in~\eqref{eq:ground:module}
for $i\geq 0$.

If
\(
\module{\PRG}_i
\)
and
\(
\inst{\module{\RRG}_{i+1}}{\inp{\module{\PRG}_i}\cup\out{\module{\PRG}_i}}
\)
are compositional for some $j\geq 0$ and all $j> i\geq 0$,
then
\begin{enumerate}
\item
\(
\prog{\module{\PRG}_j}
=
\bigcup_{i>0}^j \PRG_i
\)
\item
\(
\;\inp{\module{\PRG}_j}
=
\bigcup_{i>0}^j E_i\setminus\bigcup_{i>0}^j \Head{\PRG_i}
\)
\item
\(
\out{\module{\PRG}_j}
=
\bigcup_{i>0}^j \Head{\PRG_i}
\)
\end{enumerate}
\end{proposition}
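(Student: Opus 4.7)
The plan is to prove the proposition by induction on $j$. The base case $j=0$ is immediate: by definition $\module{\PRG}_0 = (\emptyset,\emptyset,\emptyset)$, and the unions $\bigcup_{i>0}^{0}$ are empty, so all three identities hold as $\emptyset = \emptyset$. Note that the compositionality hypothesis for $j=0$ is vacuously true, so nothing more is needed.

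For the inductive step, I assume all three identities hold at index $j$ and that $\module{\PRG}_j$ and $\inst{\module{\RRG}_{j+1}}{\inp{\module{\PRG}_j}\cup\out{\module{\PRG}_j}}$ are compositional, which makes the join in~\eqref{eq:module:composition} well-defined. Writing $C_j = \inp{\module{\PRG}_j}\cup\out{\module{\PRG}_j}$, the instantiation formula~\eqref{eq:ground:module} gives
\[
\inst{\module{\RRG}_{j+1}}{C_j} = (\PRG_{j+1},\, (C_j\cup E_{j+1})\setminus\Head{\PRG_{j+1}},\, \Head{\PRG_{j+1}}),
\]
so expanding the join in~\eqref{eq:module:composition} yields three components, which I analyze individually. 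For the program component, $\prog{\module{\PRG}_{j+1}} = \prog{\module{\PRG}_j}\cup\PRG_{j+1}$, and applying the induction hypothesis immediately gives claim~(1). For the output component, $\out{\module{\PRG}_{j+1}} = \out{\module{\PRG}_j}\cup\Head{\PRG_{j+1}}$, and claim~(3) follows the same way.

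The input component is the substantive part and deserves careful set algebra. Expanding the join definition gives
\[
\inp{\module{\PRG}_{j+1}} = (\inp{\module{\PRG}_j}\setminus\Head{\PRG_{j+1}})\,\cup\,\bigl(((C_j\cup E_{j+1})\setminus\Head{\PRG_{j+1}})\setminus\out{\module{\PRG}_j}\bigr).
\]
Using the fact that $\inp{\module{\PRG}_j}\cap\out{\module{\PRG}_j}=\emptyset$ from the module definition, the second summand simplifies to $(\inp{\module{\PRG}_j}\cup E_{j+1})\setminus(\Head{\PRG_{j+1}}\cup\out{\module{\PRG}_j})$. Absorbing this into a single union with the first summand leaves
\[
\inp{\module{\PRG}_{j+1}} = (\inp{\module{\PRG}_j}\setminus\Head{\PRG_{j+1}})\,\cup\,\bigl(E_{j+1}\setminus(\Head{\PRG_{j+1}}\cup\out{\module{\PRG}_j})\bigr).
\]
Substituting the inductive identities $\inp{\module{\PRG}_j} = \bigcup_{i>0}^{j}E_i\setminus\bigcup_{i>0}^{j}\Head{\PRG_i}$ and $\out{\module{\PRG}_j} = \bigcup_{i>0}^{j}\Head{\PRG_i}$ then rewrites both summands as $\bigcup_{i>0}^{j}E_i\setminus\bigcup_{i>0}^{j+1}\Head{\PRG_i}$ and $E_{j+1}\setminus\bigcup_{i>0}^{j+1}\Head{\PRG_i}$, whose union is exactly $\bigcup_{i>0}^{j+1}E_i\setminus\bigcup_{i>0}^{j+1}\Head{\PRG_i}$, yielding claim~(2).

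The only potential obstacle is bookkeeping in the input-component calculation, particularly making sure that $\out{\module{\PRG}_j}$ is correctly subtracted from $\inp{\module{\PRG}_j}\cup E_{j+1}$. The key leverage here is the disjointness $\inp{\module{\PRG}_j}\cap\out{\module{\PRG}_j}=\emptyset$ required of modules, without which the first summand could not be merged cleanly with the second. Everything else is straightforward unfolding of definitions.
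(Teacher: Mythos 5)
Your proof is correct and follows essentially the same route as the paper's: induction on $j$, with the inductive step carried out by expanding the join of $\module{\PRG}_j$ with $\inst{\module{\RRG}_{j+1}}{\inp{\module{\PRG}_j}\cup\out{\module{\PRG}_j}}$ componentwise and substituting the inductive identities. Your treatment of the input component is if anything a little more explicit about the set algebra (and the role of $\inp{\module{\PRG}_j}\cap\out{\module{\PRG}_j}=\emptyset$) than the paper's, which simply writes down the joined module with the unions already substituted.
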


\begin{proof}
For
\(
\module{\PRG}_0 = (\emptyset,\emptyset,\emptyset)
\),
we have that
\(
\prog{\module{\PRG}_0}=\inp{\module{\PRG}_0}=\out{\module{\PRG}_0}=\emptyset
\),
and assume that
\(
\prog{\module{\PRG}_j}
=
\bigcup_{i>0}^j \PRG_i
\),
\(
\inp{\module{\PRG}_j}
=
\bigcup_{i>0}^j E_i\setminus\bigcup_{i>0}^j \Head{\PRG_i}
\), and
\(
\out{\module{\PRG}_j}
=
\bigcup_{i>0}^j \Head{\PRG_i}
\)
for some $j\geq 0$.
According to \eqref{grd:P}--\eqref{eq:ground:module} and~\eqref{eq:module:composition},
\(
\inst{\module{\RRG}_{j+1}}{\inp{\module{\PRG}_j}\cup\out{\module{\PRG}_j}}
\)
is the module
\(
(\PRG_{j+1},\linebreak[1](\inp{\module{\PRG}_j}\cup\out{\module{\PRG}_j}\cup E_{j+1})\setminus\Head{\PRG_{j+1}},\linebreak[1]\Head{\PRG_{j+1}})
\), where
\[
\begin{array}{@{}l@{}l@{}}
\PRG_{j+1} = {} &
\{r\in\cground{\begin{array}[t]{@{}l@{}}\{a\leftarrow B\in\RRG_{j+1}\}\cup
{} \\
                \{a\leftarrow B,\varepsilon\mid(\texttt{\#external}~a : B)\in \RRG_{j+1}\}}{\inp{\module{\PRG}_j}\cup\out{\module{\PRG}_j}\cup\{\varepsilon\}} \mid \varepsilon\notin\body{r}\}
               \end{array}
\\
E_{j+1} = {} &
\{\head{r}\mid r\in\cground{\begin{array}[t]{@{}l@{}}\{a\leftarrow B\in\RRG_{j+1}\}\cup
{} \\
                \{a\leftarrow B,\varepsilon\mid(\texttt{\#external}~a : B)\in \RRG_{j+1}\}}{\inp{\module{\PRG}_j}\cup\out{\module{\PRG}_j}\cup\{\varepsilon\}}, \varepsilon\in\body{r}\}
                            \end{array}
\end{array}
\]
Provided that $\module{\PRG}_j$ and
\(
\inst{\module{\RRG}_{j+1}}{\inp{\module{\PRG}_j}\cup\out{\module{\PRG}_j}}
\)
are compositional,
their join is defined as
\[
\module{\PRG}_{j+1}
=
\left(
\begin{array}{l}
 \bigcup_{i>0}^{j+1} \PRG_i,
{} \\
 \bigcup_{i>0}^j E_i\setminus\bigcup_{i>0}^{j+1} \Head{\PRG_i}
 \cup
 (\bigcup_{i>0}^{j+1} E_i\cup\bigcup_{i>0}^j \Head{\PRG_i})\setminus\bigcup_{i>0}^{j+1} \Head{\PRG_i},
{} \\
 \bigcup_{i>0}^{j+1} \Head{\PRG_i}
\end{array}
\right)
\]
That is,
\(
\prog{\module{\PRG}_{j+1}}
=
\bigcup_{i>0}^{j+1} \PRG_i
\),
\(
\inp{\module{\PRG}_{j+1}}
=
\bigcup_{i>0}^{j+1} E_i\setminus\bigcup_{i>0}^{j+1} \Head{\PRG_i}
\), and
\(
\out{\module{\PRG}_{j+1}}
=
\bigcup_{i>0}^{j+1} \Head{\PRG_i}
\).%
\end{proof}

The ground rules in
\(
\prog{\inst{\module{\RRG}_{i+1}}{\inp{\module{\PRG}_i}\cup\out{\module{\PRG}_i}}}
\)
are obtained by grounding $R_{i+1}$ relative to the atom base $\inp{\module{\PRG}_i}\cup\out{\module{\PRG}_i}$,
viz.\ the previously gathered input and output atoms.
Unlike this, the full ground program $\ground{\RRG_{i+1}}$ takes all ground terms into account;
this includes all integers.
Thus, for example, grounding the extensible program in Listing~\ref{lst:example:external}
in full yields an infinite module:
\begin{equation}
  \label{eq:example:full}
  \left(
    \left\{
      \begin{array}{l}
        f(1) \leftarrow {}\\  
        f(2) \leftarrow {}\\
        a(1) \leftarrow f(1), e(1) {}\\
        b(1) \leftarrow f(1), \naf{e(1)} {}\\
        a(2) \leftarrow f(2), e(2) {}\\
        b(2) \leftarrow f(2), \naf{e(2)} {}\\
        a(3) \leftarrow f(3), e(3) {}\\
        b(3) \leftarrow f(3), \naf{e(3)} {}\\
        \dots
      \end{array}
    \right\}
    ,
    \left\{
      e(1)
    \right\}
    ,
    \left\{
      \begin{array}[l]{l}
        f(1), f(2), {}\\
        a(1), a(2), a(3), \dots {}\\
        b(1), b(2), b(3), \dots
      \end{array}
    \right\}
  \right)
\end{equation}
Both this module and the one in~\eqref{eq:example:module} obtained by contextual grounding
possess two stable models:
$X_1=\{e(1),f(1),f(2),a(1),b(2)\}$ and $X_2=\{f(1),f(2),b(1),b(2)\}$.
However, when joined with the module
$(\{e(2)\leftarrow\},\emptyset,\{e(2)\})$,
the stable models of~\eqref{eq:example:module} turn into $X_1\cup\{e(2)\}$ and $X_2\cup\{e(2)\}$,
while~\eqref{eq:example:full} yields $X_1\setminus\{b(2)\}\cup\{e(2),a(2)\}$ and $X_2\setminus\{b(2)\}\cup\{e(2),a(2)\}$.
This mismatch results from the fact that the atom $e(2)$ was removed from~\eqref{eq:example:module} when grounding relative to atom base $\{e(1)\}$.
The subsequent definition of $e(2)$ in module $(\{e(2)\leftarrow\},\emptyset,\{e(2)\})$ is thus stripped of any logical relation to the rules in~\eqref{eq:example:module}.
Such differences can be eliminated by stipulating
\(
\Head{\ground{\RRG_{j+1}}}\cap\bigcup_{i>0}^j\atom{\ground{\RRG_i}}\subseteq\inp{\module{\PRG}_j}
\)
for a sequence $(\RRG_i)_{i>0}$ of extensible programs and all $j\geq 0$,
where the program state $\module{\PRG}_j$ is obtained through contextual grounding.
This condition rules out the join conducted in our example.
But even so, full and contextual grounding would still be imbalanced.
For example, the module in~\eqref{eq:example:module} can be joined with modules defining $a(3),b(3),\dots$,
while doing the same with~\eqref{eq:example:full} violates compositionality.
Not to mention that infinite ground programs as in~\eqref{eq:example:full} cannot be utilized in practice.
This discussion shows the influence of contextual grounding on inputs, outputs, and resulting ground programs.
Hence, some care has to be taken when writing interacting subprograms.
Actually, apart from the ones in Section~\ref{sec:queens},
all following modules obtained by grounding parametrized programs satisfy the above condition 
and have the same solutions no matter which form of grounding is used.

\subsection{State-based characterization of multi-shot solving}
\label{sec:semantics:operational}

For capturing multi-shot solving,
we must account for sequences of system states, involving information about the programs kept within the grounder and the solver.
To this end, we define a simple operational semantics based on system states and associated operations.

An ASP \emph{system state} is a triple
\(
( \boldsymbol{R},\module{\PRG},V )
\)
where
\begin{itemize}
\item $\boldsymbol{R}=\left(\RRG_c\right)_{c\in\mathcal{C}}$ is a collection of extensible (non-ground) logic programs,%
  \footnote{Note that $\RRG_c$ is merely an indexed set and thus different from $\RRG(c)$.}
\item $\module{\PRG}$ is a module,
\item $V=( V^t,V^u )$ is a three-valued assignment over $\inp{\module{\PRG}}$.
\end{itemize}
When solving with $\module{\PRG}$,
the input atoms in $\inp{\module{\PRG}}$ are taken to be false by default,
that is, $V^f=\inp{\module{\PRG}}\setminus (V^t\cup V^u)$.
This can still be altered by dedicated directives as illustrated below.

As informal examples for ASP system states,
consider the ones obtained from the program in Listing~\ref{lst:example:program}
after separately grounding subprograms \lstinline{base} and \lstinline{acid} (while replacing \lstinline{k} with \lstinline{42}),
respectively:
\begin{align}
\label{ex:state:one}&
(
(\RRG(\texttt{base}),\RRG(\texttt{acid})),
(\{a(1)\leftarrow{},a(2)\leftarrow{}\},\emptyset,\{a(1),a(2)\}),
(\emptyset,\emptyset)
)
\\
\label{ex:state:two}&
(
(\RRG(\texttt{base}),\RRG(\texttt{acid})),
(\{b(42)\leftarrow{}\},\emptyset,\{b(42)\}),
(\emptyset,\emptyset)
)
\end{align}
Given that the program in Listing~\ref{lst:example:program} has no external declarations,
no truth values can be assigned.
This is different in states obtained from the program in Listing~\ref{lst:example:external}.
Grounding this yields the state
\begin{align}
\label{ex:state:tri}&
(
(\RRG(\texttt{base})),
\module{R}_b,
(\emptyset,\emptyset)
)
\end{align}
where $\module{R}_b$ is the module given in \eqref{eq:example:module}.
Furthermore, we have set the external atom $e(1)$ to false.
The way such states are obtained from non-ground logic programs is made precise next.

ASP system states can be created and modified by the following operations.

Function \textit{create} partitions a program into subprograms.
\begin{description}
\item [$\mathit{create}(R): {} \mapsto ( \boldsymbol{R},\module{\PRG},V )$] \

  for a list $R$ of (non-ground) rules and declarations
  where
  \begin{itemize}
  \item $\boldsymbol{R}=\left(\RRG(c)\right)_{c\in\mathcal{C}}$
  \item $\module{\PRG} = (\emptyset,\emptyset,\emptyset)$
  \item $V=(\emptyset,\emptyset)$
  \end{itemize}
\end{description}
Each subprogram $\RRG(c)$ gathers all rules and non-program directives in the scope of $c$.

The respective subprograms can be extended by function \textit{add} with rules as well as external declarations.
\begin{description}
\item [$\mathit{add}(R): ( \boldsymbol{R}_1,\module{\PRG},V )\mapsto( \boldsymbol{R}_2,\module{\PRG},V )$] \

  for a list $R$ of (non-ground) rules and declarations
  where
  \begin{itemize}
  \item $\boldsymbol{R}_1=\left(\RRG_c\right)_{c\in\mathcal{C}}$ and
        $\boldsymbol{R}_2=\left(\RRG_c\cup\RRG(c)\right)_{c\in\mathcal{C}}$
  \end{itemize}
\end{description}
Obviously, we have
\(
\mathit{add}(R)(\mathit{create}(\emptyset))
=
\mathit{create}(R)
\).
Note that \textit{add} only affects non-ground subprograms and thus ignores
compositionality issues since they appear on the ground level.

Function \textit{ground} instantiates the designated subprograms in $\boldsymbol{R}$ and binds their parameters.
The resulting ground programs along with their external atoms are then joined with the ones captured in the current state ---
provided that they are compositional.
\begin{description}
\item [$\mathit{ground}((n,\boldsymbol{t}_n)_{n\in\mathcal{N}}): ( \boldsymbol{R},\module{\PRG}_1,V_1 )\mapsto(\boldsymbol{R},\module{\PRG}_2,V_2 )$] \

  for a collection $(n,\boldsymbol{t}_n)_{n\in\mathcal{N}}$
  of pairs of non-integer constants $\mathcal{N}\subseteq\mathcal{C}$ and term tuples $\boldsymbol{t}_n\in\mathcal{T}^{k_n}$ of arity $k_n$
  where
  \begin{itemize}
  \item
    $\module{\PRG}_2=\module{\PRG}_1\sqcup\inst{\module{R}}{\inp{\module{\PRG}_1}\cup\out{\module{\PRG}_1}}$
    \\
    and \inst{\module{R}}{\inp{\module{\PRG}_1}\cup\out{\module{\PRG}_1}} is the module
    obtained as in \eqref{eq:ground:module} from
    \begin{itemize}
    \item extensible program
      \(
      \bigcup_{n\in\mathcal{N}}R_n[p_1/t_1,\dots,p_{k_n}/t_{k_n}]
      \)
      where
      \(
      \boldsymbol{t}_n=(t_1,\dots,t_{k_n})
      \)
      and
    \item atom base $\inp{\module{\PRG}_1}\cup\out{\module{\PRG}_1}$
    \end{itemize}
    for
    $\left(\RRG_c\right)_{c\in\mathcal{C}}=\boldsymbol{R}$
    \item   $V_2^t\,=\{a\in\inp{\module{\PRG}_2}\mid V_1(a)=t\,\}$
    \item[] $V_2^u  =\{a\in\inp{\module{\PRG}_2}\mid V_1(a)=u  \}$
  \end{itemize}
\end{description}
A few more technical remarks are in order.
First,
note that a previous external status of an atom is eliminated once it becomes defined by a ground rule.
This is accomplished by module composition, namely, the elimination of output atoms from input atoms.
Second,
note that jointly grounded subprograms are treated as a single logic program.
In fact, while $\mathit{ground}((c,\boldsymbol{p}_c),(c,\boldsymbol{p}_c))(s)$ and $\mathit{ground}((c,\boldsymbol{p}_c))(s)$ yield the same result,
$\mathit{ground}((c,\boldsymbol{p}_c))(\mathit{ground}((c,\boldsymbol{p}_c))(s))$ leads to two non-compositional modules
whenever normal rules are contained in $R_c$.
Finally,
note that new inputs stemming from just added external declarations are set to false in view of $V_2^f=\inp{\module{\PRG}_2}\setminus (V_2^t\cup V_2^u)$.

The above functionality lets us now formally characterize the states in \eqref{ex:state:one} and \eqref{ex:state:two}.
By abbreviating the logic program in Listing~\ref{lst:example:program} with \RRG,
the system state in \eqref{ex:state:one} results from
\(
\mathit{ground}((\texttt{base},()))(\mathit{create}(R))
\),
while
\(
\mathit{ground}((\texttt{acid},(\texttt{42})))(\mathit{create}(R))
\)
yields \eqref{ex:state:two}.
Moreover, observe the difference between grounding subprogram \texttt{acid} before \texttt{base} and vice versa.
While the ground program comprised in
\[
\mathit{ground}((\texttt{base},()))(\mathit{ground}((\texttt{acid},(\texttt{42})))(\mathit{create}(R)))
\]
only consists of the facts $\{b(42)\leftarrow{}, a(1)\leftarrow{},a(2)\leftarrow{}\}$,
the one contained in
\[
\mathit{ground}((\texttt{acid},(\texttt{42})))(\mathit{ground}((\texttt{base},()))(\mathit{create}(R)))
\]
includes additionally the rules $\{c(1,42)\leftarrow a(1);\ c(2,42)\leftarrow a(2)\}$.
This difference is due to contextual grounding (cf.~Section~\ref{sec:grounding:contextual}).
While in the first case  rule $c(X,42)\leftarrow a(X)$ is grounded w.r.t.\ atom base $\{b(42)\}$,
it is grounded relative to $\{a(1),a(2),b(42)\}$ in the second case.
Such effects are obviously avoided when jointly grounding both subprograms, as in
\[
\mathit{ground}((\texttt{base},()),(\texttt{acid},(\texttt{42})))(\mathit{create}(R))
\ .
\]

The next function allows us to change the truth assignment of input atoms.
\begin{description}
\item [$\mathit{assignExternal}(a,v): ( \boldsymbol{R},\module{\PRG},V_1 )\mapsto(\boldsymbol{R},\module{\PRG},V_2 )$] \

  for a ground atom $a$ and $v\in\{t,u,f\}$ where
  \begin{itemize}
  \item if $v=t$
    \begin{itemize}
    \item $V_2^t\,=V_1^t\cup\{a\}$ if $a\in\inp{\module{\PRG}}$, and $V_2^t=V_1^t$ otherwise
    \item $V_2^u  =V_1^u\setminus\{a\}$
    \end{itemize}
  \item if $v=u$
    \begin{itemize}
    \item $V_2^t\,=V_1^t\,\setminus\{a\}$
    \item $V_2^u  =V_1^u\cup\{a\}$ if $a\in\inp{\module{\PRG}}$, and $V_2^u=V_1^u$ otherwise
    \end{itemize}
  \item if $v=f$
    \begin{itemize}
    \item $V_2^t\,=V_1^t\setminus\{a\}$
    \item $V_2^u  =V_1^u\setminus\{a\}$
    \end{itemize}
  \end{itemize}
\end{description}
While the default truth value of input atoms is false, making them undefined results in a choice.
Note that \textit{assignExternal} only affects input atoms, that is, ``non-overwritten'' externals atoms.
If an atom is not external, then \textit{assignExternal} has no effect.

With this function,
we can now characterize the system state in \eqref{ex:state:tri}.
Abbreviating the program in Listing~\ref{lst:example:external} with \RRG,
system state \eqref{ex:state:tri} is issued by
\begin{equation}\label{ex:state:tri:ops}
\mathit{assignExternal}(\texttt{e(1)},f)(\mathit{ground}((\texttt{base},()))(\mathit{create}(R)))
\ .
\end{equation}
The resulting state is the same as the previous one, obtained from
\(
\mathit{ground}((\texttt{base},()))(\mathit{create}(R))
\),
since external atoms are assigned false by function \textit{ground}.

Function \textit{releaseExternal} removes the external status from an atom and sets it permanently to false,
otherwise this function has no effect.
\begin{description}
\item [$\mathit{releaseExternal}(a): ( \boldsymbol{R},\module{\PRG}_1,V_1 )\mapsto(\boldsymbol{R},\module{\PRG}_2 ,V_2 )$] \

  for a ground atom~$a$ where
  \begin{itemize}
  \item $\module{\PRG}_2=(\prog{\module{\PRG}_1},\inp{\module{\PRG}_1}\setminus\{a\},\out{\module{\PRG}_1}\cup\{a\})$ if $a\in\inp{\module{\PRG}_1}$,
    and $\module{\PRG}_2=\module{\PRG}_1$ otherwise
  \item $V_2^t\,=V_1^t\,\setminus\{a\}$
  \item $V_2^u  =V_1^u  \setminus\{a\}$
  \end{itemize}
\end{description}
Note that \textit{releaseExternal} only affects input atoms; defined atoms remain unaffected.
The addition of $a$ to the output makes sure that it can never be re-defined, neither by a rule nor an external declaration.
A released (input) atom is thus permanently set to false, since it is neither defined by any rule nor part of the input atoms,
and is also denied both statuses in the future.

The following properties shed some light on the interplay among the previous operations.
For an
ASP system state $s$ and $v,v'\in\{t,f,u\}$,
we have
\begin{enumerate}
\item $\mathit{releaseExternal}(a)(\mathit{releaseExternal}(a)(s))=\mathit{releaseExternal}(a)(s)$
\item $\mathit{releaseExternal}(a)(\mathit{assignExternal}(a,v)(s))=\mathit{releaseExternal}(a)(s)$
\item $\mathit{assignExternal}(a,v)(\mathit{releaseExternal}(a)(s))=\mathit{releaseExternal}(a)(s)$ 
\item $\mathit{assignExternal}(a,v)(\mathit{assignExternal}(a,v')(s))=\mathit{assignExternal}(a,v)(s)$
\end{enumerate}

Finally, \textit{solve} leaves the system state intact and
outputs a possibly filtered set of stable models of the logic program with externals comprised in the current state
(cf.~Section~\ref{sec:programs:extensible}).
This set is general enough to define all basic reasoning modes of ASP.
\begin{description}
\item [$\mathit{solve}(( A^t,A^f )): ( \boldsymbol{R},\module{\PRG},V )\mapsto( \boldsymbol{R},\module{\PRG},V )$] \

  outputs the set
  \begin{align}\label{solve:models}
    \mathcal{X}_{\module{\PRG},V} =
    \{X\mid X\text{ is a stable model of }\prog{\module{\PRG}}\text{ w.r.t.\ }V\text{ such that }A^t\subseteq X\text{ and }A^f\cap X = \emptyset\}
  \end{align}
\end{description}
To be more precise,
a state like $( \boldsymbol{R},\module{\PRG},V )$ comprises the ground logic program \prog{\module{\PRG}} with external atoms \inp{\module{\PRG}}.
The latter constitutes the domain of the partial assignment $V$.
Recall from Section~\ref{sec:programs:extensible} that the stable models of \prog{\module{\PRG}} w.r.t.\ $V$ are given by the stable models
of the program
\(
\prog{\module{\PRG}}\cup\{{a\leftarrow{}}\mid a\in V^t\}\cup\{{\{a\}\leftarrow{}}\mid a\in V^u\}
\).
In addition to the assignment $V$ on input atoms,
we consider another partial assignment $( A^t,A^f )$ over an arbitrary set of atoms for filtering stable models;
they are commonly referred to as assumptions.%
\footnote{In \clingo, or more precisely in \clasp,
  such {assumptions} are the principal parameter to the underlying \lstinline{solve} function
  (see below).
  The term assumption traces back to~\citeN{eensor03b}; it was used in ASP by~\citeN{gekakaosscth08a}.}
Note the difference among input atoms and (filtering) assumptions.
While a true input atom amounts to a fact, a true assumption acts as an integrity constraint.%
\footnote{That is, the difference between `$a\leftarrow$' and `$\leftarrow\naf{a}$'.}
Thus, a true assumption must not be unfounded, while a true external atom is exempt from this condition.
Also, undefined input atoms are regarded as false, while undefined assumptions remain neutral.
Finally, at the solver level, input atoms are a transient part of the representation,
while assumptions only affect the assignment of a single search process.

For illustration, observe that applying $\mathit{solve}()$ to the system state in \eqref{ex:state:tri} leaves the state unaffected and
outputs a single stable model containing \lstinline{b(1)}.
Unlike this, no model is obtained from $\mathit{solve}((\emptyset,\{\texttt{b(1)}\}))(\eqref{ex:state:tri})$.
For a complement,
$\mathit{solve}()(\mathit{assignExternal}(\texttt{e(1)},u)(\eqref{ex:state:tri}))$ outputs two models,
one with \lstinline{a(1)} and another with \lstinline{b(1)}.

From the viewpoint of operational semantics,
a multi-shot ASP solving process can be associated with a sequence of operations
\(
( o_k )_{k\in K}
\),
which induce a sequence
\(
( \boldsymbol{R}_k,\module{\PRG}_k,V_k )_{k\in K}
\)
of ASP system states where
\begin{enumerate}
\item $o_0=\mathit{create}(R)$ for some logic program $R$ 
\item $( \boldsymbol{R}_0,\module{\PRG}_0,V_0 )=o_0$
\item $( \boldsymbol{R}_k,\module{\PRG}_k,V_k )=o_k(( \boldsymbol{R}_{k-1},\module{\PRG}_{k-1},V_{k-1}))$ for $k> 0$
\end{enumerate}
Note that only $o_0$ creates states while all others map states to states.

For capturing the result of multi-shot solving in terms of stable models,
we consider the sequence of sets of stable models obtained at each solving step.
More precisely, given a sequence of operations and system states as above,
a multi-shot solving process can be associated with the sequence
\(
( \mathcal{X}_{\module{\PRG}_j,V_j})_{j\in K, o_j=\mathit{solve}(( A^t_j,A^f_j ))}
\)
of sets of stable models, where $\mathcal{X}_{\module{\PRG},V}$ is defined w.r.t.\ $(A^t,A^f)$ as in~\eqref{solve:models}.

All of the above state operations have almost literal counterparts in \clingo's APIs.
For instance, the \lstinline{Control} class of the Python API for capturing system states provides the methods
\lstinline{__init__},
\lstinline{add},
\lstinline{ground},
\lstinline{assign_external},
\lstinline{release_external},
and
\lstinline{solve}.%
\footnote{For a complete listing of functions and classes available in \lstinline{clingo}'s Python API,\par see \url{https://potassco.org/clingo/python-api/current/clingo.html}}

\subsection{Example}
\label{sec:multi:shot:example}

Let us demonstrate the above apparatus via the authentic \clingo{} program in Listing~\ref{lst:example:script}.
\lstinputlisting[language=clingo,caption={Example with \texttt{\#external} and \texttt{\#program} declarations controlled by a \texttt{main} routine in Python (\lstinline{simple.lp})},label={lst:example:script}]{programs/simple.lp}
This program consists of two subprograms, viz.\ \lstinline{base} and \lstinline{succ} given in Line~1--3 and 5--8, respectively.
Note that once the rule in Line~3 is internalized no stable models are obtained whenever its body is satisfied.
Since we use the \lstinline{main} routine in Line~10--22 within a \lstinline{#script} environment,
an initial \clingo{} object is created for us and bound to variable \lstinline{prg} (cf.~Line~12).
This amounts to an implicit call of
\(
\mathit{create}(R)
\),
where $R$ is the list of (non-ground) rules and declarations in Line~1--8 in Listing~\ref{lst:example:script}.%
\footnote{Further \clingo{} objects could be created with
\(
\mathit{create}(\emptyset)
\)
and then further augmented and manipulated.}

The initial \clingo{} object gathers all rules and external declarations in the scope of the subprograms \lstinline{base} and \lstinline{succ};
its state is captured by
\[
(\boldsymbol{\RRG}_0,\module{\PRG}_0,V_0 )
=
((\RRG(\mathtt{base}),\RRG(\mathtt{succ})),(\emptyset,\emptyset,\emptyset),(\emptyset,\emptyset))
\ .
\]
where $\RRG(\mathtt{base})$ and $\RRG(\mathtt{succ})$ consist of the non-ground rules and external declarations in Line~1--3 and 5--8, respectively.
Empty subprograms are omitted.

The initial program state
induces the atom base
\(
\inp{\module{\PRG}_0}\cup\out{\module{\PRG}_0}=\emptyset
\).

The \lstinline{ground} instruction in Line~13 takes the extensible logic program
\(
\RRG(\mathtt{base})
\)
along with the empty base of atoms
and yields the ground program $\PRG_1$ with externals $E_1$,
where
\begin{align*}
  \PRG_1 & {} = \{p(0)\leftarrow p(3);\ p(0)\leftarrow \naf{p(0)}\}\\
     E_1 & {} = \{p(1),p(2),p(3)\} \ .
\end{align*}
This results in the module
\(
\inst{\module{\RRG}_1}{\emptyset}
=
(\PRG_1,E_1,\{p(0)\})
\),
whose join with $\module{\PRG}_0$ yields
\[
\module{\PRG}_1
=
\module{\PRG}_0 \sqcup \inst{\module{\RRG}_1}{\emptyset}
=
(
\{
p(0)\leftarrow p(3);\
p(0)\leftarrow \naf{p(0)}
\},
\{p(1),p(2),p(3)\},
\{p(0)\}
)
\ .
\]
We then obtain the system state
\[
(\boldsymbol{\RRG}_1,\module{\PRG}_1,V_1)
=
(\boldsymbol{\RRG}_0,\module{\PRG}_0 \sqcup \inst{\module{\RRG}_1}{\emptyset},V_0)
\ .
\]

While the input atoms $p(1)$, $p(2)$, and $p(3)$ are assigned (by default) to false by $V_1$,
the instruction in Line~14 switches the value of $p(3)$ to true.
And we obtain the system state
\[
(\boldsymbol{\RRG}_2,\module{\PRG}_2,V_2)
=
(\boldsymbol{\RRG}_0,\module{\PRG}_1,(\{p(3)\},\emptyset))
\ .
\]
Applying the \lstinline{solve} instruction in Line~15 leaves the state intact
and outputs the stable model $\{p(0),p(3)\}$ of~$\module{\PRG}_2$ w.r.t.\ $V_2$.
Note that making $p(3)$ true leads to the derivation of $p(0)$, which blocks the rule in Line~3.

Next,
the instruction in Line~16 turns $p(3)$ back to false,
which puts the ASP system into the state
\[
(\boldsymbol{\RRG}_3,\module{\PRG}_3,V_3)
=
(\boldsymbol{\RRG}_0,\module{\PRG}_1,(\emptyset,\emptyset))
\ .
\]
The last change withdraws the derivation of $p(0)$ and no stable model is obtained from $\module{\PRG}_3$ w.r.t.\ $V_3$ in Line~17.

The \lstinline{ground} instruction in Line~18 instantiates the rules and external declarations of subprogram \lstinline{succ(n)} in Line~5--8 twice.
Once the parameter \lstinline{n} is instantiated with~\lstinline{1} and once with~\lstinline{2}.
This yields the extensible logic program
\(
\RRG_4=\RRG(\mathtt{succ})[\mathtt{n}/\mathtt{1}]\cup\RRG(\mathtt{succ})[\mathtt{n}/\mathtt{2}]
\).
This program is then grounded relative to
\(
\inp{\module{\PRG}_3}\cup\out{\module{\PRG}_3}=\{p(1),p(2),p(3)\}\cup\{p(0)\}
\),
which results in the following ground program with externals and resulting module:
\begin{align*}
  \PRG_4 & {} =
        \left\{
        \begin{array}{@{}l@{}}
          p(1)\leftarrow p(4);\ p(1)\leftarrow \naf{p(2)},\naf{p(3)};
          \\
          p(2)\leftarrow p(5);\ p(2)\leftarrow \naf{p(3)},\naf{p(4)}
        \end{array}
  \right\}\\
  E_4 & {} = \{p(4),p(5)\}\\
  \text{and } \
  \inst{\module{\RRG}_4}{\inp{\module{\PRG}_3}\cup\out{\module{\PRG}_3}}
  &=
  \left(
    \PRG_4
    ,
    \left\{
      \begin{array}{@{}l@{}}
        p(0),p(4),
        \\
        p(3),p(5)
      \end{array}
    \right\}
    ,
    \left\{
      \begin{array}{@{}l@{}}
        p(1),\\p(2)
      \end{array}
    \right\}
  \right)
\end{align*}

Joining the latter with the program module $\module{\PRG}_3$ of the previous system state yields $\module{\PRG}_4=\module{\PRG}_3\sqcup\inst{\module{\RRG}_4}{\inp{\module{\PRG}_3}\cup\out{\module{\PRG}_3}}$,
or more precisely:
\[
\module{\PRG}_4
=
\left(
\left\{
\begin{array}{@{}l@{}}
p(0)\leftarrow p(3);\quad   p(1)\leftarrow p(4);\ p(1)\leftarrow \naf{p(2)},\naf{p(3)};
\\
p(0)\leftarrow \naf{p(0)};\ p(2)\leftarrow p(5);\ p(2)\leftarrow \naf{p(3)},\naf{p(4)}
\end{array}
\right\}
,
\left\{
\begin{array}{@{}l@{}}
\phantom{p(3),}\, p(4),
\\
p(3),p(5)
\end{array}
\right\}
,
\left\{
\begin{array}{@{}l@{}}
p(0),p(1),
\\
\phantom{p(0),}\,p(2)
\end{array}
\right\}
\right)
\]
This puts the ASP system into the state
\[
(\boldsymbol{\RRG}_4,\module{\PRG}_4,V_4 )
=
(\boldsymbol{\RRG}_0,\module{\PRG}_3\sqcup\inst{\module{\RRG}_4}{\inp{\module{\PRG}_3}\cup\out{\module{\PRG}_3}},V_3)
\ .
\]

The subsequent \lstinline{solve} command in Line~19 leaves the state intact
but returns no stable models for~$\module{\PRG}_4$ w.r.t.\ $V_4$.

Then, \clingo{} proceeds in Line~20 with the \lstinline{ground} instruction
instantiating $\RRG(\mathtt{succ})[\mathtt{n}/\mathtt{3}]$ relative to the atom base
${\inp{\module{\PRG}_4}\cup\out{\module{\PRG}_4}}=\{p(0),p(1),p(2),p(3),p(4),p(5)\}$.
This results in the following ground program with externals and induced module:
\begin{align*}
  \PRG_5 & {} = \{p(3)\leftarrow p(6);\ p(3)\leftarrow \naf{p(4)},\naf{p(5)}\}\\
  E_5 & {} = \{p(6)\}\\
\text{and } \
\inst{\module{\RRG}_5}{\inp{\module{\PRG}_4}\cup\out{\module{\PRG}_4}}&{}
=
\left(
\PRG_5
,
\left\{
\begin{array}{@{}l@{}}
p(0),p(1),p(2),
\\
p(4),p(5),p(6)
\end{array}
\right\}
,
  \left\{
p(3)
\right\}
\right)
\end{align*}
With the latter, we obtain the system state
\[
(\boldsymbol{\RRG}_5,\module{\PRG}_5,V_5 )
=
(\boldsymbol{\RRG}_0,\module{\PRG}_4\sqcup\inst{\module{\RRG}_5}{\inp{\module{\PRG}_4}\cup\out{\module{\PRG}_4}},V_3)
\]
where
\[
\module{\PRG}_5
=
\left(
\left\{
\begin{array}{@{}l@{}}
p(0)\leftarrow p(3);\quad   p(1)\leftarrow p(4);\ p(1)\leftarrow \naf{p(2)},\naf{p(3)};
\\
p(0)\leftarrow \naf{p(0)};\ p(2)\leftarrow p(5);\ p(2)\leftarrow \naf{p(3)},\naf{p(4)};
\\
\phantom{p(0)\leftarrow \naf{p(0)};\ }\,p(3)\leftarrow p(6);\ p(3)\leftarrow \naf{p(4)},\naf{p(5)}
\end{array}
\right\}
,
\left\{
\begin{array}{@{}l@{}}
p(4),
\\
p(5),
\\
p(6)
\end{array}
\right\}
,
\left\{
\begin{array}{@{}l@{}}
p(0),p(1),
\\
\qquad\ p(2),
\\
\qquad\ p(3)
\end{array}
\right\}
\right)
\ .
\]

Finally,
the \lstinline{solve} command in Line~21
yields the stable model
$\{p(0),p(3)\}$ of module $\module{\PRG}_5$ w.r.t.\ assignment $(\emptyset,\emptyset)$.

The result of the ASP solving process induced by program \texttt{simple.lp} from Listing~\ref{lst:example:script} is given in Listing~\ref{lst:example:run}.
The parameter \texttt{0} instructs \clingo{} to compute all stable models upon each invocation of \lstinline{solve}.%
\footnote{In fact, \clingo's API allows for changing solver configurations in between successive solver calls.}
\lstinputlisting[numbers=none,basicstyle=\ttfamily\small,caption={Running the program in Listing~\ref{lst:example:script} with \clingo},label={lst:example:run}]{programs/simple.txt}
Each such invocation is indicated by `{\small\texttt{Solving...}}'.
We see that stable models are only obtained for the first and last invocation.
Semantically, our ASP solving process thus results in a sequence of four sets of stable models,
namely
\(
(\{\{p(0),p(3)\}\},\emptyset,\emptyset,\{\{p(0),p(3)\}\})
\).

The above example illustrates the customized selection of (non-ground) subprograms to instantiate
upon \lstinline{ground} commands.
For a convenient declaration of input atoms from other subprogram instances,
schematic \lstinline{#external} declarations are embedded into the grounding process.
Given that they do not contribute ground rules, but merely qualify (undefined)
atoms that should be exempted from simplifications, \lstinline{#external} declarations
only contribute to the signature of subprograms' ground instances.
Hence,
it is advisable to condition them by domain predicates%
\footnote{Domain and built-in predicates have unique extensions that can be evaluated entirely by means of grounding.}
\cite{lparseManual} only,
as this precludes any interferences between signatures and grounder implementations.
As long as input atoms remain undefined, their truth values
can be freely picked and modified in-between \lstinline{solve} commands
via \lstinline{assign_external} instructions.
This allows for configuring the inputs to modules
representing system states in order to select among their stable models.
Unlike that,
the predecessors \iclingo{} and \oclingo{} of \clingo{}
always assigned input atoms to false,
so that the addition of rules was necessary to
accomplish switching truth values as in Line~14 and~16 above.
However,
for a well-defined semantics,
\clingo{} like its predecessors builds on the assumption that
modules resulting from subprogram instantiation are compositional,
which essentially requires definitions of atoms and
mutual positive dependencies to be local to evolving ground programs
(cf.~\cite{gekakaosscth08a}).


\section{Using multi-shot solving in practice}\label{sec:practice}

After fixing the formal foundations of multi-shot solving and sketching the corresponding \clingo{} constructs,
let us now illustrate their usage in several case studies.


\subsection{Incremental ASP solving}
\label{sec:incremental}

As mentioned, the new \clingo{} series fully supersedes its special-purpose predecessors \iclingo{} and \oclingo.
To illustrate this,
we give below a \python\ implementation of \iclingo's control loop,
corresponding to the one shipped with \clingo.%
\footnote{The source code is also available in \clingo's examples.
The code for incremental solving is in \url{https://github.com/potassco/clingo/tree/master/examples/clingo/iclingo} and the Towers of Hanoi example in \url{https://github.com/potassco/clingo/tree/master/examples/gringo/toh}.}
Roughly speaking,
\iclingo{} offers a step-oriented, incremental approach to ASP that avoids redundancies by gradually processing the extensions to a problem
rather than repeatedly re-processing the entire extended problem (as in iterative deepening search).
To this end, a program is partitioned into a
base part, describing static knowledge independent of the step parameter~\lstinline{t},
a cumulative part, capturing knowledge accumulating with increasing~\lstinline{t},
and
a volatile part specific for each value of~\lstinline{t}.
These parts are delineated in \iclingo{} by the special-purpose directives \lstinline{#base}, `\lstinline{#cumulative t}', and `\lstinline{#volatile t}'.
In \clingo, all three parts are captured by \lstinline{#program} declarations
along with \lstinline{#external} atoms for handling volatile rules.
More precisely,
our exemplar relies upon subprograms named \lstinline{base}, \lstinline{step}, and \lstinline{check}
along with external atoms of form \lstinline{query(t)}.

We illustrate this approach by adapting the Towers of Hanoi encoding by~\citeN{gekakasc12a} in Listing~\ref{fig:toh:enc}.
\lstinputlisting[float=tb,literate={\%\%}{}{0},escapeinside={\#(}{\#)},language=clingo,caption={Towers of Hanoi incremental encoding (\lstinline{tohE.lp})},label={fig:toh:enc}]{programs/tohE.lp}
\lstinputlisting[float=tb,literate={\%\%}{}{0},escapeinside={\#(}{\#)},language=clingo,caption={Towers of Hanoi instance (\lstinline{tohI.lp})},label={fig:toh:ins}]{programs/tohI.lp}
%
The problem instance in Listing~\ref{fig:toh:ins} as well as Line~\ref{fig:toh:enc:static} in~\ref{fig:toh:enc} constitute static knowledge
and thus belong to the \lstinline{base} program.
The transition function is described in the subprogram \lstinline{step} in Line~\ref{fig:toh:enc:step:begin}--\ref{fig:toh:enc:step:end}
of Listing~\ref{fig:toh:enc}.
Finally, the query is expressed in Line~\ref{fig:toh:enc:goal};
its volatility is realized by making the actual goal condition `\lstinline{goal_on(D,P), not on(D,P,t)}'
subject to the truth assignment to the external atom \lstinline{query(t)}.
For convenience,
this atom is predefined in Line~\ref{fig:iclingo:python:query} in Listing~\ref{fig:iclingo:python} as part of the \lstinline{check} program (cf.~Line~\ref{fig:iclingo:python:check}).
Hence, subprogram \lstinline{check} consists of a user- and predefined part.
Since the encoding of the Towers of Hanoi problem is fairly standard, we refer the interested reader to the literature~\cite{gekakasc12a}
and devote ourselves in the sequel to its solution by means of multi-shot solving.

Grounding and solving of the program in Listing~\ref{fig:toh:ins} and~\ref{fig:toh:enc} is controlled by the \python{}
script in Listing~\ref{fig:iclingo:python}.
%
\lstinputlisting[float=tbp,literate={\%\%}{}{0},escapeinside={\#(}{\#)},language=clingo,basicstyle=\ttfamily\small,caption={\python\ script implementing \iclingo{} functionality in \clingo\ (\lstinline{inc.lp})},label=fig:iclingo:python]{programs/inc.lp}
%
Lines~\ref{fig:iclingo:python:const:begin}--\ref{fig:iclingo:python:const:end} fix the values of the constants \lstinline{imin}, \lstinline{imax}, and \lstinline{istop}.
In fact, the setting in Line~\ref{fig:iclingo:python:imin} and~\ref{fig:iclingo:python:istop} relieves us from adding
`\lstinline{-c imin=0 -c istop="SAT"}'
when calling \clingo.
All three constants mimic command line options in \iclingo.
\lstinline{imin} and \lstinline{imax} prescribe a least and largest number of iterations, respectively;
\lstinline{istop} gives a termination criterion.
The initial values of variables \lstinline{step} and \lstinline{ret} are set in Line~\ref{fig:iclingo:python:vars}.
The value of \lstinline{step} is used to instantiate the parametrized subprograms
and \lstinline{ret} comprises the solving result.
Together, the previous five variables control the loop in Lines~\ref{fig:iclingo:python:loop:begin}--\ref{fig:iclingo:python:loop:end}.

The subprograms grounded at each iteration are accumulated in the list \lstinline{parts}.
Each of its entries is a pair consisting of a subprogram name along with its list of actual parameters.
In the very first iteration, the subprograms \lstinline{base} and \lstinline{check(0)} are grounded.
Note that this involves the declaration of the external atom \lstinline{query(0)} and the assignment of its default value false.
The latter is changed in Line~\ref{fig:iclingo:python:assign:query} to true in order to activate the actual query.
The \lstinline{solve} call in Line~\ref{fig:iclingo:python:solve} then amounts to checking whether the goal situation is already satisfied in the initial state.
As well, the value of \lstinline{step} is incremented to \lstinline{1}.

As long as the termination condition remains unfulfilled,
each following iteration takes the respective value of variable \lstinline{step}
to replace the parameter in subprograms \lstinline{step} and \lstinline{check}
during grounding.
In addition,
the current external atom \lstinline{query(t)} is set to true,
while the previous one is permanently set to false.
This disables the corresponding instance of the integrity constraint in Line~\ref{fig:toh:enc:goal} of Listing~\ref{fig:toh:enc} before it is replaced in the next iteration.
In this way,
the query condition only applies to the current horizon.

An interesting feature is given in Line~\ref{fig:iclingo:python:cleanup}.
As its name suggests, this function cleans up atom bases used during grounding.
That is, whenever the truth value of an atom is ultimately determined by the solver,
it is communicated to the grounder where it can be used for simplifications
in subsequent grounding steps.
The call in Line~\ref{fig:iclingo:python:cleanup} effectively removes atoms from the current atom base
(and marks some atoms as facts, which might lead to further simplifications).

The result of each call to \lstinline{solve} is printed by \clingo.
In our example, the solver is called 16 times before a plan of length 15 is found:
%
\lstinputlisting[mathescape=true,numbers=none,basicstyle=\ttfamily\small,caption={Running the programs in Listing~\ref{fig:toh:enc} and~\ref{fig:toh:ins} with \clingo},label={lst:toh:run}]{programs/toh.txt}

For a complement,
we give in Figure~\ref{fig:toh:operations} a trace of the Python script in terms of the operations defined in Section~\ref{sec:semantics:operational}.
\begin{figure}
  \centering
  \[
  \begin{array}{llr}
    \mathtt{step}&\mathrm{Operation}                                         & \mathrm{Line} \\
                 &\mathit{create}(\mathtt{TOH})                              & 8             \\
                0&\mathit{ground}(((\mathtt{base},()),(\mathtt{check},(0)))) &27             \\
                 &\mathit{assignExternal}(\mathtt{query(0)},\mathtt{t})      &28             \\
                 &\mathit{solve}((\emptyset,\emptyset))                      &29             \\
                1&\mathit{releaseExternal}(\mathtt{query(0)})                &22             \\
                 &\mathit{ground}(((\mathtt{step},(1)),(\mathtt{check},(1))))&27             \\
                 &\mathit{assignExternal}(\mathtt{query(1)},\mathtt{t})      &28             \\
                 &\mathit{solve}((\emptyset,\emptyset))                      &29             \\
                 &\qquad\vdots                                               &               \\
                k&\mathit{releaseExternal}(\mathtt{query(k{-}1)})            &22             \\
                 &\mathit{ground}(((\mathtt{step},(k)),(\mathtt{check},(k))))&27             \\
                 &\mathit{assignExternal}(\mathtt{query(k)},\mathtt{t})      &28             \\
                 &\mathit{solve}((\emptyset,\emptyset))                      &29
  \end{array}
  \]
  \caption{Trace of Listing~\ref{lst:toh:run} in terms of operations}
  \label{fig:toh:operations}
\end{figure}
%
We let \texttt{TOH} stand for the combination of programs \lstinline{tohI.lp} and \lstinline{tohE.lp} in Listing~\ref{fig:toh:enc} and~\ref{fig:toh:ins}.
Without setting any constants in Listing~\ref{fig:iclingo:python},
the sequence stops at the first $k \geq 0$ for which $\mathit{solve}((\emptyset,\emptyset))$ yields a stable model.
Each $k$-th invocation of $\mathit{solve}((\emptyset,\emptyset))$ is applied to a system state consisting of
\begin{enumerate}
\item the non-ground programs $R(\mathtt{base})$, $R(\mathtt{check})$, and $R(\mathtt{step})$,
\item the module obtained by
  \begin{enumerate}
  \item composing the ground subprograms of \texttt{base}, \texttt{check(0)},
    \par \texttt{check(l)}, and \texttt{step(l)} for $1\leq \texttt{l}\leq k$,
  \end{enumerate}
  having
  \begin{enumerate}\addtocounter{enumii}{1}
  \item the single input atom \texttt{query(k)},
    and
  \item output atoms stemming from
    \begin{enumerate}
    \item all ground rule heads in the subprograms and
    \item all released variables \texttt{query(l)} for $1\leq l\leq k$,
    \end{enumerate}
  \end{enumerate}
  and
\item a partial assignment mapping \texttt{query(k)} to true. 
\end{enumerate}
Note that all released atoms \texttt{query(l)} are undefined and set to false under stable models semantics.
Hence, among all instances of the integrity constraint in Line~\ref{fig:toh:enc:goal} in Listing~\ref{fig:toh:enc},
only the $k$-th one is effective.


\subsection{\texorpdfstring{$n$}{n}-Queens problem}
\label{sec:queens}

In this section, we consider the well-known $n$-Queens problem.
However, in contrast to the classical setting,
we aim at solving series of problems of increasing size.

\subsubsection{Encoding incremental cardinality constraints }
\label{sec:cardinality:constraints}

The $n$-Queens problem can be expressed in terms of cardinality constraints,
that is,
there is exactly one queen per row and column, and
there is at most one queen per diagonal.
Hence,
for addressing this problem incrementally,
we have to encode such constraints in an incremental way.%
\footnote{The source code can also be found in \clingo's examples: \url{https://github.com/potassco/clingo/tree/master/examples/clingo/incqueens}}
To this end,
let us elaborate our encoding technique in a slightly simpler setting.
Let $1,\dots,n$ be a sequence of adjacent positions, such as a row, column, or diagonal,
and let $q_1, \dots, q_n$ be atoms indicating whether a queen is on position $1,\dots, n$
of such a sequence, respectively.%
\footnote{Such sequences are successively build for rows, columns, and diagonals via predicate \texttt{target}/6 in Listing~\ref{lst:queens}.}

We begin with a simple way to encode at-most-one constraints for an increasing set of positions $n$.
The corresponding program, $Q_i^{\leq1}$, is given in Listing~\ref{lst:at:most:one}.
\begin{lstlisting}[label=lst:at:most:one,caption={Incremental encoding of at-most-one constraints},language=clingo]
#program step(i).
{ q(i) }.
a(i) :- q(i-1).
a(i) :- a(i-1).
:- a(i), q(i).
\end{lstlisting}
We use \texttt{q(i)} to represent $q_i$ as well as auxiliary variables of form \texttt{a(i)} 
to indicate that position \texttt{i} is attacked by a queen on a position $j\leq i$.
The idea is to join the instantiation of $Q_n^{\leq1}$ with the previous program modules whenever a new position $n$ is added.
With this addition a queen may be put on position $n$ in Line~2.
Position $n$ is attacked if either the directly adjacent position or another connected position is occupied by a queen (Line~3 and~4).
Finally, a queen must not be placed on an attacked position (Line~5).

Let us make this precise by means of the operations introduced in Section~\ref{sec:semantics:operational}.
At first, $\mathit{create}(Q_i^{\leq1})$ yields a state comprising $R(\mathtt{step})$, an empty module, and an empty assignment.
Applying $\mathit{ground}((\mathtt{step}, (1)))$ to the resulting state yields the module
\begin{align*}
\mathbb{P}_1 = (\{\{q_1\}\leftarrow\},\emptyset,\{q_1\})
\end{align*}
and leaves $R(\mathtt{step})$ as well as the assignment intact.%
\footnote{Since neither is changed in the sequel, we concentrate on the evolution of the program module.}
Note that grounding $Q_1^{\leq1}$ relative to the empty atom base produces no instances of the rules in lines~3 to~5.

Applying $\mathit{ground}((\mathtt{step}, (2)))$ to the resulting state yields the module
\begin{align*}
  \mathbb{P}_2 
  &= 
  (\{\{q_1\}\leftarrow\},\emptyset,\{q_1\}) 
  \sqcup
  \left(\left\{\begin{aligned}
                 \{q_2\}&\leftarrow\\
                   a_2  &\leftarrow q_1\\
                        &\leftarrow a_2, q_2
               \end{aligned}
        \right\},
        \{q_1\},\{q_2,a_2\}
  \right)
                                     \\
  &=
  \left(\left\{\begin{aligned}
                 \{q_1\}&\leftarrow\\
                 \{q_2\}&\leftarrow\\
                   a_2  &\leftarrow q_1\\
                        &\leftarrow a_2, q_2
               \end{aligned}
        \right\},
        \emptyset,\{q_1,q_2,a_2\}
  \right)
\end{align*}
Grounding $Q_2^{\leq1}$ relative to the output atoms $\{q_1\}$ of $\mathbb{P}_1$ produces no instance of the rule in Line~4.

Each subsequent application of $\mathit{ground}((\mathtt{step},(n)))$ for $n\geq3$ yields the ground program in~\eqref{prg:queens:at:most:one:n}.
\begin{align}
  \{ q_1 \} &\leftarrow       &     &                   &     &                   & &                      \label{prg:queens:at:most:one:i} \\
  \{ q_2 \} &\leftarrow       & a_2 &\leftarrow q_1     &     &                   & & \leftarrow a_2, q_2  \label{prg:queens:at:most:one:ii} \\
  \{ q_3 \} &\leftarrow       & a_3 &\leftarrow q_2     & a_3 &\leftarrow a_2     & & \leftarrow a_3, q_3  \label{prg:queens:at:most:one:iii} \\
  {}        &\vdots           &     & \vdots            &     &\vdots             & & \vdots               \nonumber \\
  \{ q_n \} &\leftarrow       & a_n &\leftarrow q_{n-1} & a_n &\leftarrow a_{n-1} & & \leftarrow a_{n}, q_{n}\label{prg:queens:at:most:one:n}
\end{align}
Accordingly,
the corresponding join 
\(
\mathbb{P}_n = \mathbb{Q}_1^{\leq1} \sqcup \dots \sqcup \mathbb{Q}_n^{\leq1}
\)
comprises the union of the programs in~\eqref{prg:queens:at:most:one:i}, \eqref{prg:queens:at:most:one:ii}, 
and \eqref{prg:queens:at:most:one:iii} to~\eqref{prg:queens:at:most:one:n};
it has no inputs but outputs $\{q_1,\dots,q_n\} \cup \{a_2,\dots,a_n\}$.
Then,
$X$ is a stable model of $\mathbb{P}_n$ iff $X=\emptyset$ or $X=\{q_i, a_{i+1}, \dots, a_n\}$ for some $1 \leq i \leq n$.
%
This shows that $\mathbb{P}_n$ captures the set of all subsets of $\{q_1, \dots, q_n \}$ containing at most one $q_i$.

Let us now turn to an incremental encoding delineating all singletons in $\{q_1, \dots, q_n \}$.
Unlike above, the program, viz.\ $Q_i^{=1}$, in Listing~\ref{lst:exactly:one} uses external atoms to capture attacks from prospective board positions.
\begin{lstlisting}[label=lst:exactly:one,caption={Incremental encoding of exactly-one constraints},language=clingo]
#program step(i).
#external a(i).
{ q(i) }.
a(i-1) :- q(i).
a(i-1) :- a(i).
       :- a(i), q(i).
       :- not a(1), not q(1), i=1.
\end{lstlisting}
As in Listing~\ref{lst:at:most:one},
each instantiation of $Q_i^{=1}$ allows for placing a queen at position \texttt{i} or not.
Unlike there, however, attacks are now propagated in the opposite direction,
either by placing a queen at position \texttt{i} or an attack from a position beyond $i$.
The latter is indicated by the external atom \texttt{a(i)}, which becomes defined in $Q_{i+1}^{=1}$.
As in Listing~\ref{lst:at:most:one}, Line~6 denies an installation of a queen at \texttt{i} while it is attacked.

Applying $\mathit{ground}((\mathtt{step}, (1)))$ to the state resulting from $\mathit{create}(Q_i^{=1})$ yields the module
\begin{align*}
\mathbb{P}_1 = (\{\{q_1\}\leftarrow,\ \leftarrow a_1,q_1,\ \leftarrow \naf{a_1},\naf{q_1}\},\{a_1\},\{q_1\})
\end{align*}
No ground rule was produced from Line~4 and~5.
Note that the external declaration led to the input atom $a_1$.
Each subsequent application of $\mathit{ground}((\mathtt{step},(n)))$ for $n\geq2$ yields the ground program in~\eqref{prg:queens:exactly:one:n}.
\begin{align}\label{prg:queens:exactly:one:i}
  \{ q_1 \} &\leftarrow &         &                 &         &               & & \leftarrow a_1, q_1      & & \leftarrow {} \sim a_1, {} \sim q_1 \\
  \{ q_2 \} &\leftarrow & a_1     &\leftarrow q_2   & a_1     &\leftarrow a_2 & & \leftarrow a_2, q_2      & & \label{prg:queens:exactly:one:ii}   \\
  \{ q_3 \} &\leftarrow & a_2     &\leftarrow q_3   & a_2     &\leftarrow a_3 & & \leftarrow a_3, q_3      & &                           \nonumber \\
  {}        &\vdots     &         & \vdots          &         &\vdots         & & \vdots                   & &                           \nonumber \\
  \{ q_n \} &\leftarrow & a_{n-1} &\leftarrow q_{n} & a_{n-1} &\leftarrow a_n & & \leftarrow a_{n}, q_{n}  & &\label{prg:queens:exactly:one:n}
\end{align}
Accordingly,
the corresponding join 
\(
\mathbb{P}_n = \mathbb{Q}_1^{=1} \sqcup \dots \sqcup \mathbb{Q}_n^{=1}
\)
comprises the union of the programs in~\eqref{prg:queens:exactly:one:i}, and \eqref{prg:queens:exactly:one:ii} to~\eqref{prg:queens:exactly:one:n};
it has input $\{a_n\}$ and outputs $\{q_1,\dots,q_n\} \cup \{a_1,\dots,a_{n-1}\}$.
%
Then,
$X$ is a stable model of $\mathbb{P}_n$ iff $X=\{a_{1}, \dots, a_{i-1}, q_i\}$ for some $1 \leq i \leq n$.
%
That is, the stable models of $\mathbb{P}_n$ are in a one-to-one correspondence to one element subsets of $\{q_1, \dots, q_n \}$.

Interestingly,
the last encoding can be turned into one for an at-most one-constraint by omitting Line~7,
and into an at-least-one constraint by removing Line~6.
Note that the integrity constraint in Line 7 cannot simply be added to the encoding in Listing~\ref{lst:at:most:one} because it encodes the attack direction the other way round.
One could add `\texttt{:- not a(i), not q(i), query(i).}' subject to the query atom.
But this has the disadvantage that the constraint would have to be retracted whenever the query atom becomes permanently false.
The encoding in Listing~\ref{lst:exactly:one} ensures that all constraints in the solver (including learnt constraints) can be reused in successive solving steps.

Finally, note that by using \texttt{step}, both encodings can by used with the built-in incremental mode,
described in the previous section.

\subsubsection{An incremental encoding}
\label{sec:incremental:queens}

In what follows,
we use the above encoding schemes to model the $n$-Queens problem.
As mentioned,
we aim at solving series of differently sized boards.
Given that larger boards subsume smaller ones, an evolving problem
specification can reuse ground rules from previous~\clingo{} states
when the size increases.
To this end, we view the increment of~$n$ to~$n+1$ as the addition of
one more row and column.
The basic idea of our incremental encoding is to interconnect the previous and added
board cells so that each of them has a unique predecessor or successor
in either of the four attack directions of queens, respectively.
Each such connection scheme amounts to a sequence of adjacent positions,
as used above in Section~\ref{sec:cardinality:constraints}.
The four schemes obtained in our setting 
are depicted in Figure~\ref{fig:queens}\subref{fig:queens:b}--\subref{fig:queens:v}.
Direct links are indicated by arrows to target cells with a (white or black) circle.
Paths represent the respective ways of attack across several board extensions.
They concretise the sequences discussed in the previous section.
Attacks from prospective board positions are indicated by black circles
(and implemented as external atoms),
the ones from the board by white ones.
Figure~\ref{fig:queens}\subref{fig:queens:b} illustrates the scheme for backward diagonals.
It connects cells of the uppermost previous row to corresponding attacked cells in a new column;
the latter are in turn linked to the new cells they attack in the row above.
For instance,
position $(2,2)$ is linked to $(3,1)$ which is itself linked to $(1,3)$.
Note that this scheme ensures that,
starting from the middle of any backward diagonal,
all cells that are successively added (and belong to the same backward diagonal) are on a path.
Such a path follows the board evolution and is directed from
previous to newly added cells,
where white circles in arrow targets indicate the
presence of attacking cells on the board when their respective target cells are added.
This orientation is analogous to the above encoding of at-most-one constraints.
The schemes for attacks along forward diagonals, horizontal rows, and
vertical columns are shown in Figure~\ref{fig:queens}\subref{fig:queens:f},
\subref{fig:queens:h}, and~\subref{fig:queens:v}.
Notably, the latter two (partially) link new cells to previous ones,
in which case the targets are highlighted by black circles.
At the level of modules,
links from cells that may be added later on give rise to input atoms.

As mentioned,
the $n$-Queens problem can be expressed in terms of cardinality constraints,
requiring that there is 
exactly one queen per row and column and
at most one per diagonal.
The idea is to use the incremental encodings from Section~\ref{sec:cardinality:constraints} in combination with the four connection schemes
depicted in Figure~\ref{fig:queens}\subref{fig:queens:b}--\subref{fig:queens:v}.
The first encoding, capturing at-most-one constraints, is used for each diagonal,
and the second one, handling exactly-one constraints, for each row and column.
The resulting \clingo{} encoding is given in Listing~\ref{lst:queens}.
Let us first outline its structure in relation to the ideas presented in Section~\ref{sec:cardinality:constraints}.
The rules in lines~7-13 gather linked positions for at-most- and exactly-one constraints in the predicate \texttt{target}/6.
The choice rule in Line~15 places queens on the new column and row.
The rules in lines~17 and~18 determine which cells are attacked.
The rule in line~20 ensures the at-most-one condition for rows, columns, and diagonals.
And finally the rules in lines 22-23 ensure the at-least-one condition for rows and columns.

Let us make this precise in what follows.
%
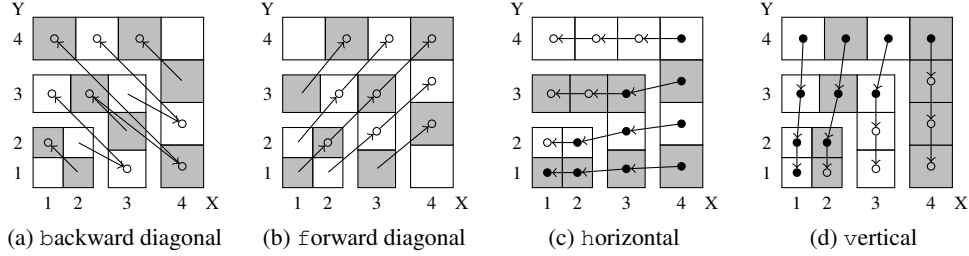
\begin{figure}[t]
  \centering
  \captionsetup[subfigure]{justification=centering,subrefformat=parens}
  \begin{subfigure}[b]{.24\linewidth}
    \centering
\begin{tikzpicture}[
      x=.2cm,y=.2cm,
      axis/.style={inner sep=0,font=\scriptsize},
      tile1/.style={rectangle,inner sep=0,minimum size=.4cm,draw=black},
      tile2/.style={rectangle,inner sep=0,minimum size=.5cm,draw=black},
      tile3/.style={rectangle,inner sep=0,minimum size=.5666666666cm,draw=black},
      dot/.style={ellipse,inner sep=0,minimum size=.1cm,draw=black},
      empty/.style={inner sep=0,minimum size=0}, 
      edge/.style={->}]
  \draw
    (1,1) 
      node[tile1] { }
    +(-2,0)
      node[axis] { 1 }
    +(0,-2)
      node[axis] { 1 }
    ++(2,0) 
      node[tile1,fill=lightgray] { }
      node[empty] (n21) {  }
    +(0,-2)
      node[axis] { 2 }
    ++(0,2) 
      node[empty] (n22) {  }
      node[tile1] { }
    ++(-2,0) 
      node[tile1,fill=lightgray] { }
      node[dot] (n12) { }
    +(-2,0)
      node[axis] { 2 }
    ;
  \draw
    (6.25,1.25)
      node[dot] (n31) { }
      node[tile2] { }
    +(0,-2.25)
      node[axis] { 3 }
    ++(0,2.5)
      node[tile2,fill=lightgray] { }
      node[empty] (n32) { }
    ++(0,2.5)
      node[empty] (n33) { }
      node[tile2] { }
    ++(-2.5,0)
      node[tile2,fill=lightgray] { }
      node[dot] (n23) { }
    ++(-2.5,0)
      node[dot] (n13) { }
      node[tile2] { }
    +(-2.25,0)
      node[axis] { 3 }
    ;
  \draw
    (9.9166666666,1.4166666666)
      node[tile3,fill=lightgray] { }
      node[dot] (n41) { }
    +(0,-2.4166666666)
      node[axis] { 4 }
    +(2,-2.4166666666)
      node[axis] { X }
    ++(0,2.8333333333)
      node[dot] (n42) { }
      node[tile3] { }
    ++(0,2.8333333333)
      node[tile3,fill=lightgray] { }
      node[empty] (n43) { }
    ++(0,2.8333333333)
      node[empty] (n44) { }
      node[tile3] { }
    ++(-2.8333333333,0)
      node[tile3,fill=lightgray] { }
      node[dot] (n34) { }
    ++(-2.8333333333,0)
      node[dot] (n24) { }
      node[tile3] { }
    ++(-2.8333333333,0)
      node[tile3,fill=lightgray] { }
      node[dot] (n14) { }
    +(-2.4166666666,0)
      node[axis] { 4 }
    +(-2.4166666666,2)
      node[axis] { Y }
    ;
  \path[edge]
    (n21) edge (n12)

    (n22) edge (n31)
    (n31) edge (n13)
    (n32) edge (n23)

    (n23) edge (n41)
    (n41) edge (n14)
    (n33) edge (n42)
    (n42) edge (n24)
    (n43) edge (n34);
\end{tikzpicture}
\caption{\lstinline{b}ackward diagonal}
\label{fig:queens:b}
\end{subfigure}
\begin{subfigure}[b]{.24\linewidth}
\centering
\begin{tikzpicture}[
      x=.2cm,y=.2cm,
      axis/.style={inner sep=0,font=\scriptsize},
      tile1/.style={rectangle,inner sep=0,minimum size=.4cm,draw=black},
      tile2/.style={rectangle,inner sep=0,minimum size=.5cm,draw=black},
      tile3/.style={rectangle,inner sep=0,minimum size=.5666666666cm,draw=black},
      dot/.style={ellipse,inner sep=0,minimum size=.1cm,draw=black},
      empty/.style={inner sep=0,minimum size=0}, 
      edge/.style={->}]
  \draw
    (1,1) 
      node[tile1,fill=lightgray] { }
      node[empty] (n11) {  }
    +(-2,0)
      node[axis] { 1 }
    +(0,-2)
      node[axis] { 1 }
    ++(2,0) 
      node[empty] (n21) {  }
      node[tile1] { }
    +(0,-2)
      node[axis] { 2 }
    ++(0,2) 
      node[tile1,fill=lightgray] { }
      node[dot] (n22) {  }
    ++(-2,0) 
      node[empty] (n12) {  }
      node[tile1] { }
    +(-2,0)
      node[axis] { 2 }
    ;
  \draw
    (6.25,1.25)
      node[tile2,fill=lightgray] { }
      node[empty] (n31) { }
    +(0,-2.25)
      node[axis] { 3 }
    ++(0,2.5)
      node[dot] (n32) { }
      node[tile2] { }
    ++(0,2.5)
      node[tile2,fill=lightgray] { }
      node[dot] (n33) { }
    ++(-2.5,0)
      node[dot] (n23) { }
      node[tile2] { }
    ++(-2.5,0)
      node[tile2,fill=lightgray] { }
      node[empty] (n13) { }
    +(-2.25,0)
      node[axis] { 3 }
;
  \draw
    (9.9166666666,1.4166666666)
      node[empty] (n41) { }
      node[tile3] { }
    +(0,-2.4166666666)
      node[axis] { 4 }
    +(2,-2.4166666666)
      node[axis] { X }
    ++(0,2.8333333333)
      node[tile3,fill=lightgray] { }
      node[dot] (n42) { }
    ++(0,2.8333333333)
      node[dot] (n43) { }
      node[tile3] { }
    ++(0,2.8333333333)
      node[tile3,fill=lightgray] { }
      node[dot] (n44) { }
    ++(-2.8333333333,0)
      node[dot] (n34) { }
      node[tile3] { }
    ++(-2.8333333333,0)
      node[tile3,fill=lightgray] { }
      node[dot] (n24) { }
    ++(-2.8333333333,0)
      node[empty] (n14) { }
      node[tile3] { }
    +(-2.4166666666,0)
      node[axis] { 4 }
    +(-2.4166666666,2)
      node[axis] { Y }
    ;
  \path[edge]
    (n11) edge (n22)

    (n12) edge (n23)
    (n22) edge (n33)
    (n21) edge (n32)

    (n13) edge (n24)
    (n23) edge (n34)
    (n31) edge (n42)
    (n32) edge (n43)
    (n33) edge (n44);

\end{tikzpicture}
\caption{\lstinline{f}orward diagonal}
\label{fig:queens:f}
\end{subfigure}
\begin{subfigure}[b]{.24\linewidth}
\centering
\begin{tikzpicture}[
      x=.2cm,y=.2cm,
      axis/.style={inner sep=0,font=\scriptsize},
      tile1/.style={rectangle,inner sep=0,minimum size=.4cm,draw=black},
      tile2/.style={rectangle,inner sep=0,minimum size=.5cm,draw=black},
      tile3/.style={rectangle,inner sep=0,minimum size=.5666666666cm,draw=black},
      dot/.style={ellipse,inner sep=0,minimum size=.1cm,draw=black},
      ext/.style={ellipse,inner sep=0,minimum size=.1cm,draw=black,fill},
      empty/.style={inner sep=0,minimum size=0}, 
      edge/.style={->}]
  \draw
    (1,1) 
      node[tile1,fill=lightgray] { }
      node[ext] (n11) {  }
    +(-2,0)
      node[axis] { 1 }
    +(0,-2)
      node[axis] { 1 }
    ++(2,0) 
      node[tile1,fill=lightgray] { }
      node[ext] (n21) {  }
    +(0,-2)
      node[axis] { 2 }
    ++(0,2) 
      node[ext] (n22) {  }
      node[tile1] { }
    ++(-2,0) 
      node[dot] (n12) {  }
      node[tile1] { }
    +(-2,0)
      node[axis] { 2 }
    ;
  \draw
    (6.25,1.25)
      node[tile2,fill=lightgray] { }
      node[ext] (n31) { }
    +(0,-2.25)
      node[axis] { 3 }
    ++(0,2.5)
      node[ext] (n32) { }
      node[tile2] { }
    ++(0,2.5)
      node[tile2,fill=lightgray] { }
      node[ext] (n33) { }
    ++(-2.5,0)
      node[tile2,fill=lightgray] { }
      node[dot] (n23) { }
    ++(-2.5,0)
      node[tile2,fill=lightgray] { }
      node[dot] (n13) { }
    +(-2.25,0)
      node[axis] { 3 }
    ;
  \draw
    (9.9166666666,1.4166666666)
      node[tile3,fill=lightgray] { }
      node[ext] (n41) { }
    +(0,-2.4166666666)
      node[axis] { 4 }
    +(2,-2.4166666666)
      node[axis] { X }
    ++(0,2.8333333333)
      node[ext] (n42) { }
      node[tile3] { }
    ++(0,2.8333333333)
      node[tile3,fill=lightgray] { }
      node[ext] (n43) { }
    ++(0,2.8333333333)
      node[ext] (n44) { }
      node[tile3] { }
    ++(-2.8333333333,0)
      node[dot] (n34) { }
      node[tile3] { }
    ++(-2.8333333333,0)
      node[dot] (n24) { }
      node[tile3] { }
    ++(-2.8333333333,0)
      node[dot] (n14) { }
      node[tile3] { }
    +(-2.4166666666,0)
      node[axis] { 4 }
    +(-2.4166666666,2)
      node[axis] { Y }
    ;
  \path[edge]
    (n44) edge (n34)
    (n34) edge (n24)
    (n24) edge (n14)

    (n43) edge (n33)
    (n33) edge (n23)
    (n23) edge (n13)

    (n42) edge (n32)
    (n32) edge (n22)
    (n22) edge (n12)

    (n41) edge (n31)
    (n31) edge (n21)
    (n21) edge (n11)
    ;

\end{tikzpicture}
\caption{\lstinline{h}orizontal}
\label{fig:queens:h}
\end{subfigure}
\begin{subfigure}[b]{.24\linewidth}
\centering
\begin{tikzpicture}[
      x=.2cm,y=.2cm,
      axis/.style={inner sep=0,font=\scriptsize},
      tile1/.style={rectangle,inner sep=0,minimum size=.4cm,draw=black},
      tile2/.style={rectangle,inner sep=0,minimum size=.5cm,draw=black},
      tile3/.style={rectangle,inner sep=0,minimum size=.5666666666cm,draw=black},
      dot/.style={ellipse,inner sep=0,minimum size=.1cm,draw=black},
      ext/.style={ellipse,inner sep=0,minimum size=.1cm,draw=black,fill},
      empty/.style={inner sep=0,minimum size=0}, 
      edge/.style={->}]
  \draw
    (1,1) 
      node[ext] (n11) {  }
      node[tile1] { }
    +(-2,0)
      node[axis] { 1 }
    +(0,-2)
      node[axis] { 1 }
    ++(2,0) 
      node[tile1,fill=lightgray] { }
      node[dot] (n21) {  }
    +(0,-2)
      node[axis] { 2 }
    ++(0,2) 
      node[tile1,fill=lightgray] { }
      node[ext] (n22) {  }
    ++(-2,0) 
      node[ext] (n12) {  }
      node[tile1] { }
    +(-2,0)
      node[axis] { 2 }
    ;
  \draw
    (6.25,1.25)
      node[dot] (n31) { }
      node[tile2] { }
    +(0,-2.25)
      node[axis] { 3 }
    ++(0,2.5)
      node[dot] (n32) { }
      node[tile2] { }
    ++(0,2.5)
      node[ext] (n33) { }
      node[tile2] { }
    ++(-2.5,0)
      node[tile2,fill=lightgray] { }
      node[ext] (n23) { }
    ++(-2.5,0)
      node[ext] (n13) { }
      node[tile2] { }
    +(-2.25,0)
      node[axis] { 3 }
    ;
  \draw
    (9.9166666666,1.4166666666)
      node[tile3,fill=lightgray] { }
      node[dot] (n41) { }
    +(0,-2.4166666666)
      node[axis] { 4 }
    +(2,-2.4166666666)
      node[axis] { X }
    ++(0,2.8333333333)
      node[tile3,fill=lightgray] { }
      node[dot] (n42) { }
    ++(0,2.8333333333)
      node[tile3,fill=lightgray] { }
      node[dot] (n43) { }
    ++(0,2.8333333333)
      node[tile3,fill=lightgray] { }
      node[ext] (n44) { }
    ++(-2.8333333333,0)
      node[ext] (n34) { }
      node[tile3] { }
    ++(-2.8333333333,0)
      node[tile3,fill=lightgray] { }
      node[ext] (n24) { }
    ++(-2.8333333333,0)
      node[ext] (n14) { }
      node[tile3] { }
    +(-2.4166666666,0)
      node[axis] { 4 }
    +(-2.4166666666,2)
      node[axis] { Y }
    ;
  \path[edge]
    (n44) edge (n43)
    (n43) edge (n42)
    (n42) edge (n41)

    (n34) edge (n33)
    (n33) edge (n32)
    (n32) edge (n31)
                   
    (n24) edge (n23)
    (n23) edge (n22)
    (n22) edge (n21)
                   
    (n14) edge (n13)
    (n13) edge (n12)
    (n12) edge (n11)
    ;

\end{tikzpicture}
\caption{\lstinline{v}ertical}
\label{fig:queens:v}
\end{subfigure}

\caption{Attack target links among cells of successive $n$-Queens boards up to size~$4$}
\label{fig:queens}
\end{figure}
%

%
\lstinputlisting[float=t,basicstyle=\ttfamily\small,linerange={1-35},caption={\clingo{} program for successive $n$-Queens solving (\lstinline{queens.lp})},label=lst:queens,language=clingo]{programs/queen.alt.lp}
%
After declaring \lstinline{queen/2} as the output predicate to be displayed,
the (sub)program \lstinline{board(n)} provides rules for extending a
board of size $n{-}1$ to~$n\geq 1$.
To this end, the \lstinline{#external} directives in Line~4 and~5 declare atoms
representing horizontal and vertical attacks on cells in the $n$-th column or row as inputs, respectively.
This is analogous to the use of external atoms in Listing~\ref{lst:exactly:one}.
Such atoms match the targets of arrows leading to cells with black circles
in Figure~\ref{fig:queens}\subref{fig:queens:h} and~\subref{fig:queens:v}.
For instance,
\lstinline{attack(2,1,h)} and \lstinline{attack(2,2,h)}
as well as
\lstinline{attack(1,2,v)} and \lstinline{attack(2,2,v)}
are the inputs to \lstinline{board(n}$/$\lstinline{2)}.
These external atoms express that cells at the horizontal and vertical borders
can become targets of attacks once the board is extended beyond size~$2$.
The instances of \lstinline{target(X,Y,X',Y',D,n)} specified in Line~7--13
provide links from cells $($\lstinline{X}$,$\lstinline{Y}$)$ to targets
$($\lstinline{X'}$,$\lstinline{Y'}$)$ along with directions~\lstinline{D}
leading from or to some newly added cell in the $n$-th column or row.
These instances correspond to arrows shown in
Figure~\ref{fig:queens}\subref{fig:queens:b}--\subref{fig:queens:v},
yet omitting those to border cells
such that \lstinline{attack(X',Y',D)} is declared as input in Line~4 and~5,
also highlighted by black circles
in Figure~\ref{fig:queens}\subref{fig:queens:h} and~\subref{fig:queens:v}.
Queens at newly added cells 
in the $n$-th column or row are enabled via the choice rule in Line~15,
and the links provided by instances of \lstinline{target(X,Y,X',Y',D,n)}
are utilized in Line~17 and~18 for deriving \lstinline{attack(X',Y',D)}
in view of a queen at cell $($\lstinline{X}$,$\lstinline{Y}$)$ or
any of its predecessors in the direction indicated by~\lstinline{D}.
For instance, the following ground rules,
simplified by dropping atoms of the domain predicate
\lstinline{target/6},
capture horizontal attacks
along the first row of a board of size~$4$:
\begin{lstlisting}[xleftmargin=2\parindent,numbers=none,basicstyle=\ttfamily\small,language=clingo]
attack(1,1,h) :- queen(2,1).   attack(1,1,h) :- attack(2,1,h).
attack(2,1,h) :- queen(3,1).   attack(2,1,h) :- attack(3,1,h).
attack(3,1,h) :- queen(4,1).   attack(3,1,h) :- attack(4,1,h).
\end{lstlisting}
Note that a queen represented by an instance of \lstinline{queen(X,1)},
for $2\leq{}$\lstinline{X}${}\leq 4$, propagates to cells on its left
via an implication chain deriving \lstinline{attack(X',1,h)} for every
$1\leq{}$\lstinline{X'}${}<{}$\lstinline{X}.
Moreover, the fact that the cell at $($\lstinline{4}$,$\lstinline{1}$)$
can be attacked from the right when increasing the board size is reflected
by the input atom \lstinline{attack(4,1,h)} declared in
\lstinline{board(n}$/$\lstinline{4)}.
Given that attacks are propagated analogously for other rows and directions,
instances of the integrity constraint in Line~20
prohibit a queen at cell $($\lstinline{X'}$,$\lstinline{Y'}$)$
whenever \lstinline{attack(X',Y',D)} signals
that some predecessor in either direction~\lstinline{D}
has a queen already.
The integrity constraints in Line~22 and~23 additionally require that each row
and column contains some queen.
In view of the orientations of horizontal and vertical links, as
displayed in Figure~\ref{fig:queens}\subref{fig:queens:h} and~\subref{fig:queens:v},
non-emptiness can be recognized from a queen at or an attack propagated to
the first position in a row or column, no matter to which size the board is
extended later on.
The incremental development of these rules is illustrated in Figure~\ref{fig:inc:row}
for the first row of a board (abbreviating predicates by their first letter).
\begin{sidewaysfigure}
\vspace{10cm}
\newcommand{\attack}{a}
\newcommand{\queen}{q}
\newcommand{\target}{t}
\newcommand{\external}{external}
\newcommand{\VS}{\\\phantom{()}}
{\footnotesize\renewcommand{\arraystretch}{3}%
\begin{tabular}{lllllll}
  Line~4                                                 & Line~12                          & Line~15                   & Line~17                                                                                                  & Line~18                                                                                                     & Line~20                                                                                                 & Line~22 \\
  \texttt{\shortstack[l]{\#\external\\~\attack(1,1,h).}} &                                  & \texttt{\shortstack[l]{\{\queen(1,1)\}.\VS}} &                                                                                                          &                                                                                                             &                                                                                                         & \texttt{\shortstack[l]{:-~not~\queen(1,1),~\\~~~not~\attack(1,1,h).}} \\
  \texttt{\shortstack[l]{\#\external\\~\attack(2,1,h).\VS}} & \texttt{\shortstack[l]{\target(2,1,1,1,h,2).\VS\VS}}   & \texttt{\shortstack[l]{\{\queen(2,1)\}.\VS\VS}} & \texttt{\shortstack[l]{\attack(1,1,h)~:-\\~{\color{lightgray}\target(2,1,1,1,h,2)},\\~\queen(2,1).}}     & \texttt{\shortstack[l]{\attack(1,1,h)~:-\\~{\color{lightgray}\target(2,1,1,1,h,2)},~\\\attack(2,1,h).}}     & \texttt{\shortstack[l]{:-~{\color{lightgray}\target(2,1,1,1,h,2)},\\~~~\attack(1,1,h),~\queen(1,1).\VS}}   & \\
  \vdots                                                 & \vdots                           & \vdots                    & \vdots                                                                                                   & \vdots                                                                                                      & \vdots                                                                                                  & \\
  \texttt{\shortstack[l]{\#\external\\~\attack(n,1,h).\VS}} & \texttt{\shortstack[l]{\target(n,1,n-1,1,h,n).\VS\VS}} & \texttt{\shortstack[l]{\{\queen(n,1)\}.\VS\VS}} & \texttt{\shortstack[l]{\attack(n-1,1,h)~:-\\~{\color{lightgray}\target(n,1,n-1,1,h,n)},\\~\queen(n,1).}} & \texttt{\shortstack[l]{\attack(n-1,1,h)~:-\\~{\color{lightgray}\target(n,1,n-1,1,h,n)},\\~\attack(n,1,h).}} & \texttt{\shortstack[l]{:-~{\color{lightgray}\target(n,1,n-1,1,h,n)},\\~~~\attack(n,1,h),~\queen(n,1).\VS}} & \\
\end{tabular}}  
\caption{Incremental development of rules regarding the first row of a board}
\label{fig:inc:row}
\end{sidewaysfigure}

It is interesting to observe that the rules generated at each step in lines~15 to~22
correspond to the ones in~\eqref{prg:queens:exactly:one:i} to~\eqref{prg:queens:exactly:one:n}.

Importantly, instantiations of \lstinline{board(n)}
with different integers for~\lstinline{n} define distinct
(ground) atoms, and the non-circularity of paths according to the
connection schemes in Figure~\ref{fig:queens}\subref{fig:queens:b}--\subref{fig:queens:v}
excludes
mutual positive dependencies (between instances of \lstinline{attack(X',Y',D)}).
Hence, the modules induced by different instantiations of \lstinline{board(n)}
are compositional and can be joined to successively increase the board size.

The \python{} \lstinline{main} routine in Line~25--35
of Listing~\ref{lst:queens} controls
the successive grounding and solving of a series of boards.
To this end, an ordered list of non-overlapping integer intervals is to be provided on
the command-line.
For example, \lstinline{ -c calls="list((1,1),(3,5),(8,9))"}
leads to successively solving the $n$-Queens problem for board sizes 1, 3, 4, 5, 8, and 9.
As long as the upper limit of some interval is yet unreached,
the board size is incremented by one in Line~30 and,
in view of the
\lstinline{ground} instruction in Line~31, taken as a term
for instantiating \lstinline{board(n)}. 
However, solving is only invoked in Line~34 if the current size lies
within the interval of interest.
Provided that this is the case for any particular~$n\geq 1$,
the sequence of issued \lstinline{ground} instructions makes sure
that the current \clingo{} state corresponds to the module
obtained by instantiating and joining the subprograms
\lstinline{board(n}$/i$\lstinline{)}, for $1\leq i\leq n$,
in increasing order.
Since all ground rules accumulated in such a state are relevant
(and not superseded by permanently falsifying the body) for
$n$-Queens solving,
there is no redundancy in instantiating
\lstinline{board(n}$/i$\lstinline{)} for each $1\leq i\leq n$,
even when the provided integer intervals do not include~$i$
and $i$-Queens solving is skipped.
For instance, \lstinline{ -c calls="list((1,1),(3,5),(8,9))"}
specifies a series of six boards to solve, while the
subprogram \lstinline{board(n)} is successively
instantiated with nine different terms for parameter~\lstinline{n}.
In fact, the \lstinline{main} routine in Line~25--35
automates the
assembly of subprograms needed to process an arbitrary
yet increasing sequence of board sizes.

The result of running the program in Listing~\ref{lst:queens} with \clingo{} is given in Listing~\ref{lst:queen:run}.
\lstinputlisting[numbers=none,basicstyle=\ttfamily\small,caption={Running the program in Listing~\ref{lst:queens} with \clingo},label={lst:queen:run}]{programs/queen.txt}


\subsection{Ricochet Robots}
\label{sec:robots}

In practice, ASP systems are embedded in encompassing software environments and thus need means for interaction.
Multi-shot ASP solvers can address this by allowing a reactive procedure to loop on solving while acquiring changes to the problem specification.
In this section,
we want to illustrate this by modeling the popular board game of \emph{Ricochet Robots}.
Our particular focus lies on capturing the underlying round playing through the procedural-declarative interplay
offered by \clingo.

\emph{Ricochet Robots} is a board game for multiple players designed by Alex Randolph.%
\footnote{\url{http://en.wikipedia.org/wiki/Ricochet_Robot}}
A board consists of 16$\times$16 fields arranged in a grid structure having barriers between various
neighboring fields (see Figure~\ref{fig:rr:goal13} and~\ref{fig:rr:goal4}).
Four differently colored robots roam across the board along either horizontally or vertically
accessible fields, respectively.
Each robot can thus move in four directions.
A robot cannot stop its move until it either hits a barrier or another robot.
The goal is to place a designated robot on a target location with a shortest sequence of moves.
Often this involves moving several robots to establish temporary barriers.
The game is played in rounds.
At each round, a chip with a colored symbol indicating the target location is drawn.
Then, the specific goal is to move the robot with the same color on this location.
The player who reaches the goal with the fewest number of robot moves wins the chip.
The next round is then played from the end configuration of the previous round.
At the end, the player with most chips wins the game.

\subsubsection{Encoding {Ricochet Robots}}

The following encoding%
\footnote{%
Alternative ASP encodings of the game were studied by~\citeN{gejokaobsascsc13a},
and used for comparing various ASP solving techniques.
More disparate encodings resulted from the ASP competition in 2013,
where \emph{Ricochet Robots} was included in the modeling track.
ASP encodings and instances of \emph{Ricochet Robots} are available at~\url{https://potassco.org/doc/apps/2016/09/20/ricochet-robots.html}.
There is also visualizer for the problem among the clingo examples: \url{https://github.com/potassco/clingo/tree/master/examples/clingo/robots}}
and fact format follow the ones of~\citeN{gejokaobsascsc13a}.

An authentic board configuration of \emph{Ricochet Robots} is shown in Figure~\ref{fig:rr:goal13}
and represented as facts in Listing~\ref{lst:lp:board}.
\begin{figure}[ht]
  \centering
  \includegraphics[width=0.45\textwidth]{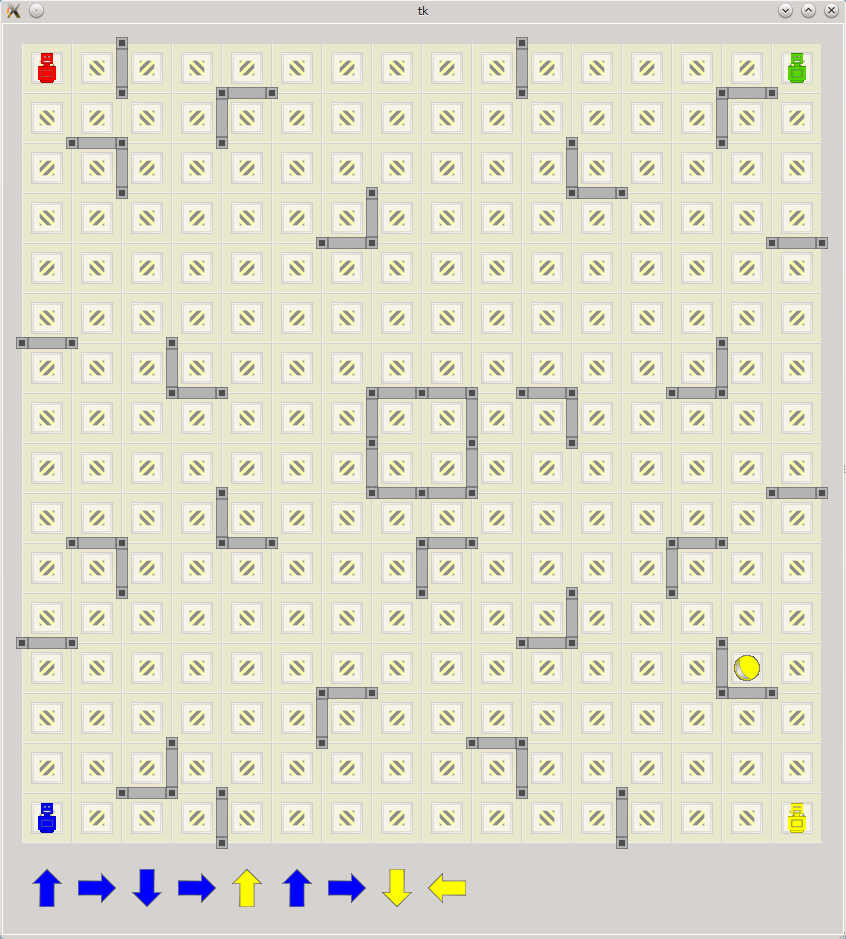}
  \hspace{0.08\textwidth}
  \includegraphics[width=0.45\textwidth]{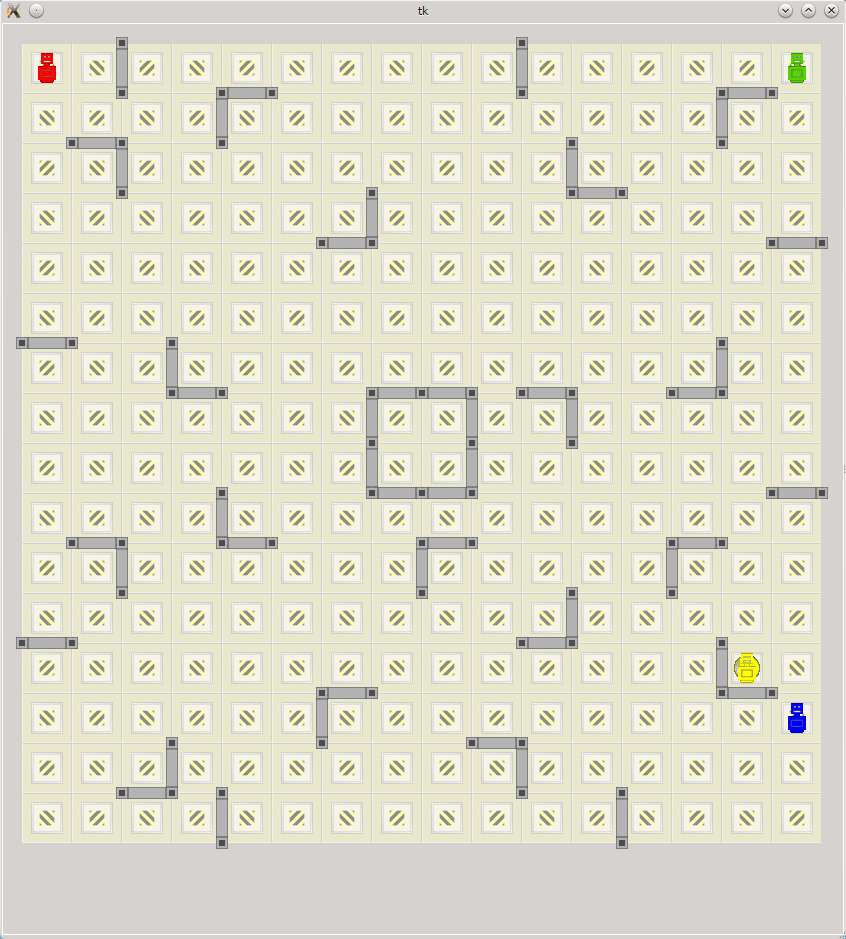}
  \caption{Visualization of solving \lstinline{goal(13)} from initially cornered robots}
  \label{fig:rr:goal13}
\end{figure}%
%
The dimension of the board is fixed to $16$ in Line~1.
As put forward by~\citeN{gejokaobsascsc13a},
barriers are indicated by atoms with predicate \lstinline{barrier}/4.
The first two arguments give the field position and the last two
the orientation of the barrier, which is mostly east (1,0) or south (0,1).%
\footnote{Symmetric barriers are handled by predicate \lstinline{stop}/4 in Line~4 and~5 of Listing~\ref{lst:lp:ricochet}.}
For instance, the atom \lstinline{barrier(2,1,1,0)} in Line~3 represents the vertical wall
between the fields (2,1) and (3,1), and \lstinline{barrier(5,1,0,1)}
stands for the horizontal wall separating (5,1) from (5,2).
\lstinputlisting[basicstyle=\ttfamily\small,caption=The Board (\texttt{board.lp}),label=lst:lp:board,language=clingo]{programs/board.lp}

Listing~\ref{lst:lp:targets} gives the sixteen possible target locations printed on the game's carton board
(cf.~Line~3 to~18).
Each robot has four possible target locations, expressed by the ternary predicate \lstinline{target}.
Such a target is put in place via the unary predicate~\lstinline{goal} that associates a number with each location.
The external declaration in Line~1 paves the way for fixing the target location from outside the solving process.
For instance, setting \lstinline{goal(13)} to true makes position \lstinline{(15,13)} a target location for the \lstinline{yellow} robot.
\lstinputlisting[basicstyle=\ttfamily\small,caption=Robots and targets (\texttt{targets.lp}),label=lst:lp:targets,language=clingo]{programs/targets.lp}
Similarly,
the initial robot positions can be set externally, as declared in Line~21.
That is, each robot can be put at 256 different locations.
On the left hand side of Figure~\ref{fig:rr:goal13},
we cornered all robots by setting
\lstinline{pos(red,1,1)}, \lstinline{pos(blue,1,16)}, \lstinline{pos(green,16,1)}, and \lstinline{pos(yellow,16,16)}
to true.

Finally, the encoding in Listing~\ref{lst:lp:ricochet} gives a non-incremental encoding with a fixed horizon,
following the one by~\citeN[Listing 2]{gejokaobsascsc13a}.
\lstinputlisting[basicstyle=\ttfamily\small,caption=Simple encoding for \emph{Ricochet Robots} (\texttt{ricochet.lp}),label=lst:lp:ricochet,language=clingo]{programs/ricochet.lp}
%
The first lines in Listing~\ref{lst:lp:ricochet} furnish domain definitions,
fixing
the sequence of time steps (\lstinline{time}/1)%
\footnote{The initial time point \lstinline{0} is handled explicitly.}
and two-dimensional representations of the four possible directions (\lstinline{dir}/2).
The constant \lstinline{horizon} is expected to be provided via \clingo\ option \lstinline{-c}
(e.g.\ `\lstinline{-c horizon=20}').
Predicate \lstinline{stop}/4 is the symmetric version of \lstinline{barrier}/4 from above
and identifies all blocked field transitions.
The initial robot positions are fixed in Line~7 (in view of external input).

At each time step, some robot is moved in a direction (cf.~Line~9).
Such a \lstinline{move} can be regarded as the composition of successive field transitions,
captured by predicate \lstinline{goto}/6 (in Line~15--17).
To this end, predicate \lstinline{halt}/5 provides
temporary barriers due to robots' positions before the \lstinline{move}.
To be more precise, a robot moving in
direction \lstinline{(DX,DY)}
must halt at field \lstinline{(X-DX,Y-DY)} when
some (other) robot is located at \lstinline{(X,Y)}, and
an instance of \lstinline{halt(DX,DY,X-DX,Y-DY,T)}
may provide information relevant to the \lstinline{move} at step \lstinline{T+1}
if there is no barrier between \lstinline{(X-DX,Y-DY)} and \lstinline{(X,Y)}.
%
Given this,
the definition of \lstinline{goto}/6 starts at a robot's position (in Line~15)
and continues in
direction \lstinline{(DX,DY)} (in Line~16--17) unless a barrier, a robot,
or the board's border is encountered.
As this definition tolerates board traversals of length zero,
\lstinline{goto}/6 is guaranteed to yield a successor position
for any \lstinline{move} of a robot~\lstinline{R} in direction
\lstinline{(DX,DY)}, so that
the rule in Line~19--20 captures the effect of \lstinline{move(R,DX,DY,T)}.
Moreover,
the frame axiom in Line~21 preserves the positions of unmoved robots,
relying on the projection \lstinline{move}/2 (cf.~Line~10).

Finally, we stipulate in Line~23 that a robot \lstinline{R} must be at its target position
\lstinline{(X,Y)} at the last time point \lstinline{horizon}.
Adding directive `\lstinline{#show move/4.}' further allows for
projecting stable models onto the extension of the \lstinline{move}/4 predicate.

The encoding in Listing~\ref{lst:lp:ricochet} allows us to decide whether a plan of
length \lstinline{horizon} exists.
For computing a shortest plan, we may augment our decision encoding with an optimization directive.
This can be accomplished by adding the part in Listing~\ref{lst:lp:optimization}.
\lstinputlisting[basicstyle=\ttfamily\small,caption=Encoding part for optimization (\texttt{optimization.lp}),label=lst:lp:optimization,firstnumber=27,language=clingo]{programs/optimization.lp}
The rule in Line~27 indicates whether some goal condition is (not) established at a time point.
Once the goal is established, the additional integrity constraint in Line~29
ensures that it remains satisfied by enforcing that
the goal-achieving move is repeated at later steps
(without altering robots' positions).
Note that the \lstinline{#minimize} directive
in Line~31 aims at few instances of \lstinline{goon}/1,
corresponding to an early establishment of the goal,
while further repetitions of the goal-achieving move are ignored.
Our extended encoding allows for computing a shortest plan of length bounded by
\lstinline{horizon}.
If there is no such plan, the problem can be posed again with an enlarged \lstinline{horizon}.
For computing a shortest plan in an unbounded fashion,
we can take advantage of incremental ASP solving, as illustrated in Section~\ref{sec:incremental}.

Apart from the two external directives that allow us to vary initial robot and target positions,
the four programs constitute an ordinary ASP formalization of a \emph{Ricochet Robots} instance.
To illustrate this,
let us override the external directives by adding facts accounting for the robot and target positions
on the left hand side of Figure~\ref{fig:rr:goal13}.
The corresponding call of \clingo\ is shown in Listing~\ref{lst:sh:one}.%
\footnote{Note that rather than using input redirection, we also could have passed the five facts via a file.}
\begin{lstlisting}[basicstyle=\ttfamily\small,caption=One-shot solving with \clingo,label=lst:sh:one]
$ clingo board.lp targets.lp ricochet.lp optimization.lp \
          -c horizon=10                                  \
          <(echo "pos(red,1,1).   pos(green,16,1).       \
                  pos(blue,1,16). pos(yellow,16,16).     \
                  goal(13).")
\end{lstlisting}
\begin{lstlisting}[basicstyle=\ttfamily\small,caption=Stable model projected onto the extension of the \lstinline{move}/4 predicate,label=lst:sm:one]
move(blue,0,-1,1)     move(blue,1,0,2)     move(blue,0,1,3)    \
move(blue,1,0,4)      move(yellow,0,-1,5)  move(blue,0,-1,6)   \
move(blue,1,0,7)      move(yellow,0,1,8)   move(yellow,-1,0,9) \
move(yellow,-1,0,10)
\end{lstlisting}
The resulting one-shot solving process yields a(n optimal) stable model containing the extension of the \lstinline{move}/4 predicate given in Listing~\ref{lst:sm:one}.
The \lstinline{move} atoms in Line~1--4 of Listing~\ref{lst:sm:one} correspond to the plan indicated by the colored arrows at the bottom of the left hand side of Figure~\ref{fig:rr:goal13}.
That is, the blue robot starts by going north, east, south, and east,
then the yellow one goes north,
the blue one resumes and goes north and east,
before finally the yellow robot goes south (bouncing off the blue one)
and lands on the target by going west.
This leads to the situation depicted on the right hand side of Figure~\ref{fig:rr:goal13}.
Note that the tenth move (in Line~4) is redundant since it merely replicates the previous one
because the goal was already reached after nine steps.

\subsubsection{Playing in rounds}

\emph{Ricochet Robots} is played in rounds.
Hence, the next goal must be reached with robots placed at the positions resulting from the previous round.
For example, when pursuing \lstinline{goal(4)} in the next round,
the robots must start from the end positions given on the right hand side of Figure~\ref{fig:rr:goal13}.
The resulting configuration is shown on the left hand side of Figure~\ref{fig:rr:goal4}.
\begin{figure}[ht]
  \centering
  \includegraphics[width=0.45\textwidth]{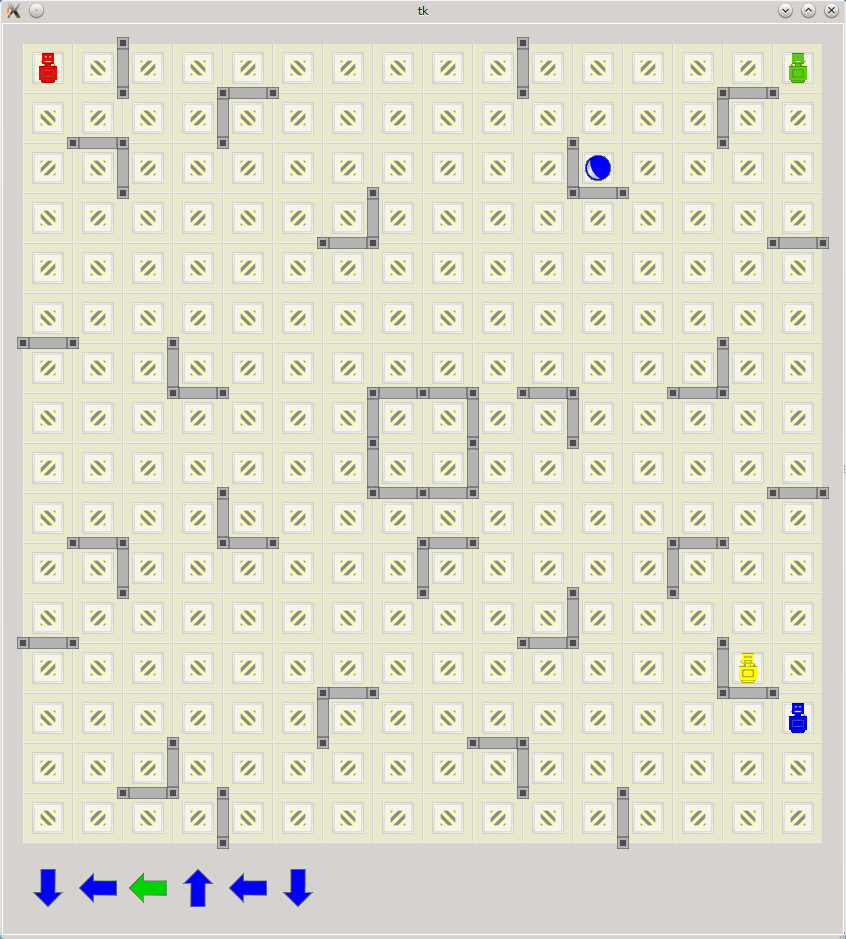}
  \hspace{0.08\textwidth}
  \includegraphics[width=0.45\textwidth]{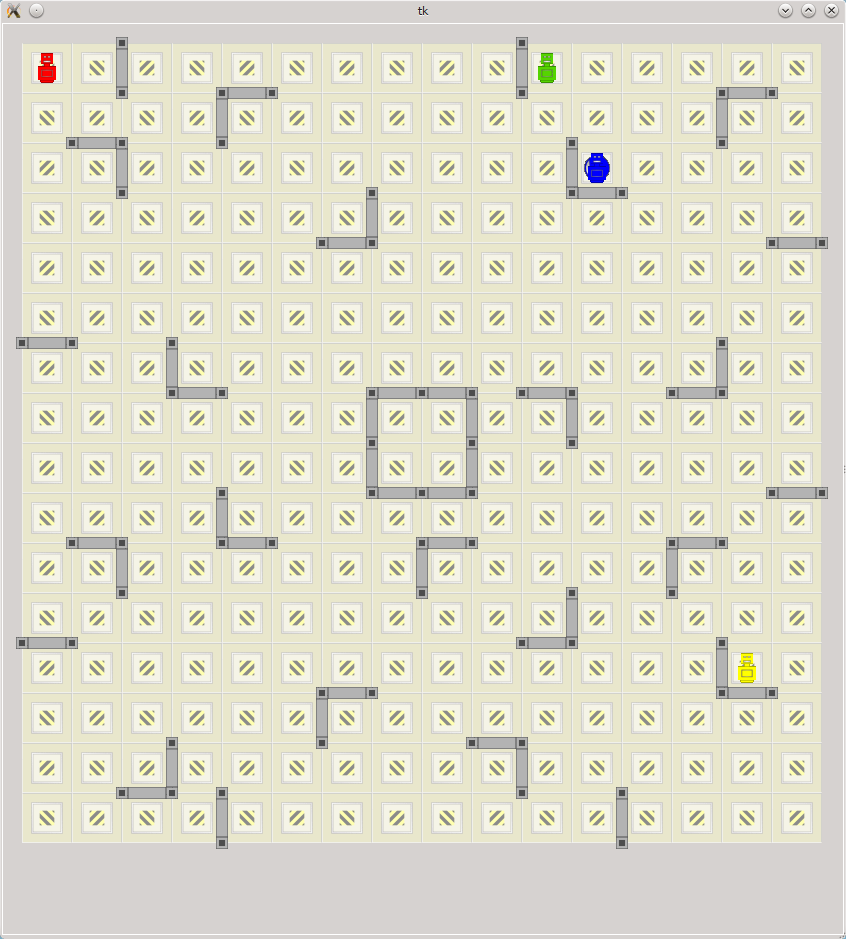}
  \caption{Visualization of solving \lstinline{goal(4)} from robot positions after having solved \lstinline{goal(13)}}
  \label{fig:rr:goal4}
\end{figure}
For one-shot solving,
we would re-launch \clingo\ from scratch as shown in Listing~\ref{lst:sh:one},
yet by accounting for the new target and robot positions by
replacing Line~3--5 of Listing~\ref{lst:sh:one} by the following ones.
\begin{lstlisting}[basicstyle=\ttfamily\small,firstnumber=3]
     <(echo "pos(red,1,1).   pos(green,16,1).    \
             pos(blue,16,10). pos(yellow,15,13). \
             goal(4)."                           )
\end{lstlisting}

Unlike this,
our multi-shot approach to playing in rounds relies upon a single%
\footnote{In general, multiple such control objects can be created and made to interact via Python.}
operational \clingo\ control object that we use in a simple loop:
\begin{enumerate}
\item Create an operational control object (containing a grounder and a solver object)
\item Load and ground the programs in Listing~\ref{lst:lp:board},~\ref{lst:lp:targets},~\ref{lst:lp:ricochet},
  and optionally~\ref{lst:lp:optimization}
  \\
  (relative to some fixed \lstinline{horizon})
  within the control object
\item While there is a goal, do the following
  \begin{enumerate}
  \item Enforce the initial robot positions
  \item Enforce the current goal
  \item Solve the logic program contained in the control object
  \end{enumerate}
\end{enumerate}
The  control loop is implemented in Python by means of \clingo's Python API.
This module provides grounding and solving functionalities.%
\footnote{An analogous module is available for Lua.}
As mentioned in Section~\ref{sec:background},
both modules support (almost) literal counterparts to `Create', `Load', `Ground', and `Solve'.
The ``enforcement'' of robot and target positions is more complex,
as it involves changing the truth values of externally controlled atoms
(mimicking the insertion and deletion of atoms, respectively).

The resulting Python program is given in Listing~\ref{lst:py:ricochet}.
\lstinputlisting[float=tp,basicstyle=\ttfamily\small,caption=The Ricochet Robot Player (\texttt{ricochet.py}),label=lst:py:ricochet,language=clingo]{programs/ricochet.py}
Line~1 imports the \lstinline{clingo} module.
We are only using three classes from the module,
which we directly pull into the global namespace
to avoid qualification with ``\lstinline{clingo.}'' and so to keep the code compact.

Line~3--34 show the \lstinline{Player} class.
This class encapsulates all state information including \clingo's \lstinline{Control} object
that in turn holds the state of the underlying grounder and solver.
In the \lstinline{Player}'s \lstinline{__init__} function (similar to a constructor in other object-oriented languages) the following member variables are initialized:
\begin{description}
  \item[{\ttfamily last\_positions}]
    This variable is initialized upon construction with the starting positions of the robots.
    During the progression of the game, this variable holds the initial starting positions of the robots for each turn.
  \item[{\ttfamily last\_solution}]
    This variable holds the last solution of a search call.
  \item[{\ttfamily undo\_external}]
    We want to successively solve a sequence of goals.
    In each step, a goal has to be reached from different starting positions.
    This variable holds a list containing the current goal and starting positions
    that have to be cleared upon the next step.
  \item[{\ttfamily horizon}]
    We are using a bounded encoding.
    This (Python) variable holds the maximum number of moves to find a solution for a given step.
  \item[{\ttfamily ctl}]
    This variable holds the actual object providing an interface to the grounder and solver.
    It holds all state information necessary for multi-shot solving
    along with heuristic information gathered during solving.
\end{description}
As shown in Line~4--13, the constructor takes the \lstinline{horizon}, initial robot \lstinline{positions}, and the \lstinline{files} containing the various logic programs.
\clingo's \lstinline{Control} object is created in Line~9--10 by passing the option \lstinline{-c} to replace the logic program constant \lstinline{horizon}
by the value of the Python variable \lstinline{horizon} during grounding.
Finally, the constructor loads all \lstinline{files} and grounds the entire logic program in Line~11--13.
Recall from Section~\ref{sec:background}  that all rules outside the scope of \lstinline{#program} directives belong to the \lstinline{base} program.
Note also that this is the only time grounding happens because the encoding is bounded.
All following solving steps are configured exclusively via manipulating external atoms.

The \lstinline{solve} method in Line~15--24 starts with
initializing the search for the solution to the new \lstinline{goal}.
To this end,
it first undos in Line~16--17 the previous goal and starting positions stored in \lstinline{undo_external}
by assigning \lstinline{False} to the respective atoms.
In the following lines~19 to~21, the next step is initialized by
assigning \lstinline{True} to the current \lstinline{goal} along with the last robot positions;
these are also stored in \lstinline{undo_external} so that they can be taken back afterwards.
Finally, the  \lstinline{solve} method calls \clingo's \lstinline{ctl.solve} to initiate the search.
The result is captured in variable \lstinline{last_solution}.
Note that the call to \lstinline{ctl.solve} takes \lstinline{ctl.on_model} as (keyword) argument,
which is called whenever a model is found.
In other words, \lstinline{on_model} acts as a callback for intercepting models.
Finally, variable \lstinline{last_solution} is returned at the end of the method.

The last function of the \lstinline{Player} class is the \lstinline{on_model} callback.
As mentioned, it intercepts the (final) \lstinline{model}s computed by the solver,
which can then be inspected via the functions of the \lstinline{Model} class.
At first, it stores the shown atoms in variable \lstinline{last_solution} in Line~27.%
\footnote{In view of `\lstinline{#show move/4.}' in Listing~\ref{lst:lp:ricochet}, this only involves instances of \lstinline{move/4}, while all true atoms are included via the argument \lstinline{Model.ATOMS} in Line~29.}
The remainder of the \lstinline{on_model} callback extracts the final robot positions from the stable model.
For that, it loops in Line~29--34 over the full set of atoms in the \lstinline{model} and checks whether their signatures match.
That is,
if an atom is formed from predicate \lstinline{pos/4} and its fourth argument equals the \lstinline{horizon},
then it is appended to the list of \lstinline{last_positions} after stripping its time step from its arguments.

As an example,
consider \lstinline{pos(yellow,15,13,20)},
say
the final position of the yellow robot on the right hand side of Figure~\ref{fig:rr:goal13} at an \lstinline{horizon} of 20.
This leads to the addition of \lstinline{pos(yellow,15,13)} to the \lstinline{last_positions}.
Note that \lstinline{pos(yellow,15,13)} is declared an external atom in Line~21 of Listing~\ref{lst:lp:targets}.
For playing the next round, we can thus make it \lstinline{True} in Line~20 of Listing~\ref{lst:py:ricochet}.
And when solving, the rule in Line~7 of Listing~\ref{lst:lp:ricochet} allows us to derive \lstinline{pos(yellow,15,13,0)}
and makes it the new starting position of the yellow robot,
as shown on the left hand side of Figure~\ref{fig:rr:goal4}.

Line~36--44 show the code for configuring   the player.
They set the search \lstinline{horizon},
the \lstinline{encodings} to solve with, and
the initial \lstinline{positions} in form of \lstinline{clingo} terms.
Furthermore, we fix a \lstinline{sequence} of goals in Line~42--44.
In a more realistic setting,
either some user interaction or
a random sequence might be generated to emulate arbitrary draws.

\begin{lstlisting}[basicstyle=\ttfamily\small,caption=Multi-shot solving with \clingo's Python API,label=lst:sh:multi,escapechar=?]
$ python ricochet.py
[move(red,0,1,1), move(red,1,0,2), move(red,0,1,3), ...]
[move(blue,0,-1,1), move(blue,1,0,2), move(blue,0,1,3), ...]
[move(green,0,1,1), move(green,1,0,2), move(green,1,0,3), ...]
\end{lstlisting}
Finally, Line~46--48 implement the search for sequences of moves that solve the configuration given above.
For each \lstinline{goal} in the \lstinline{sequence}, a solution is plainly printed, as engaged in Line~48.
The three lists in Listing~\ref{lst:sh:multi} represent solutions to the three goals in Line~42--44.
The \clingo\ library does not foresee any output, which must thus be handled by the scripting language.
Note also that the first list represents an alternative solution to the one given in Listing~\ref{lst:sm:one}.


\subsection{Optimization}
\label{sec:optimization}

Another innovative feature of \clingo{} is its incremental optimization.
This allows for adapting objective functions along the evolution of a program at hand.
A simple example is the search for shortest plans when increasing the horizon in non-consecutive steps.
To see this,
recall that literals in minimize statements (and analogously weak constraints) are supplied with a sequence of terms of the form $w@p,\vec{t}$,
where $w$ and $p$ are integers providing a weight and a priority level and $\vec{t}$ is a sequence of terms (cf.~\cite{aspcore2}).
As an example, consider the subprogram:%
\footnote{The same applies to a weak constraint of form `\texttt{:\texttildelow~}\lstinline{move(X,Y,W,P,t). [W@P,X,Y,t]}'.}
\begin{lstlisting}[numbers=none,language=clingo]
#program cumulativeObjective(t).
#minimize{ W@P,X,Y,t : move(X,Y,W,P,t) }.
\end{lstlisting}
When grounding and solving \lstinline{cumulativeObjective(t)} for successive values of \lstinline{t},
the solver's objective function (per priority level \lstinline{P}) is 
gradually extended with new atoms over \lstinline{move/5}, 
and all previous ones are kept.

For enabling the removal of literals from objective functions,
we can use externals:
\begin{lstlisting}[numbers=none,language=clingo]
#program volatileObjective(t).
#external activateObjective(t).
#minimize{ W@P,X,Y,t : move(X,Y,W,P,t), activateObjective(t) }.
\end{lstlisting}  
The subprogram \lstinline{volatileObjective(t)} behaves like \lstinline{cumulativeObjective(t)} as long as the external atom \lstinline{activateObjective(t)} is true.
Once it is set to false, all atoms over \lstinline{move/5} with the
corresponding term for~\lstinline{t} are dismissed from objective functions.


\section{Application program interfaces}
\label{sec:api}

This section provides some further selected functionalities of \clingo's APIs;
detailed descriptions can be found at~\url{potassco.org}.
Currently, \clingo{} provides APIs in \C, \cpp, \lua, and \python,
all sharing the same functionality.
A tutorial on using the \python{} API for multi-shot and theory solving was given by~\citeN{kascwa17a}.

The theory reasoning capabilities of \clingo{} are described by~\citeN{gekakaosscwa16a}.
In brief,
\clingo{} provides generic means for incorporating theory reasoning.
They span from theory grammars for seamlessly extending its input language with theory expressions to 
a simple interface for integrating theory propagators into its solver component.
Multi-shot solving for selected theories is described by~\citeN{bakaossc16a} and~\citeN{jakaosscscwa17a}.

The central role in multi-shot solving is played by control objects capturing the system states of grounders and solvers,
as introduced in Section~\ref{sec:semantics:operational}.
While the control object created by invoking \clingo{} from the command line is passed as argument to the \lstinline{main} routine,
further such objects can be created with the constructor \lstinline{Control}.
Examples for both settings can be found in Line~8 of Listing~\ref{fig:iclingo:python} and Line~9 and~10 of Listing~\ref{lst:py:ricochet}, respectively.
In general,
this allows multiple independent \clingo{} objects to coexist and communicate with each other.

\clingo's interface can be structured into three parts.

\paragraph{Parsing.}
A simple but very useful feature of \clingo's API is that it allows us to
leverage the parser of its grounding component \gringo{} to obtain an abstract syntax tree (AST)
of the non-ground program.
More precisely,
the interface allows for both obtaining an AST of a full-fledged logic program and adding such a program in form of an AST.
This provides an easy way of applying program transformations on the non-ground level without any burden of parsing input programs in their full generality.
This feature is exploited by the systems \asprin~\cite{brderosc15b} for expressing preferences and \anthem\footnote{\url{https://github.com/potassco/anthem}} for formula extraction.

\paragraph{Grounding.}
A basic functionality of \clingo{} objects is to incrementally augment non-ground programs by loading programs from file or adding them in string form.
An example of the former can be found in Line~12 of Listing~\ref{lst:py:ricochet};
the latter is accomplished by the counterpart of \textit{add}, defined in Section~\ref{sec:semantics:operational}.
Notably, this functionality allows for adding dynamically generated programs.

More fine-grained control is provided by the low level part of the interface.
This allows, for instance, for inspecting the result of grounding and adding ground rules in the intermediate ASP format \aspif{} (cf.~\cite{kascwa17a}).
In this way, one can iterate over all ground atoms and inspect them individually, 
or implement eager constraint translations by adding new (theory) atoms on demand.
Also, symbols can be injected during grounding via external functions,
similar to value invention~\cite{cacoia07a}.

\paragraph{Solving.}
The principal \lstinline{solve} method can be shaped in various ways.
For instance, we have seen in Listing~\ref{lst:py:ricochet} how \lstinline{on_model} can act as a callback for intercepting models.
More precisely,
for each stable model found during a call to \lstinline{solve(on_model=f)},
a model object is passed to function \lstinline{f},
whose implementation can then access and inspect the model.
An example consists of the addition of constraints whenever a model is found,%
\footnote{A more sophisticated way is to use theory propagators adding constraints not just when a model is found but also during the solver's propagation; 
  see~\cite{gekakaosscwa16a} for details.}
as in optimization tasks, or final tests on model candidates.
Similarly,
\lstinline{solve} can by supplied with assumptions, as detailed in~\eqref{solve:models}.
For instance,
the call \lstinline{solve(assumptions=[(Function("a"), True)])} only admits stable models containing atom \lstinline{a}.
Moreover,
\clingo{} provides an asynchronous interface, which is particularly useful in reactive settings.
Here, solving is done in the background and interruptible at any time.
For example, this allows to accommodate scenarios where agents have to stay responsive even though solving has not yet finished.
Finally, it is worth mentioning that dedicated parts of the API allow for configurating search and extracting solver statistics.
In combination with the aforementioned \lstinline{on_model} callback this allows for re-configuring search in view of the statistics gathered during
the search for the last model.


\section{Experiments}
\label{sec:experiments}

The computational advantage of multi-shot solving lies in its avoidance of redundancies otherwise caused by relaunching grounder and solver programs.
Since the substantial savings on grounding intense benchmarks have already been demonstrated~\cite{gekakaosscth08a},
we focus our empirical analysis on the impact of our approach on solving.
In particular, we want to investigate in how far multi-shot solving can
benefit from the learning capacities of modern ASP solvers and the reuse of already gathered heuristic scores.
To this end,
we empirically evaluate the impact of \clingo's multi-shot solving capacities
on three planning benchmarks:%
\footnote{The benchmarks are available at \url{http://www.cs.uni-potsdam.de/wv/clingo/benchmark-2017-05-26.tar.xz}}
Towers of Hanoi (cf.\ Section~\ref{sec:incremental}),
Ricochet Robots (cf.\ Section~\ref{sec:robots}), and
ASP encodings of PDDL problems~\cite{digelurosc17a}.%
\footnote{PDDL stands for Planning Domain Definition Language and should indicate that our benchmarks were obtained by translating benchmarks
  originally specified in PDDL and used by the planning community.}
For either benchmark,
we let \clingo{} version~4.5.4 search for a shortest plan
by incrementally extending the horizon until the first plan is found.
In particular,
we consider multi-shot solving by means of \clingo's built-in incremental mode
(invoked by `\lstinline{#include <incmode>.}')
in four different settings:
\begin{itemize}
\item \emph{multi}: keeping recorded nogoods as well as heuristic values between solver calls
\item \emph{multi -heuristic}: keeping recorded nogoods, but not heuristic values, between solver calls
\item \emph{multi -nogoods}: keeping heuristic values, but not recorded nogoods, between solver calls
\item \emph{multi -heuristic -nogoods}: keeping neither recorded nogoods nor heuristic values between solver calls
\end{itemize}
We contrast these four settings to the traditional single-shot approach,
denoted by \emph{single},
where \clingo{} performs grounding and solving from scratch for each planning horizon.
The experiments were run sequentially on a Linux machine equipped with 
Xeon E5520 2.27GHz processors,
limiting wall-clock time to 3000 seconds per run without imposing any (effective) memory limit.
\clingo{} was run in all experiments in its default configuration
except for the option \lstinline{--forget-on-step} that allows for
configuring the four settings above.

We portray our results in terms of cactus plots, in which
the x-axes list instances ordered by time (and conflicts) and
the y-axes reflect times and conflicts, respectively.
The magnitude of the latter are given on top of the y-axis.

\subsection{Towers of Hanoi}
\label{sec:experiments:hanoi}
\begin{figure}[t]
\begin{tikzpicture}
  \begin{axis}[
    width=\textwidth,
    height=200pt,
    axis x line=middle,
    axis y line=center,
    grid=major,
    scaled y ticks=base 10:-3,
    legend pos=north west,
    legend cell align=left,
    xlabel=Instances ordered by time,
    ylabel=Time in seconds,
    x label style={at={(axis description cs:0.5,-0.11)},anchor=north},
    y label style={at={(axis description cs:-0.07,.5)},rotate=90,anchor=south},
    ]
    \definecolor{current1}{RGB}{255,0,0}
    \addplot [color=current1] table [header=true,y=multi,x=number,col sep=comma] {plots/toh-time.csv};
    \definecolor{current2}{RGB}{0,255,0}
    \addplot [color=current2] table [header=true,y=multi -heuristic,x=number,col sep=comma] {plots/toh-time.csv};
    \definecolor{current3}{RGB}{0,0,255}
    \addplot [color=current3] table [header=true,y=multi -nogoods,x=number,col sep=comma] {plots/toh-time.csv};
    \definecolor{current4}{RGB}{0,255,255}
    \addplot [color=current4] table [header=true,y=multi -heuristic -nogoods,x=number,col sep=comma] {plots/toh-time.csv};
    \definecolor{current5}{RGB}{0,0,0}
    \addplot [color=current5] table [header=true,y=single,x=number,col sep=comma] {plots/toh-time.csv};
  \legend{multi,multi -heuristic,multi -nogoods,multi -heuristic -nogoods,single}
  \end{axis}
\end{tikzpicture}
\begin{tikzpicture}
  \begin{axis}[
    width=\textwidth,
    height=200pt,
    axis x line=middle,
    axis y line=center,
    grid=major,    
legend style={yshift=-1.7cm},
    legend pos=north west,
    legend cell align=left,
    xlabel=Instances ordered 
           by time and conflicts,
    ylabel=Number of conflicts,
    x label style={at={(axis description cs:0.5,-0.11)},anchor=north},
    y label style={at={(axis description cs:-0.07,.5)},rotate=90,anchor=south},
    ]
    \definecolor{current1}{RGB}{255,0,0}
    \addplot [color=current1] table [header=true,y=multi,x=number,col sep=comma] {plots/toh-conflicts.csv};
    \definecolor{current2}{RGB}{0,255,0}
    \addplot [color=current2] table [header=true,y=multi -heuristic,x=number,col sep=comma] {plots/toh-conflicts.csv};
    \definecolor{current3}{RGB}{0,0,255}
    \addplot [color=current3] table [header=true,y=multi -nogoods,x=number,col sep=comma] {plots/toh-conflicts.csv};
    \definecolor{current4}{RGB}{0,255,255}
    \addplot [color=current4] table [header=true,y=multi -heuristic -nogoods,x=number,col sep=comma] {plots/toh-conflicts.csv};
    \definecolor{current5}{RGB}{0,0,0}
    \addplot [color=current5] table [header=true,y=single,x=number,col sep=comma] {plots/toh-conflicts.csv};
  \legend{multi,multi -heuristic,multi -nogoods,multi -heuristic -nogoods,single}
  \end{axis}
\end{tikzpicture}
\caption{Cactus plots for Towers of Hanoi benchmark\label{fig:experiments:hanoi}}
\end{figure}
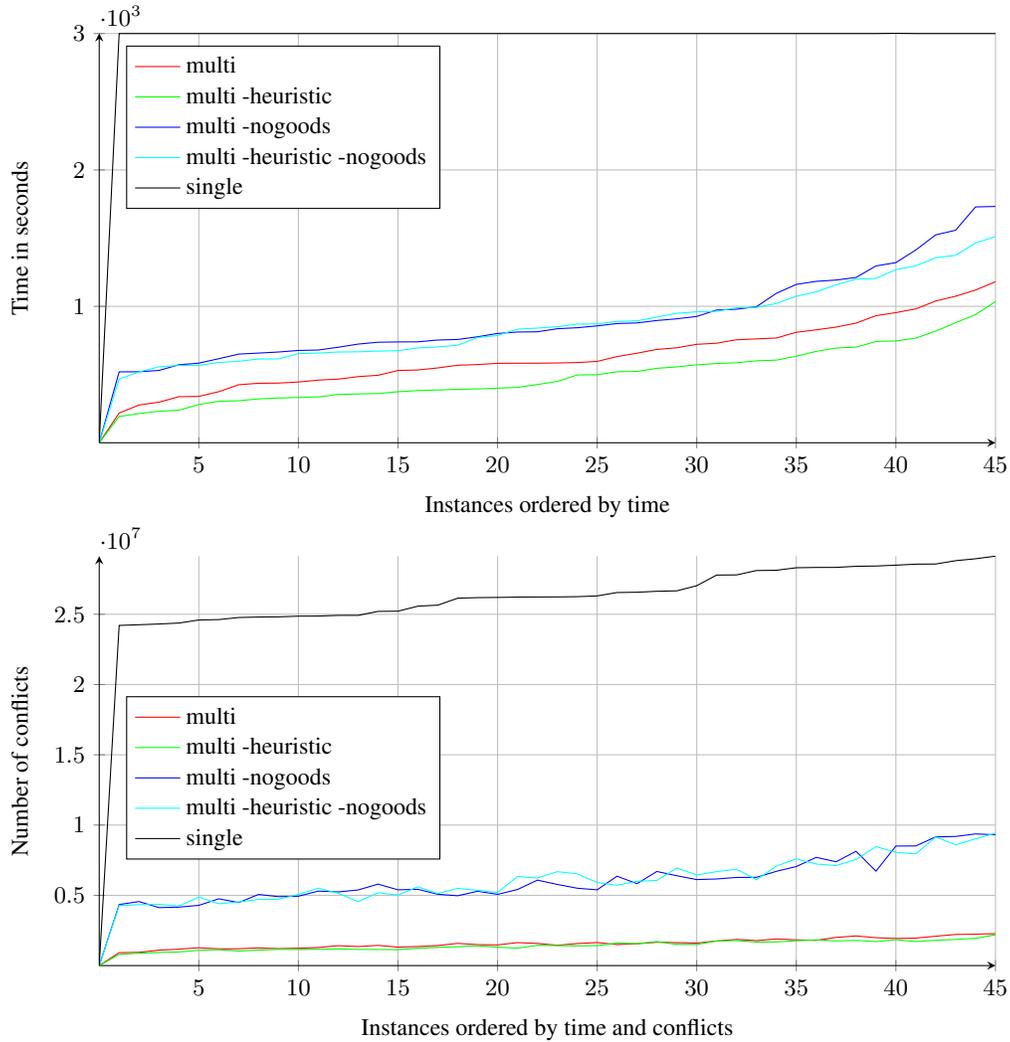

The upper plot in Figure~\ref{fig:experiments:hanoi} displays runtimes
for each \clingo{} setting in increasing order
over 45 instances of the Towers of Hanoi benchmark.
Most apparently, we observe that single-shot solving performed in the \emph{single} setting
fails to complete any of the instances within the allotted 3000 seconds.
This clearly shows that the redundancy of relaunching grounding and solving processes from scratch
for each horizon incurs non-negligible overhead here.

Somewhat unexpectedly, the multi-shot solving approaches in which recorded nogoods are discarded
between successive solver calls, viz.\ \emph{multi -nogoods} and \emph{multi -heuristic -nogoods},
perform much better than single-shot solving and complete all 45 instances within the
time limit.
On the one hand, this advantage is owed to incremental grounding, adding only new rule instances
when switching from one horizon to the next.
On the other hand, we verified that solving steps take the major share of runtime with each setting,
and the numbers of conflicts plotted in the lower part of Figure~\ref{fig:experiments:hanoi}
exhibit substantial search reductions in multi-shot solving, even when neither recorded nogoods
nor heuristic values are kept.
In fact,
at the implementation level \clingo{} asserts unary nogoods and
stores binary as well as ternary nogoods persistently in dedicated data structures.
Hence, such short nogoods remain available in  multi-shot solving regardless of settings and
explain the gap to iterated single-shot solving, where respective information
has to be repeatedly retrieved from search conflicts at each solving step.

Keeping also long nogoods between solver calls,
as done in the \emph{multi} and \emph{multi -heuristic} settings,
further reduces search conflicts and runtime
by a factor of about~5 or~1.5, respectively.
This indicates a trade-off between memory demands and search savings due to
recorded nogoods, where the savings outweigh the overhead on the Towers of Hanoi benchmark.
Unlike that, keeping heuristic values does not pay off here, and the two
\emph{-heuristic} settings save some fraction of runtime in comparison to their counterparts
passing such values between solver calls.
Although respective gaps are modest, we conclude that biasing search to proceed as in
previous solving steps does not provide shortcuts for problems with an extended planning horizon.

\subsection{Ricochet Robots}
\label{sec:experiments:robots}
\begin{figure}[t]
\begin{tikzpicture}
  \begin{axis}[
    width=\textwidth,
    height=200pt,
    axis x line=middle,
    axis y line=center,
    grid=major,
    scaled y ticks=base 10:-3,
    legend pos=north west,
    legend cell align=left,
    xlabel=Instances ordered by time,
    ylabel=Time in seconds,
    x label style={at={(axis description cs:0.5,-0.11)},anchor=north},
    y label style={at={(axis description cs:-0.07,.5)},rotate=90,anchor=south},
    ]
    \definecolor{current1}{RGB}{255,0,0}
    \addplot [color=current1] table [header=true,y=multi,x=number,col sep=comma] {plots/ricochet-time.csv};
    \definecolor{current2}{RGB}{0,255,0}
    \addplot [color=current2] table [header=true,y=multi -heuristic,x=number,col sep=comma] {plots/ricochet-time.csv};
    \definecolor{current3}{RGB}{0,0,255}
    \addplot [color=current3] table [header=true,y=multi -nogoods,x=number,col sep=comma] {plots/ricochet-time.csv};
    \definecolor{current4}{RGB}{0,255,255}
    \addplot [color=current4] table [header=true,y=multi -heuristic -nogoods,x=number,col sep=comma] {plots/ricochet-time.csv};
    \definecolor{current5}{RGB}{0,0,0}
    \addplot [color=current5] table [header=true,y=single,x=number,col sep=comma] {plots/ricochet-time.csv};
  \legend{multi,multi -heuristic,multi -nogoods,multi -heuristic -nogoods,single}
  \end{axis}
\end{tikzpicture}
\begin{tikzpicture}
  \begin{axis}[
    width=\textwidth,
    height=200pt,
    axis x line=middle,
    axis y line=center,
    grid=major,
    legend pos=north west,
    legend cell align=left,
    xlabel=Instances ordered 
           by time and conflicts,
    ylabel=Number of conflicts,
    x label style={at={(axis description cs:0.5,-0.11)},anchor=north},
    y label style={at={(axis description cs:-0.07,.5)},rotate=90,anchor=south},
    ]
    \definecolor{current1}{RGB}{255,0,0}
    \addplot [color=current1] table [header=true,y=multi,x=number,col sep=comma] {plots/ricochet-conflicts.csv};
    \definecolor{current2}{RGB}{0,255,0}
    \addplot [color=current2] table [header=true,y=multi -heuristic,x=number,col sep=comma] {plots/ricochet-conflicts.csv};
    \definecolor{current3}{RGB}{0,0,255}
    \addplot [color=current3] table [header=true,y=multi -nogoods,x=number,col sep=comma] {plots/ricochet-conflicts.csv};
    \definecolor{current4}{RGB}{0,255,255}
    \addplot [color=current4] table [header=true,y=multi -heuristic -nogoods,x=number,col sep=comma] {plots/ricochet-conflicts.csv};
    \definecolor{current5}{RGB}{0,0,0}
    \addplot [color=current5] table [header=true,y=single,x=number,col sep=comma] {plots/ricochet-conflicts.csv};
  \legend{multi,multi -heuristic,multi -nogoods,multi -heuristic -nogoods,single}
  \end{axis}
\end{tikzpicture}
\caption{Cactus plots for Ricochet Robots benchmark\label{fig:experiments:robots}}
\end{figure}
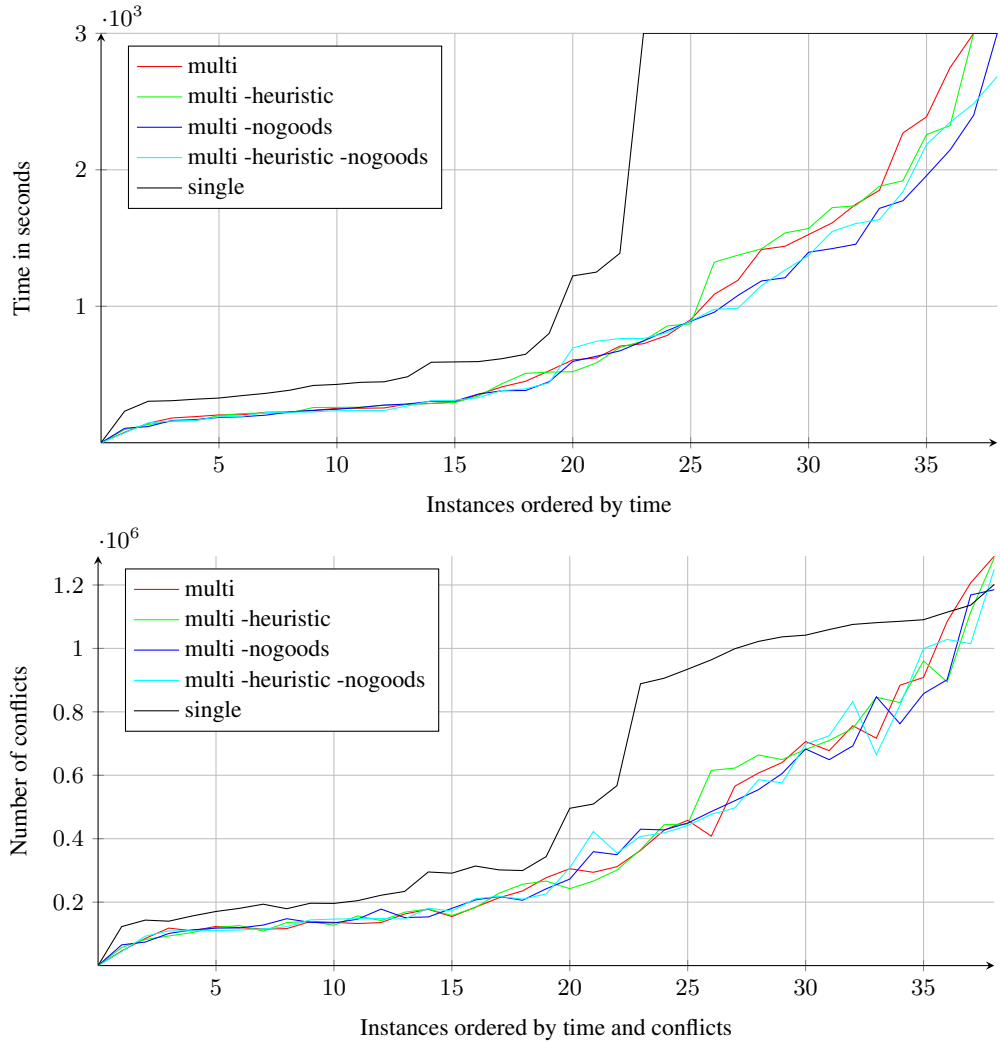

The behavior of single- and multi-shot solving approaches on 38 instances of the Ricochet Robots benchmark,
plotted in Figure~\ref{fig:experiments:robots}, parallels the previous observations.
While the \emph{single} setting is able to complete 22 of the instances within the given time,
the most successful multi-shot solving approach, \emph{multi -heuristic -nogoods}, solves all of them.
In contrast to Towers of Hanoi above,
keeping long nogoods does not pay off here,
as the lower plot in Figure~\ref{fig:experiments:robots} shows that they do not
significantly reduce the search conflicts at solving steps with an extended horizon.
Hence, the two \emph{-nogoods} settings are ahead in terms of runtime as well as solved instances
displayed in the upper part of Figure~\ref{fig:experiments:robots}.
Moreover, we see that keeping or discarding heuristic values, the latter denoted by \emph{-heuristic},
does not make much difference,
which again exposes that biasing the search process in view of previous solving steps does not necessarily
help for problem extensions.

\subsection{PDDL Problems}
\label{sec:experiments:pddl}
\begin{figure}[t]
\begin{tikzpicture}
  \begin{axis}[
    width=\textwidth,
    height=200pt,
    axis x line=middle,
    axis y line=center,
    grid=major,
    scaled y ticks=base 10:-3,
    legend pos=north west,
    legend cell align=left,
    xlabel=Instances ordered by time,
    ylabel=Time in seconds,
    x label style={at={(axis description cs:0.5,-0.11)},anchor=north},
    y label style={at={(axis description cs:-0.07,.5)},rotate=90,anchor=south},
    ]
    \definecolor{current1}{RGB}{255,0,0}
    \addplot [color=current1] table [header=true,y=multi,x=number,col sep=comma] {plots/pddl-time.csv};
    \definecolor{current2}{RGB}{0,255,0}
    \addplot [color=current2] table [header=true,y=multi -heuristic,x=number,col sep=comma] {plots/pddl-time.csv};
    \definecolor{current3}{RGB}{0,0,255}
    \addplot [color=current3] table [header=true,y=multi -nogoods,x=number,col sep=comma] {plots/pddl-time.csv};
    \definecolor{current4}{RGB}{0,255,255}
    \addplot [color=current4] table [header=true,y=multi -heuristic -nogoods,x=number,col sep=comma] {plots/pddl-time.csv};
    \definecolor{current5}{RGB}{0,0,0}
    \addplot [color=current5] table [header=true,y=single,x=number,col sep=comma] {plots/pddl-time.csv};
  \legend{multi,multi -heuristic,multi -nogoods,multi -heuristic -nogoods,single}
  \end{axis}
\end{tikzpicture}
\begin{tikzpicture}
  \begin{axis}[
    width=\textwidth,
    height=200pt,
    axis x line=middle,
    axis y line=center,
    grid=major,
    legend pos=north west,
    legend cell align=left,
    xlabel=Instances ordered 
           by time and conflicts,
    ylabel=Number of conflicts,
    x label style={at={(axis description cs:0.5,-0.11)},anchor=north},
    y label style={at={(axis description cs:-0.07,.5)},rotate=90,anchor=south},
    ]
    \definecolor{current1}{RGB}{255,0,0}
    \addplot [color=current1] table [header=true,y=multi,x=number,col sep=comma] {plots/pddl-conflicts.csv};
    \definecolor{current2}{RGB}{0,255,0}
    \addplot [color=current2] table [header=true,y=multi -heuristic,x=number,col sep=comma] {plots/pddl-conflicts.csv};
    \definecolor{current3}{RGB}{0,0,255}
    \addplot [color=current3] table [header=true,y=multi -nogoods,x=number,col sep=comma] {plots/pddl-conflicts.csv};
    \definecolor{current4}{RGB}{0,255,255}
    \addplot [color=current4] table [header=true,y=multi -heuristic -nogoods,x=number,col sep=comma] {plots/pddl-conflicts.csv};
    \definecolor{current5}{RGB}{0,0,0}
    \addplot [color=current5] table [header=true,y=single,x=number,col sep=comma] {plots/pddl-conflicts.csv};
  \legend{multi,multi -heuristic,multi -nogoods,multi -heuristic -nogoods,single}
  \end{axis}
\end{tikzpicture}
\caption{Cactus plots for PDDL benchmark\label{fig:experiments:pddl}}
\end{figure}
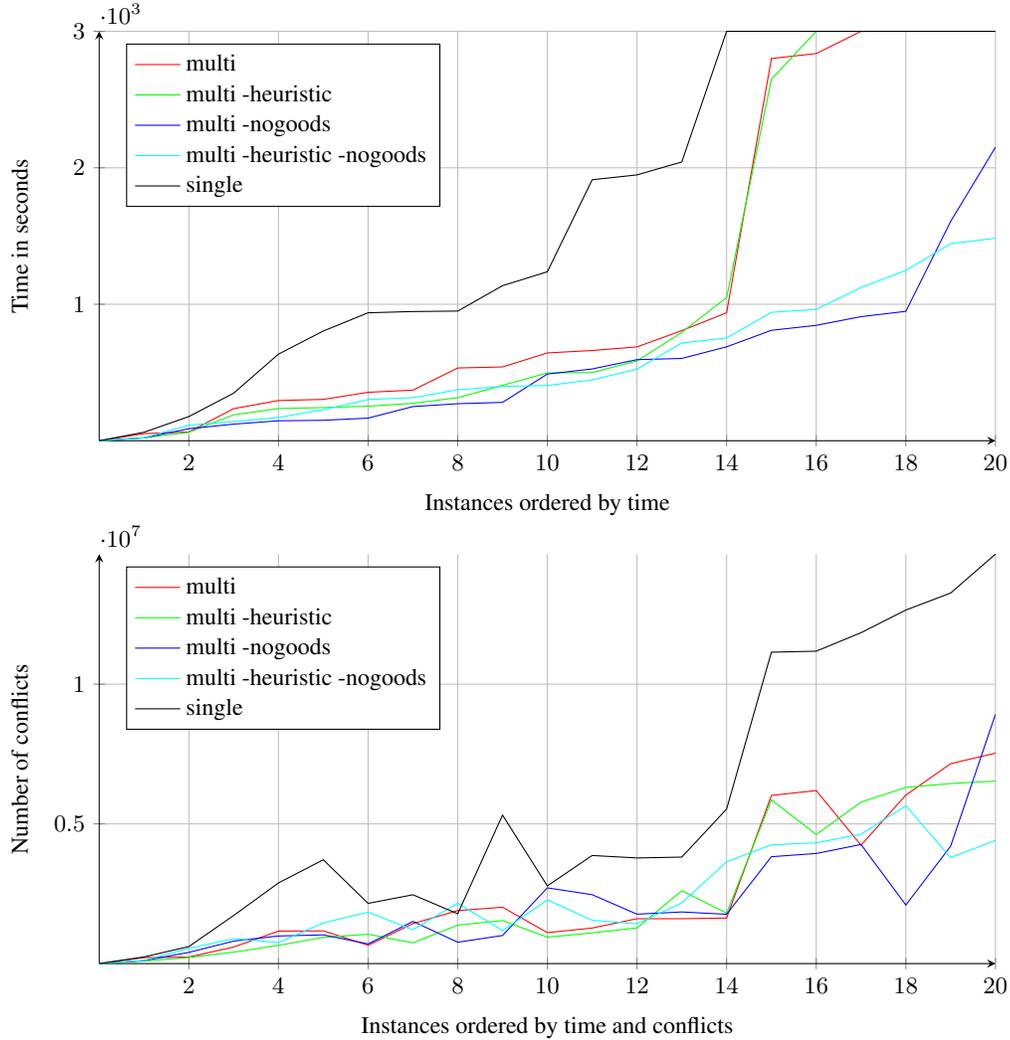

The performance results displayed in Figure~\ref{fig:experiments:pddl},
summarizing 20 instances obtained by translating planning problems from PDDL,
also exhibit a significant gap separating single-shot from multi-shot solving
in its four settings.
The behavior of the latter differs primarily w.r.t.\ the treatment of long nogoods,
where only the two \emph{-nogoods} settings that discard them between solver calls complete
all instances in time.
In fact,
comparing runtimes in the upper and numbers of conflicts in the lower part of Figure~\ref{fig:experiments:pddl},
it turns out that keeping such long nogoods incurs overhead without bringing about (consistent) search savings
in return.
The reduced numbers of conflicts relative to single-shot solving nevertheless indicate
a substantial added value of keeping some nogoods, in particular, short ones,
between solver calls.

\subsection{Observations}

On all benchmarks the number of conflicts is lower in the \emph{multi} settings compared to the \emph{single} one.
This is due to the reuse of nogoods learnt from previous solving steps.
In fact, the treatment of nogoods had the greatest influence on the runtime in the \emph{multi} settings.
As regards long nogoods, the sometimes unproportional overhead of keeping them,
observed on the Ricochet Robots benchmark and PDDL problems,
suggests to strive in the future for adaptive filtering mechanisms (beyond regular nogood deletion)
assessing the relevance of recorded nogoods from previous solving steps.
Unlike that, \clingo{} settings that differ in the treatment of heuristic values only,
i.e., pairs denoted with or without \emph{-heuristic},
exhibit comparable behavior on all three investigated benchmark problems,
thus indicating that recorded nogoods play a more important role upon successive solver calls.

Our benchmarks were not very memory demanding.
The peak memory consumption over all benchmarks was 187 MB when multi-shot solving
and only 150 MB in the single-shot setting.
Even though it might seem that the single-shot approach has a smaller memory footprint,
the multi-shot settings probably just held larger databases of learnt clauses
leading to a slightly higher memory consumption.
With the selected benchmarks, the memory used by the grounding component is negligible.


\section{Related work}
\label{sec:related:work}

Although 
\clingo~\cite{gekakosc11a} already featured \lua{} as an embedded scripting language up to series~3, 
its usage was limited to (deterministic) computations during grounding;
neither were library functions furnished by \clingo~3.

Of particular interest is
\dlvhex~\cite{eifikrre12a}, an ASP system aiming at the integration of external computation sources.
For this purpose, \dlvhex{} relies on higher-order logic programs using external higher-order atoms for software interoperability.
Such external atoms should not be confused with \clingo's \lstinline{#external} directive because they are evaluated via
procedural means during solving.
Given this,
\dlvhex{} can be seen as an \emph{ASP modulo Theory} solver, similar to SAT modulo Theory solvers~\cite{niolti06a}.
In fact, \dlvhex{} is build upon \clingo{} and follows the design of the \emph{ASP modulo CSP} solver \clingcon~\cite{ostsch12a}
in communicating with external ``oracles'' through \clasp's post propagation mechanism.
In this way,
theory solvers are tightly integrated into the ASP system and
have access to the solver's partial assignments.
Unlike this, multi-shot solving only provides access to total (stable) assignments.
This is why \clingo{} also offers full-fledged theory reasoning capabilities, dealing with partial assignments~\cite{gekakaosscwa16a,kascwa17a}.
Clearly, the above considerations also apply to extensions of \dlvhex, such as \acthex~\cite{figeiaresc13a}.
Furthermore,
\jdlv~\cite{felegrri12a} encapsulates the \dlv{} system to facilitate
one-shot ASP solving in \java{} environments by providing means to
generate and process logic programs,
and to afterwards extract their stable models.
\embasp~\cite{fugezaancape16a} provides a more recent and more general environment for embedding ASP systems,
including \clingo{} and \dlv, into external systems.    
Meanwhile the ASP solver \wasp~\cite{aldoleri15a} also features a foreign language API, 
yet restricted to solving functionalities.
More precisely,
it provides low-level functionalities to customize heuristics and propagation~\cite{dorisc16a,dogalemurisc16a}.

The procedural attachment to the \idp{} system~\cite{powide11a,debobrde14a}
builds on interfaces to \cpp{} and \lua.
Like \clingo, it allows for 
evaluating functions during grounding,
calling the grounder and solver multiple times,
inspecting solutions, and
reacting to external input after search.
The emphasis, however, lies 
on high-level control blending in with \idp's modeling language,
while
\clingo{} offers more fine-grained control over the grounding and solving process,
particularly aiming at a flexible incremental assembly of programs from subprograms.

In SAT, incremental solver interfaces from low-level APIs are common practice.
Pioneering work was done in \minisat~\cite{eensor03a}, 
furnishing a \cpp{} interface for solving under assumptions.
In fact, the \clasp{} library underlying \clingo{} builds upon this functionality to implement incremental search 
(see~\cite{gekakaosscth08a}).
Given that SAT deals with propositional formulas only,
solvers and their APIs lack support for modeling languages and grounding.
Unlike this,
the SAT modulo Theory solver \zzz~\cite{dembjo08a} comes with a \python{} API that,
similar to \clingo,
provides a library for controlling the solver as well as 
language bindings for constraint handling.
In this way, \python{} can be used as a modeling language for \zzz.


\section{Conclusion}\label{sec:discussion}

The \clingo{} system complements ASP's declarative input language by control capacities expressed either by embedded scripting languages
or by importing \clingo{} modules into imperative programs.
This is accomplished within a single integrated ASP grounding and solving process in which a logic program may evolve over time.
The addition, deletion, and replacement of programs is controlled procedurally by means of \clingo's API.
Applications that cannot be captured with the standard one-shot approach of ASP but that require evolving logic programs are manifold.
Examples include
unrolling a transition function as in planning,
interacting with an environment as in assisted living, robotics, or stream reasoning,
interacting with a user exploring a domain,
theory solving,
and advanced forms of search.
Addressing these demands by providing a high-level API yields a generic and transparent approach.
Unlike this, previous systems, like \iclingo\ and \oclingo, had a dedicated purpose involving rigid control procedures buried in monolithic programs.
Rather than that,
the basic technology of \clingo{} allows us to instantiate subprograms in-between solver invocations in a fully customizable way.
On the declarative side,
the availability of program parameters and the embedding of \lstinline{#external} directives
into the grounding process provide us with a great flexibility in modeling schematic subprograms.
In addition, the possibility of assigning input atoms facilitates
the implementation of applications such as query answering~\cite{geobsc13a} or sliding window reasoning~\cite{gegrkaobsasc12b}, 
as truth values can now be switched without modifying logic programs.

The semantic underpinnings of our framework in terms of module theory
capture the dynamic combination of logic programs in a generic way.
Although this eases the modular composition of data structures,
other choices are possible at the cost of higher maintenance.
Note that the difficulty of composing subprograms in ASP is due to its nonmonotonic nature;
this is much easier in monotonic approach such as SAT.
Finally,
it is interesting future work to investigate how dedicated change operations
that were so far only of theoretic interest, like
updating~\cite{alpeprpr02a},
forgetting~\cite{zhafoo06a},
revising~\cite{desctowo08a}, or
merging~\cite{desctowo09a} etc.,
can be put into practice within this framework.
A first attempt at capturing the update of logic programs was made by~\citeN{sablei17a}.

The input language of \clingo{} extends the \textit{ASP-Core-2} standard~\cite{aspcore2}
and has meanwhile been put in its entirety on solid semantic foundations~\cite{gehakalisc15a}.
Although we have presented \clingo{} for normal logic programs,
we mention that it accepts (extended) disjunctive logic programs
processed via the multi-threaded solving approach of \clasp~\cite{gekasc13a}.
Since \clingo{} embeds \clasp{} series~3, it moreover features domain-specific heuristics~\cite{gekaotroscwa13a} and
optimization using unsatisfiable cores~\cite{ankamasc12a}.
\clingo{} is freely available at~\url{potassco.org},
and its releases include many best practice examples
illustrating the aforementioned application scenarios.

Since its first release and accompanying publication~\cite{gekakasc14b},
\clingo's multi-shot solving framework has been used for implementing several ASP-based reasoning systems,
such as
\asprin~\cite{brderosc15a,brderosc15b},
\aspic~\cite{geobsc13a},
\rosoclingo~\cite{anrasasc15a},
and
\dflat~\cite{abblchduhewo14a};
various forms of aggregates were implemented with it by \citeN{alfage15a} and \citeN{alvleo15a}.
As well, \dlvhex~\cite{eifikrre12a} builds upon \clingo{} and its versatile API.
This already hints at the potential impact of \clingo's multi-shot ASP solving framework,
and we believe that it constitutes a step towards putting more and more applications into the reach of ASP.

Multi-shot ASP solving broadens the spectrum of applications of ASP.
This also brings about the new user profile of \emph{ASP engineering}
that combines ASP modeling with traditional programming for controlling an ASP solving process.
This may lead to generic advanced forms of ASP solving such as
the incremental approach in Section~\ref{sec:incremental} or be
restricted to customized settings as with the Ricochet Robots game in~Section~\ref{sec:robots}.
Multi-shot solving enables users to engineer such novel declarative systems on top of ASP.
We believe that this new engineering facet is crucial to putting ASP into practice.


\paragraph{Acknowledgments.}

We are grateful to Evgenii Balai, Javier Romero, and Adam Smith
for many fruitful discussions on the paper.
This work was partially funded by DFG grant SCHA 550/9.


\nocite{gekakasc14b,gekaobsc15a}
\bibliographystyle{acmtrans}

\begin{thebibliography}{}

\bibitem[\protect\citeauthoryear{Abiteboul, Hull, and Vianu}{Abiteboul
  et~al\mbox{.}}{1995}]{abhuvi95a}
{\sc Abiteboul, S.}, {\sc Hull, R.}, {\sc and} {\sc Vianu, V.} 1995.
\newblock {\em Foundations of Databases}.
\newblock Addison-Wesley.

\bibitem[\protect\citeauthoryear{Abseher, Bliem, Charwat, Dusberger, Hecher,
  and Woltran}{Abseher et~al\mbox{.}}{2014}]{abblchduhewo14a}
{\sc Abseher, M.}, {\sc Bliem, B.}, {\sc Charwat, G.}, {\sc Dusberger, F.},
  {\sc Hecher, M.}, {\sc and} {\sc Woltran, S.} 2014.
\newblock The {D-FLAT} system for dynamic programming on tree decompositions.
\newblock In {\em Proceedings of the Fourteenth European Conference on Logics
  in Artificial Intelligence (JELIA'14)}, {E.~Ferm\'{e}} {and} {J.~Leite}, Eds.
  Lecture Notes in Artificial Intelligence, vol. 8761. Springer-Verlag,
  558--572.

\bibitem[\protect\citeauthoryear{Alferes, Pereira, Przymusinska, and
  Przymusinski}{Alferes et~al\mbox{.}}{2002}]{alpeprpr02a}
{\sc Alferes, J.}, {\sc Pereira, L.}, {\sc Przymusinska, H.}, {\sc and} {\sc
  Przymusinski, T.} 2002.
\newblock {LUPS}: A language for updating logic programs.
\newblock {\em Artificial Intelligence\/}~{\em 138,\/}~1-2, 87--116.

\bibitem[\protect\citeauthoryear{Alviano, Dodaro, Leone, and Ricca}{Alviano
  et~al\mbox{.}}{2015}]{aldoleri15a}
{\sc Alviano, M.}, {\sc Dodaro, C.}, {\sc Leone, N.}, {\sc and} {\sc Ricca, F.}
  2015.
\newblock Advances in {WASP}.
\newblock See \citeN{lpnmr15}, 40--54.

\bibitem[\protect\citeauthoryear{Alviano, Faber, and Gebser}{Alviano
  et~al\mbox{.}}{2015}]{alfage15a}
{\sc Alviano, M.}, {\sc Faber, W.}, {\sc and} {\sc Gebser, M.} 2015.
\newblock Rewriting recursive aggregates in answer set programming: Back to
  monotonicity.
\newblock {\em Theory and Practice of Logic Programming\/}~{\em 15,\/}~4-5,
  559--573.
\newblock Available at \url{http://arxiv.org/abs/1507.03923}.

\bibitem[\protect\citeauthoryear{Alviano and Leone}{Alviano and
  Leone}{2015}]{alvleo15a}
{\sc Alviano, M.} {\sc and} {\sc Leone, N.} 2015.
\newblock Complexity and compilation of {GZ}-aggregates in answer set
  programming.
\newblock {\em Theory and Practice of Logic Programming\/}~{\em 15,\/}~4-5,
  574--587.

\bibitem[\protect\citeauthoryear{Andres, Kaufmann, Matheis, and Schaub}{Andres
  et~al\mbox{.}}{2012}]{ankamasc12a}
{\sc Andres, B.}, {\sc Kaufmann, B.}, {\sc Matheis, O.}, {\sc and} {\sc Schaub,
  T.} 2012.
\newblock Unsatisfiability-based optimization in clasp.
\newblock In {\em Technical Communications of the Twenty-eighth International
  Conference on Logic Programming (ICLP'12)}, {A.~Dovier} {and} {V.~{Santos
  Costa}}, Eds. Vol.~17. Leibniz International Proceedings in Informatics
  (LIPIcs), 212--221.

\bibitem[\protect\citeauthoryear{Andres, Rajaratnam, Sabuncu, and
  Schaub}{Andres et~al\mbox{.}}{2015}]{anrasasc15a}
{\sc Andres, B.}, {\sc Rajaratnam, D.}, {\sc Sabuncu, O.}, {\sc and} {\sc
  Schaub, T.} 2015.
\newblock Integrating {ASP} into {ROS} for reasoning in robots.
\newblock See \citeN{lpnmr15}, 69--82.

\bibitem[\protect\citeauthoryear{Balduccini and Janhunen}{Balduccini and
  Janhunen}{2017}]{lpnmr17}
{\sc Balduccini, M.} {\sc and} {\sc Janhunen, T.}, Eds. 2017.
\newblock {\em Proceedings of the Fourteenth International Conference on Logic
  Programming and Nonmonotonic Reasoning (LPNMR'17)}. Lecture Notes in
  Artificial Intelligence, vol. 10377. Springer-Verlag.

\bibitem[\protect\citeauthoryear{Banbara, Kaufmann, Ostrowski, and
  Schaub}{Banbara et~al\mbox{.}}{2017}]{bakaossc16a}
{\sc Banbara, M.}, {\sc Kaufmann, B.}, {\sc Ostrowski, M.}, {\sc and} {\sc
  Schaub, T.} 2017.
\newblock Clingcon: The next generation.
\newblock {\em Theory and Practice of Logic Programming\/}~{\em 17,\/}~4,
  408--461.

\bibitem[\protect\citeauthoryear{Baral}{Baral}{2003}]{baral02a}
{\sc Baral, C.} 2003.
\newblock {\em Knowledge Representation, Reasoning and Declarative Problem
  Solving}.
\newblock Cambridge University Press.

\bibitem[\protect\citeauthoryear{Brewka, Delgrande, Romero, and Schaub}{Brewka
  et~al\mbox{.}}{2015a}]{brderosc15a}
{\sc Brewka, G.}, {\sc Delgrande, J.}, {\sc Romero, J.}, {\sc and} {\sc Schaub,
  T.} 2015a.
\newblock asprin: Customizing answer set preferences without a headache.
\newblock In {\em Proceedings of the Twenty-Ninth National Conference on
  Artificial Intelligence (AAAI'15)}, {B.~Bonet} {and} {S.~Koenig}, Eds. AAAI
  Press, 1467--1474.

\bibitem[\protect\citeauthoryear{Brewka, Delgrande, Romero, and Schaub}{Brewka
  et~al\mbox{.}}{2015b}]{brderosc15b}
{\sc Brewka, G.}, {\sc Delgrande, J.}, {\sc Romero, J.}, {\sc and} {\sc Schaub,
  T.} 2015b.
\newblock Implementing preferences with asprin.
\newblock See \citeN{lpnmr15}, 158--172.

\bibitem[\protect\citeauthoryear{Brewka, Eiter, and McIlraith}{Brewka
  et~al\mbox{.}}{2012}]{kr12}
{\sc Brewka, G.}, {\sc Eiter, T.}, {\sc and} {\sc McIlraith, S.}, Eds. 2012.
\newblock {\em Proceedings of the Thirteenth International Conference on
  Principles of Knowledge Representation and Reasoning (KR'12)}. AAAI Press.

\bibitem[\protect\citeauthoryear{Cabalar and Son}{Cabalar and
  Son}{2013}]{lpnmr13}
{\sc Cabalar, P.} {\sc and} {\sc Son, T.}, Eds. 2013.
\newblock {\em Proceedings of the Twelfth International Conference on Logic
  Programming and Nonmonotonic Reasoning (LPNMR'13)}. Lecture Notes in
  Artificial Intelligence, vol. 8148. Springer-Verlag.

\bibitem[\protect\citeauthoryear{Calimeri, Cozza, and Ianni}{Calimeri
  et~al\mbox{.}}{2007}]{cacoia07a}
{\sc Calimeri, F.}, {\sc Cozza, S.}, {\sc and} {\sc Ianni, G.} 2007.
\newblock External sources of knowledge and value invention in logic
  programming.
\newblock {\em Annals of Mathematics and Artificial Intelligence\/}~{\em
  50,\/}~3-4, 333--361.

\bibitem[\protect\citeauthoryear{Calimeri, Faber, Gebser, Ianni, Kaminski,
  Krennwallner, Leone, Ricca, and Schaub}{Calimeri
  et~al\mbox{.}}{2012}]{aspcore2}
{\sc Calimeri, F.}, {\sc Faber, W.}, {\sc Gebser, M.}, {\sc Ianni, G.}, {\sc
  Kaminski, R.}, {\sc Krennwallner, T.}, {\sc Leone, N.}, {\sc Ricca, F.}, {\sc
  and} {\sc Schaub, T.} 2012.
\newblock {ASP-Core-2}: Input language format.
\newblock Available at
  \url{https://www.mat.unical.it/aspcomp2013/ASPStandardization}.

\bibitem[\protect\citeauthoryear{Calimeri, Ianni, and
  Truszczy{\'n}ski}{Calimeri et~al\mbox{.}}{2015}]{lpnmr15}
{\sc Calimeri, F.}, {\sc Ianni, G.}, {\sc and} {\sc Truszczy{\'n}ski, M.}, Eds.
  2015.
\newblock {\em Proceedings of the Thirteenth International Conference on Logic
  Programming and Nonmonotonic Reasoning (LPNMR'15)}. Lecture Notes in
  Artificial Intelligence, vol. 9345. Springer-Verlag.

\bibitem[\protect\citeauthoryear{{De Cat}, Bogaerts, Bruynooghe, and
  Denecker}{{De Cat} et~al\mbox{.}}{2014}]{debobrde14a}
{\sc {De Cat}, B.}, {\sc Bogaerts, B.}, {\sc Bruynooghe, M.}, {\sc and} {\sc
  Denecker, M.} 2014.
\newblock Predicate logic as a modelling language: The {IDP} system.
\newblock {\em CoRR\/}~{\em abs/1401.6312}.

\bibitem[\protect\citeauthoryear{{de Moura} and Bj{\o}rner}{{de Moura} and
  Bj{\o}rner}{2008}]{dembjo08a}
{\sc {de Moura}, L.} {\sc and} {\sc Bj{\o}rner, N.} 2008.
\newblock {Z3}: An efficient {SMT} solver.
\newblock In {\em Proceedings of the Fourteenth International Conference on
  Tools and Algorithms for the Construction and Analysis of Systems
  (TACAS'08)}, {C.~Ramakrishnan} {and} {J.~Rehof}, Eds. Lecture Notes in
  Computer Science, vol. 4963. Springer-Verlag, 337--340.

\bibitem[\protect\citeauthoryear{{De Pooter}, Wittocx, and Denecker}{{De
  Pooter} et~al\mbox{.}}{2013}]{powide11a}
{\sc {De Pooter}, S.}, {\sc Wittocx, J.}, {\sc and} {\sc Denecker, M.} 2013.
\newblock A prototype of a knowledge-based programming environment.
\newblock In {\em Proceedings of the Nineteenth International Conference on
  Applications of Declarative Programming and Knowledge Management (INAP'11)
  and the Twenty-fifth Workshop on Logic Programming (WLP'11)}, {H.~Tompits},
  {S.~Abreu}, {J.~Oetsch}, {J.~P{\"u}hrer}, {D.~Seipel}, {M.~Umeda}, {and}
  {A.~Wolf}, Eds. Lecture Notes in Computer Science, vol. 7773.
  Springer-Verlag, 279--286.

\bibitem[\protect\citeauthoryear{Delgrande and Faber}{Delgrande and
  Faber}{2011}]{lpnmr11}
{\sc Delgrande, J.} {\sc and} {\sc Faber, W.}, Eds. 2011.
\newblock {\em Proceedings of the Eleventh International Conference on Logic
  Programming and Nonmonotonic Reasoning (LPNMR'11)}. Lecture Notes in
  Artificial Intelligence, vol. 6645. Springer-Verlag.

\bibitem[\protect\citeauthoryear{Delgrande, Schaub, Tompits, and
  Woltran}{Delgrande et~al\mbox{.}}{2008}]{desctowo08a}
{\sc Delgrande, J.}, {\sc Schaub, T.}, {\sc Tompits, H.}, {\sc and} {\sc
  Woltran, S.} 2008.
\newblock Belief revision of logic programs under answer set semantics.
\newblock In {\em Proceedings of the Eleventh International Conference on
  Principles of Knowledge Representation and Reasoning (KR'08)}, {G.~Brewka}
  {and} {J.~Lang}, Eds. AAAI Press, 411--421.

\bibitem[\protect\citeauthoryear{Delgrande, Schaub, Tompits, and
  Woltran}{Delgrande et~al\mbox{.}}{2009}]{desctowo09a}
{\sc Delgrande, J.}, {\sc Schaub, T.}, {\sc Tompits, H.}, {\sc and} {\sc
  Woltran, S.} 2009.
\newblock Merging logic programs under answer set semantics.
\newblock In {\em Proceedings of the Twenty-fifth International Conference on
  Logic Programming (ICLP'09)}, {P.~Hill} {and} {D.~Warren}, Eds. Lecture Notes
  in Computer Science, vol. 5649. Springer-Verlag, 160--174.

\bibitem[\protect\citeauthoryear{Dimopoulos, Gebser, Lühne, Romero, and
  Schaub}{Dimopoulos et~al\mbox{.}}{2017}]{digelurosc17a}
{\sc Dimopoulos, Y.}, {\sc Gebser, M.}, {\sc Lühne, P.}, {\sc Romero, J.},
  {\sc and} {\sc Schaub, T.} 2017.
\newblock plasp 3: Towards effective {ASP} planning.
\newblock See \citeN{lpnmr17}, 286--300.

\bibitem[\protect\citeauthoryear{Dodaro, Gasteiger, Leone, Musitsch, Ricca, and
  Schekotihin}{Dodaro et~al\mbox{.}}{2016}]{dogalemurisc16a}
{\sc Dodaro, C.}, {\sc Gasteiger, P.}, {\sc Leone, N.}, {\sc Musitsch, B.},
  {\sc Ricca, F.}, {\sc and} {\sc Schekotihin, K.} 2016.
\newblock Driving {CDCL} search.
\newblock {\em CoRR\/}~{\em abs/1611.05190}.

\bibitem[\protect\citeauthoryear{Dodaro, Ricca, and Schüller}{Dodaro
  et~al\mbox{.}}{2016}]{dorisc16a}
{\sc Dodaro, C.}, {\sc Ricca, F.}, {\sc and} {\sc Schüller, P.} 2016.
\newblock External propagators in wasp: Preliminary report.
\newblock In {\em Proceedings of the Twenty-third International Workshop on
  Experimental Evaluation of Algorithms for Solving Problems with Combinatorial
  Explosion (RCRA'16)}. Vol. 1745. CEUR Workshop Proceedings, 1--9.

\bibitem[\protect\citeauthoryear{E{\'e}n and Sörensson}{E{\'e}n and
  Sörensson}{2003}]{eensor03b}
{\sc E{\'e}n, N.} {\sc and} {\sc Sörensson, N.} 2003.
\newblock Temporal induction by incremental {SAT} solving.
\newblock {\em Electronic Notes in Theoretical Computer Science\/}~{\em
  89,\/}~4.

\bibitem[\protect\citeauthoryear{E{\'e}n and Sörensson}{E{\'e}n and
  Sörensson}{2004}]{eensor03a}
{\sc E{\'e}n, N.} {\sc and} {\sc Sörensson, N.} 2004.
\newblock An extensible {SAT}-solver.
\newblock In {\em Proceedings of the Sixth International Conference on Theory
  and Applications of Satisfiability Testing (SAT'03)}, {E.~Giunchiglia} {and}
  {A.~Tacchella}, Eds. Lecture Notes in Computer Science, vol. 2919.
  Springer-Verlag, 502--518.

\bibitem[\protect\citeauthoryear{Eiter, Fink, Krennwallner, and Redl}{Eiter
  et~al\mbox{.}}{2012}]{eifikrre12a}
{\sc Eiter, T.}, {\sc Fink, M.}, {\sc Krennwallner, T.}, {\sc and} {\sc Redl,
  C.} 2012.
\newblock Conflict-driven {ASP} solving with external sources.
\newblock {\em Theory and Practice of Logic Programming\/}~{\em 12,\/}~4-5,
  659--679.

\bibitem[\protect\citeauthoryear{Febbraro, Leone, Grasso, and Ricca}{Febbraro
  et~al\mbox{.}}{2012}]{felegrri12a}
{\sc Febbraro, O.}, {\sc Leone, N.}, {\sc Grasso, G.}, {\sc and} {\sc Ricca,
  F.} 2012.
\newblock {JASP}: A framework for integrating answer set programming with
  {J}ava.
\newblock See \citeN{kr12}, 541--551.

\bibitem[\protect\citeauthoryear{Fink, Germano, Ianni, Redl, and
  Schüller}{Fink et~al\mbox{.}}{2013}]{figeiaresc13a}
{\sc Fink, M.}, {\sc Germano, S.}, {\sc Ianni, G.}, {\sc Redl, C.}, {\sc and}
  {\sc Schüller, P.} 2013.
\newblock {ActHEX}: Implementing {HEX} programs with action atoms.
\newblock See \citeN{lpnmr13}, 317--322.

\bibitem[\protect\citeauthoryear{Fusc{\`{a}}, Germano, Zangari, Anastasio,
  Calimeri, and Perri}{Fusc{\`{a}} et~al\mbox{.}}{2016}]{fugezaancape16a}
{\sc Fusc{\`{a}}, D.}, {\sc Germano, S.}, {\sc Zangari, J.}, {\sc Anastasio,
  M.}, {\sc Calimeri, F.}, {\sc and} {\sc Perri, S.} 2016.
\newblock A framework for easing the development of applications embedding
  answer set programming.
\newblock In {\em Proceedings of the Eighteenth International Symposium on
  Principles and Practice of Declarative Programming (PPDP'16)}, {J.~Cheney}
  {and} {G.~Vidal}, Eds. {ACM} Press, 38--49.

\bibitem[\protect\citeauthoryear{Gebser, Grote, Kaminski, Obermeier, Sabuncu,
  and Schaub}{Gebser et~al\mbox{.}}{2012}]{gegrkaobsasc12b}
{\sc Gebser, M.}, {\sc Grote, T.}, {\sc Kaminski, R.}, {\sc Obermeier, P.},
  {\sc Sabuncu, O.}, {\sc and} {\sc Schaub, T.} 2012.
\newblock Stream reasoning with answer set programming: Preliminary report.
\newblock See \citeN{kr12}, 613--617.

\bibitem[\protect\citeauthoryear{Gebser, Grote, Kaminski, and Schaub}{Gebser
  et~al\mbox{.}}{2011}]{gegrkasc11a}
{\sc Gebser, M.}, {\sc Grote, T.}, {\sc Kaminski, R.}, {\sc and} {\sc Schaub,
  T.} 2011.
\newblock Reactive answer set programming.
\newblock See \citeN{lpnmr11}, 54--66.

\bibitem[\protect\citeauthoryear{Gebser, Harrison, Kaminski, Lifschitz, and
  Schaub}{Gebser et~al\mbox{.}}{2015}]{gehakalisc15a}
{\sc Gebser, M.}, {\sc Harrison, A.}, {\sc Kaminski, R.}, {\sc Lifschitz, V.},
  {\sc and} {\sc Schaub, T.} 2015.
\newblock Abstract {G}ringo.
\newblock {\em Theory and Practice of Logic Programming\/}~{\em 15,\/}~4-5,
  449--463.
\newblock Available at \url{http://arxiv.org/abs/1507.06576}.

\bibitem[\protect\citeauthoryear{Gebser, Jost, Kaminski, Obermeier, Sabuncu,
  Schaub, and Schneider}{Gebser et~al\mbox{.}}{2013}]{gejokaobsascsc13a}
{\sc Gebser, M.}, {\sc Jost, H.}, {\sc Kaminski, R.}, {\sc Obermeier, P.}, {\sc
  Sabuncu, O.}, {\sc Schaub, T.}, {\sc and} {\sc Schneider, M.} 2013.
\newblock Ricochet robots: A transverse {ASP} benchmark.
\newblock See \citeN{lpnmr13}, 348--360.

\bibitem[\protect\citeauthoryear{Gebser, Kaminski, Kaufmann, Lindauer,
  Ostrowski, Romero, Schaub, and Thiele}{Gebser
  et~al\mbox{.}}{2015}]{PotasscoUserGuide}
{\sc Gebser, M.}, {\sc Kaminski, R.}, {\sc Kaufmann, B.}, {\sc Lindauer, M.},
  {\sc Ostrowski, M.}, {\sc Romero, J.}, {\sc Schaub, T.}, {\sc and} {\sc
  Thiele, S.} 2015.
\newblock {\em Potassco User Guide\/}, Second edition ed.
\newblock University of Potsdam.

\bibitem[\protect\citeauthoryear{Gebser, Kaminski, Kaufmann, Ostrowski, Schaub,
  and Schneider}{Gebser et~al\mbox{.}}{2011}]{gekakaosscsc11a}
{\sc Gebser, M.}, {\sc Kaminski, R.}, {\sc Kaufmann, B.}, {\sc Ostrowski, M.},
  {\sc Schaub, T.}, {\sc and} {\sc Schneider, M.} 2011.
\newblock Potassco: The {P}otsdam answer set solving collection.
\newblock {\em AI Communications\/}~{\em 24,\/}~2, 107--124.

\bibitem[\protect\citeauthoryear{Gebser, Kaminski, Kaufmann, Ostrowski, Schaub,
  and Thiele}{Gebser et~al\mbox{.}}{2008}]{gekakaosscth08a}
{\sc Gebser, M.}, {\sc Kaminski, R.}, {\sc Kaufmann, B.}, {\sc Ostrowski, M.},
  {\sc Schaub, T.}, {\sc and} {\sc Thiele, S.} 2008.
\newblock Engineering an incremental {ASP} solver.
\newblock In {\em Proceedings of the Twenty-fourth International Conference on
  Logic Programming (ICLP'08)}, {M.~{Garcia de la Banda}} {and} {E.~Pontelli},
  Eds. Lecture Notes in Computer Science, vol. 5366. Springer-Verlag, 190--205.

\bibitem[\protect\citeauthoryear{Gebser, Kaminski, Kaufmann, Ostrowski, Schaub,
  and Wanko}{Gebser et~al\mbox{.}}{2016}]{gekakaosscwa16a}
{\sc Gebser, M.}, {\sc Kaminski, R.}, {\sc Kaufmann, B.}, {\sc Ostrowski, M.},
  {\sc Schaub, T.}, {\sc and} {\sc Wanko, P.} 2016.
\newblock Theory solving made easy with clingo~5.
\newblock In {\em Technical Communications of the Thirty-second International
  Conference on Logic Programming (ICLP'16)}, {M.~Carro} {and} {A.~King}, Eds.
  Vol.~52. Open Access Series in Informatics (OASIcs), 2:1--2:15.

\bibitem[\protect\citeauthoryear{Gebser, Kaminski, Kaufmann, and Schaub}{Gebser
  et~al\mbox{.}}{2012}]{gekakasc12a}
{\sc Gebser, M.}, {\sc Kaminski, R.}, {\sc Kaufmann, B.}, {\sc and} {\sc
  Schaub, T.} 2012.
\newblock {\em Answer Set Solving in Practice}.
\newblock Synthesis Lectures on Artificial Intelligence and Machine Learning.
  Morgan and Claypool Publishers.

\bibitem[\protect\citeauthoryear{Gebser, Kaminski, Kaufmann, and Schaub}{Gebser
  et~al\mbox{.}}{2014}]{gekakasc14b}
{\sc Gebser, M.}, {\sc Kaminski, R.}, {\sc Kaufmann, B.}, {\sc and} {\sc
  Schaub, T.} 2014.
\newblock \textit{Clingo} = {ASP} + control: Preliminary report.
\newblock In {\em Technical Communications of the Thirtieth International
  Conference on Logic Programming (ICLP'14)}, {M.~Leuschel} {and}
  {T.~Schrijvers}, Eds. Theory and Practice of Logic Programming, Online
  Supplement, vol. 14(4-5).
\newblock Available at \url{http://arxiv.org/abs/1405.3694v1}.

\bibitem[\protect\citeauthoryear{Gebser, Kaminski, König, and Schaub}{Gebser
  et~al\mbox{.}}{2011}]{gekakosc11a}
{\sc Gebser, M.}, {\sc Kaminski, R.}, {\sc König, A.}, {\sc and} {\sc Schaub,
  T.} 2011.
\newblock Advances in gringo series 3.
\newblock See \citeN{lpnmr11}, 345--351.

\bibitem[\protect\citeauthoryear{Gebser, Kaminski, Obermeier, and
  Schaub}{Gebser et~al\mbox{.}}{2015}]{gekaobsc15a}
{\sc Gebser, M.}, {\sc Kaminski, R.}, {\sc Obermeier, P.}, {\sc and} {\sc
  Schaub, T.} 2015.
\newblock Ricochet robots reloaded: A case-study in multi-shot {ASP} solving.
\newblock In {\em Advances in Knowledge Representation, Logic Programming, and
  Abstract Argumentation: Essays Dedicated to {G}erhard {B}rewka on the
  Occasion of His 60th Birthday}, {T.~Eiter}, {H.~Strass},
  {M.~Truszczy{\'n}ski}, {and} {S.~Woltran}, Eds. Lecture Notes in Artificial
  Intelligence, vol. 9060. Springer-Verlag, 17--32.

\bibitem[\protect\citeauthoryear{Gebser, Kaufmann, Otero, Romero, Schaub, and
  Wanko}{Gebser et~al\mbox{.}}{2013}]{gekaotroscwa13a}
{\sc Gebser, M.}, {\sc Kaufmann, B.}, {\sc Otero, R.}, {\sc Romero, J.}, {\sc
  Schaub, T.}, {\sc and} {\sc Wanko, P.} 2013.
\newblock Domain-specific heuristics in answer set programming.
\newblock In {\em Proceedings of the Twenty-Seventh National Conference on
  Artificial Intelligence (AAAI'13)}, {M.~{desJardins}} {and} {M.~Littman},
  Eds. AAAI Press, 350--356.

\bibitem[\protect\citeauthoryear{Gebser, Kaufmann, and Schaub}{Gebser
  et~al\mbox{.}}{2013}]{gekasc13a}
{\sc Gebser, M.}, {\sc Kaufmann, B.}, {\sc and} {\sc Schaub, T.} 2013.
\newblock Advanced conflict-driven disjunctive answer set solving.
\newblock In {\em Proceedings of the Twenty-third International Joint
  Conference on Artificial Intelligence (IJCAI'13)}, {F.~Rossi}, Ed. IJCAI/AAAI
  Press, 912--918.

\bibitem[\protect\citeauthoryear{Gebser, Obermeier, and Schaub}{Gebser
  et~al\mbox{.}}{2013}]{geobsc13a}
{\sc Gebser, M.}, {\sc Obermeier, P.}, {\sc and} {\sc Schaub, T.} 2013.
\newblock A system for interactive query answering with answer set programming.
\newblock In {\em Proceedings of the Sixth Workshop on Answer Set Programming
  and Other Computing Paradigms (ASPOCP'13)}, {M.~Fink} {and} {Y.~Lierler},
  Eds. Vol. abs/1312.6143. CoRR.

\bibitem[\protect\citeauthoryear{Gelfond and Lifschitz}{Gelfond and
  Lifschitz}{1988}]{gellif88b}
{\sc Gelfond, M.} {\sc and} {\sc Lifschitz, V.} 1988.
\newblock The stable model semantics for logic programming.
\newblock In {\em Proceedings of the Fifth International Conference and
  Symposium of Logic Programming (ICLP'88)}, {R.~Kowalski} {and} {K.~Bowen},
  Eds. MIT Press, 1070--1080.

\bibitem[\protect\citeauthoryear{Janhunen, Kaminski, Ostrowski, Schaub,
  Schellhorn, and Wanko}{Janhunen et~al\mbox{.}}{2017}]{jakaosscscwa17a}
{\sc Janhunen, T.}, {\sc Kaminski, R.}, {\sc Ostrowski, M.}, {\sc Schaub, T.},
  {\sc Schellhorn, S.}, {\sc and} {\sc Wanko, P.} 2017.
\newblock Clingo goes linear constraints over reals and integers.
\newblock {\em Theory and Practice of Logic Programming\/}~{\em 17,\/}~5-6,
  872--888.

\bibitem[\protect\citeauthoryear{Kaminski, Schaub, and Wanko}{Kaminski
  et~al\mbox{.}}{2017}]{kascwa17a}
{\sc Kaminski, R.}, {\sc Schaub, T.}, {\sc and} {\sc Wanko, P.} 2017.
\newblock A tutorial on hybrid answer set solving with clingo.
\newblock In {\em Proceedings of the Thirteenth International Summer School of
  the Reasoning Web}, {G.~Ianni}, {D.~Lembo}, {L.~Bertossi}, {W.~Faber},
  {B.~Glimm}, {G.~Gottlob}, {and} {S.~Staab}, Eds. Lecture Notes in Computer
  Science, vol. 10370. Springer-Verlag, 167--203.

\bibitem[\protect\citeauthoryear{Kaufmann, Leone, Perri, and Schaub}{Kaufmann
  et~al\mbox{.}}{2016}]{kalepesc16a}
{\sc Kaufmann, B.}, {\sc Leone, N.}, {\sc Perri, S.}, {\sc and} {\sc Schaub,
  T.} 2016.
\newblock Grounding and solving in answer set programming.
\newblock {\em {AI} Magazine\/}~{\em 37,\/}~3, 25--32.

\bibitem[\protect\citeauthoryear{Leone, Pfeifer, Faber, Eiter, Gottlob, Perri,
  and Scarcello}{Leone et~al\mbox{.}}{2006}]{dlv03a}
{\sc Leone, N.}, {\sc Pfeifer, G.}, {\sc Faber, W.}, {\sc Eiter, T.}, {\sc
  Gottlob, G.}, {\sc Perri, S.}, {\sc and} {\sc Scarcello, F.} 2006.
\newblock The {DLV} system for knowledge representation and reasoning.
\newblock {\em ACM Transactions on Computational Logic\/}~{\em 7,\/}~3,
  499--562.

\bibitem[\protect\citeauthoryear{Lifschitz and Turner}{Lifschitz and
  Turner}{1994}]{liftur94a}
{\sc Lifschitz, V.} {\sc and} {\sc Turner, H.} 1994.
\newblock Splitting a logic program.
\newblock In {\em Proceedings of the Eleventh International Conference on Logic
  Programming}. MIT Press, 23--37.

\bibitem[\protect\citeauthoryear{Nieuwenhuis, Oliveras, and
  Tinelli}{Nieuwenhuis et~al\mbox{.}}{2006}]{niolti06a}
{\sc Nieuwenhuis, R.}, {\sc Oliveras, A.}, {\sc and} {\sc Tinelli, C.} 2006.
\newblock Solving {SAT} and {SAT} modulo theories: From an abstract
  {D}avis-{P}utnam-{L}ogemann-{L}oveland procedure to {DPLL}({T}).
\newblock {\em Journal of the ACM\/}~{\em 53,\/}~6, 937--977.

\bibitem[\protect\citeauthoryear{Oikarinen and Janhunen}{Oikarinen and
  Janhunen}{2006}]{oikjan06a}
{\sc Oikarinen, E.} {\sc and} {\sc Janhunen, T.} 2006.
\newblock Modular equivalence for normal logic programs.
\newblock In {\em Proceedings of the Seventeenth European Conference on
  Artificial Intelligence (ECAI'06)}, {G.~Brewka}, {S.~Coradeschi},
  {A.~Perini}, {and} {P.~Traverso}, Eds. IOS Press, 412--416.

\bibitem[\protect\citeauthoryear{Ostrowski and Schaub}{Ostrowski and
  Schaub}{2012}]{ostsch12a}
{\sc Ostrowski, M.} {\sc and} {\sc Schaub, T.} 2012.
\newblock {ASP} modulo {CSP}: The clingcon system.
\newblock {\em Theory and Practice of Logic Programming\/}~{\em 12,\/}~4-5,
  485--503.

\bibitem[\protect\citeauthoryear{Sabuncu and Leite}{Sabuncu and
  Leite}{2017}]{sablei17a}
{\sc Sabuncu, O.} {\sc and} {\sc Leite, J.} 2017.
\newblock moviola: Interpreting dynamic logic programs via multi-shot answer
  set programming.
\newblock See \citeN{lpnmr17}, 336--342.

\bibitem[\protect\citeauthoryear{Simons, Niemelä, and Soininen}{Simons
  et~al\mbox{.}}{2002}]{siniso02a}
{\sc Simons, P.}, {\sc Niemelä, I.}, {\sc and} {\sc Soininen, T.} 2002.
\newblock Extending and implementing the stable model semantics.
\newblock {\em Artificial Intelligence\/}~{\em 138,\/}~1-2, 181--234.

\bibitem[\protect\citeauthoryear{Syrjänen}{Syrjänen}{2001}]{lparseManual}
{\sc Syrjänen, T.} 2001.
\newblock Lparse 1.0 user's manual.

\bibitem[\protect\citeauthoryear{Zhang and Foo}{Zhang and
  Foo}{2006}]{zhafoo06a}
{\sc Zhang, Y.} {\sc and} {\sc Foo, N.} 2006.
\newblock Solving logic program conflict through strong and weak forgettings.
\newblock {\em Artificial Intelligence\/}~{\em 170,\/}~8-9, 739--778.

\end{thebibliography}

\end{document}